\theoremstyle{plain}
\newtheorem{theorem}{Theorem}
\newtheorem{lemma}{Lemma}
\newtheorem{definition}{Definition}
\newtheorem{corollary}{Corollary}
\newtheorem{remark}{Remark}
\author{Yuval Dagan\thanks{Massachusetts Institute of Technology, EE\&CS, \texttt{dagan@mit.edu }}
	\and
	Gil Kur\thanks{Massachusetts Institute of Technology, EE\&CS, \texttt{gilkur@mit.edu}}
	\and
	Ohad Shamir\thanks{Weizmann Institute of Science, Israel, \texttt{ohad.shamir@weizmann.ac.il}.
		This research is supported in part by European Research Council (ERC) grant 754705.}}
\date{}
\newtheorem{note}{Note}
\newcommand{\G}{\mathrm{Gr}}
\newcommand{\dist}{\mathrm{d}}
\newcommand{\mstack}[2]{\begin{pmatrix} {#1} \\ {#2}\end{pmatrix}}
\newcommand{\sepset}{\mathcal{F}}
\newcommand\abs[1]{\left\lvert#1\right\rvert}
\newcommand\br[1]{\left(#1\right)}
\newcommand{\Sp}{\mathbb{S}}
\newtheorem{claim}{Claim}
\newcommand{\reals}{\mathbb{R}}
\newcommand{\E}{\mathop{\mathbb{E}}}
\newcommand{\R}{\mathbb{R}}
\newcommand{\secref}[1]{Sec.~\ref{#1}}
\newcommand{\subsecref}[1]{Subsection~\ref{#1}}
\renewcommand{\eqref}[1]{Eq.~(\ref{#1})}
\newcommand{\lemref}[1]{Lemma~\ref{#1}}
\newcommand{\thmref}[1]{Thm.~\ref{#1}}
\newcommand{\appref}[1]{Appendix~\ref{#1}}
\providecommand{\Proj}{\mathrm{Proj}}
\providecommand{\spn}{\mathrm{span}}
\providecommand{\A}{\mathcal{A}}
\title{Space lower bounds for linear prediction in the streaming model}
\begin{document}

\maketitle

\begin{abstract}%
We show that fundamental learning tasks, such as 
finding an approximate linear separator or linear regression, require memory at least 
\emph{quadratic} in the dimension, in a natural streaming setting. This 
implies that such problems cannot be solved (at least in this setting) by 
scalable memory-efficient streaming algorithms. Our 
results build 
on a memory lower bound for a simple linear-algebraic problem 
-- finding approximate null vectors -- and utilize the estimates on the packing of the Grassmannian, the manifold of all linear subspaces of fixed dimension.
\end{abstract}

\section{Introduction}

The complexity of learning, as studied in classical learning theory, is mostly 
concerned about the number of data instances required to solve a given learning 
task (a.k.a. sample complexity). However, as data becomes increasingly abundant 
and plentiful, the bottleneck in many tasks has shifted to computational 
resources, such as running time and memory usage. In particular, our 
understanding of how memory constraints affect learning performance 
is still rather limited. 

As of today, scalable supervised learning algorithms are characterized by being 
linear in the data dimension: In other 
words, the amount of required computer memory is not much larger than what is 
required to store a single data instance (represented as a vector in 
$\reals^d$). Stochastic gradient-based methods, which are based on sequentially 
processing a single or a small mini-batch of examples, are a prominent member 
of this class. In contrast, algorithms whose memory usage is super-polynomial 
in $d$ are challenging to implement for high-dimensional data. It is thus an 
important theoretical problem to understand what are the inherent limitations 
of memory-constrained algorithms. 

In this paper, we study several fundamental linear prediction problems in a 
natural streaming setting, and prove \emph{quadratic} memory lower bounds  
using any, possibly randomized algorithms (in other words, for data in 
$d$ dimensions, one needs $\Omega(d^2)$ memory in order to solve them): 
\begin{itemize}
	\item \textbf{Linear Separators:} Given a stream of $\Omega(d)$  
	unit vectors $x_1,x_2,\ldots$, which are linearly separable 
	(that is, $\min_i x_i^\top w>\gamma$ for some unit vector $w$ and margin 
	$\gamma>0$), find a linear separator. In fact, the lower bound is shown 
	even if the margin $\gamma$ is as large as $\Theta(1/\sqrt{d})$, and even if 
	the predictor is allowed to classify a small (constant) fraction of the points incorrectly.
	\item \textbf{Linear Regression:} Given a stream of $d$ labeled examples
	$\{(A_i,b_i)\}_{i=1}^{d}$ (which can be interpreted as rows of a $d\times d$ 
	matrix $A$ 
	and entries of a vector $b$), find a point $\hat{w}$ such that $\sum_i 
	(A_i^\top\hat{w}-b_i)^2$ is smaller than some universal constant.
	It also applies for algorithms that are allowed to make a pass over the stream at a random order.
	The lower bound is shown even if there exists a solution $w^*$ ($\|w^*\|\le 1$) such that $\forall 
	i,A_i w^*=b_i$ and even if $\forall i, \|A_i\| \le 1$ and $\|b\| \le 1$.
\end{itemize}
Both problems are based on a reduction from the following simple 
linear-algebraic problem:
\begin{itemize}
	\item \textbf{Approximate Null Vectors:} Given a stream of $d-1$ vectors 
	$x_1,\ldots,x_{d-1}$ in 
	$\reals^d$, sampled 
	i.i.d. from a standard Gaussian, find a unit vector approximately 
	orthogonal to 
	all of them. Specifically, we show that quadratic memory is required to 
	find a vector $\hat{w}$ such that $\frac{1}{d}\sum_i (x_i^\top\hat{w})^2$ is less than some 
	universal constant.
\end{itemize}

All of these lower bounds hold even for randomized algorithms which \emph{succeed with probability exponentially small} in $d$.
Furthermore, they are essentially tight in terms of parameter dependencies. 
First of all, in terms of memory, all of the problems are trivially solvable 
with $\widetilde{O}(d^2)$ memory (where 
$\widetilde{O}$ hides constants and logarithmic factors), simply by 
storing all the data and solving the problem offline (and in polynomial time) 
by 
phrasing them as a convex optimization problem. Moreover, our results are also 
tight in terms of the other problem parameters:
\begin{itemize}
	\item For finding an approximate linear separator on $m$ samples, this problem can be solved in $\widetilde{O}(1/\gamma^4)$ memory, by drawing a random subsample of size $\widetilde{O}(1/\gamma^2)$, storing a random projection 
	of this sample into $\widetilde{O}(1/\gamma^2)$ dimensions, finding a linear 
	separator in that 
	space, and translating it back to the original space \citep{blum2006random}.
	Thus, a memory of $\widetilde{\Theta}(1/\gamma^4)$ is sufficient, and necessary when $m = \Omega(1/\gamma^2)$, for some hard distribution over datasets.
	
%	Thus, to show a memory lower bound of $\Omega(d^2)$, we must have a margin $\gamma$ 
%	less than $O(1/\sqrt{d})$. Moreover, getting 
%	$\sum_{i}\max\{0,1-x_i^\top w\}^2=O(d)$ (as opposed to $o(d)$) is 
%	trivial by picking $\bw=\mathbf{0}$.
	\item For the linear regression problem, we can trivially get $\sum_i 
	(A_i^\top\hat{w}-b_i)^2\leq 1$ (as opposed to some constant $\ll 1$)  by 
	picking $\hat{w}=\mathbf{0}$. 
	\item For the approximate null vector problem, it is easy to get $\frac{1}{d} \sum_i 
	(x_i^\top\hat{w})^2\approx 1$ (rather than a constant $\ll 1$) by picking $\hat{w}$ 
	uniformly at random from the unit 
	sphere.
\end{itemize}

As mentioned earlier, our results are based on the lower bound we show for the 
approximate null vector problem. We rely on the existence of a 
collection of $\exp\left(\Omega(d^2)\right)$ linear subspaces (all 
$d/2$-dimensional in $\reals^d$) which are pairwise far from each 
other, with respect to a standard distance \citep{dai2007volume}. Using 
angles between vector spaces, symmetries, and the distribution over singular 
values of random matrices, we show that any successful algorithm for the above 
tasks should not confuse between two vector spaces from that collection. To allow storing each vector space at a different memory configuration, 
approximately $\log\exp(d^2)$ memory is required.

We emphasize that our results focus on a streaming setting, where only a single pass over the examples is allowed, and refer to 
performing some task on a given set of examples. (rather than over some 
underlying distribution, in a statistical learning setting). It would be 
interesting to study whether our results can be extended to such scenarios.

\subsection*{Prior Work}\label{sec:priorwork}

As mentioned earlier, the memory complexity of learning problems has attracted 
increasing interest in recent years, and we survey some relevant results below. 
However, to the best of our knowledge, these results are different than our 
work, by either focusing on very small memory budgets (e.g. insufficient to 
store even a single example), specialized data access models (which do not, for 
instance, allow for the natural setting of examples being streamed one-by-one), 
or apply to other, fundamentally different learning problems (except the recent independent work of \citealt{sharan2019memory}, discussed below). 

In a breakthrough result, \citet{raz2016fast} proved that learning 
parities -- 
corresponding to linear regression \emph{over finite fields} -- in a 
statistical setting requires either quadratic memory or an exponential sample 
size. This was later improved and extended by several works, e.g.  
\citep{raz2017time,moshkovitz2017mixing,kol2017time,garg2017extractor,beame2017time,DBLP:conf/colt/BeameGY18, DBLP:conf/innovations/MoshkovitzM18}.
But, all 
these are specific to finite fields, rather than regression over $\reals$, where no exponential gap is known. Indeed, some of these hard problems can be solved over $\mathbb{R}$ in polynomial time and linear memory using gradient based optimization. In a recent related paper, \citet{sharan2019memory} consider the problem of performing linear regression in the statistical learning setting where a stream of examples are drawn from a distribution, and show that any algorithm that uses sub-quadratic memory exhibits a slower rate of convergence to the true solution than can be achieved without memory constraints. Their result is stronger than ours on linear regression and it was studied independently using different techniques.

Another remarkable result \citep{clarkson2009numerical} 
studies linear regression over $\reals$, but in a 
different model than ours, where individual entries of the entire dataset 
matrix arrive at an arbitrary order (rather than row-by-row), and
updates to the entries can be received (e.g. ``add $1$ to coordinate 
$(2,3)$''). 
\citet{chu1991communication} studied a model of exact computations on matrices of integer entries, where no approximation error is allowed.
Related to the problem of linear separation, but 
in a different setting than ours, \citet{guha2008tight} show that a 
streaming algorithm for finding the intersection of $n$ halfspaces in $2$ or $3$ dimensions requires $\Omega(n)$ memory. 
In \citet{Dagan2018detecting}, an $\Omega(d^2)$ memory lower bound is proven for finding correlations in $d$-dimensional distributions with optimal sample complexity. This is an unsupervised statistical learning problem quite different than the ones we study here. \citet{steinhardt2015minimax} has studied the sample complexity for memory bounded sparse linear regression.

Memory lower bounds can be reduced from communication complexity lower bounds. We list two prior works on related settings, which are incomparable to ours and cannot derive quadratic memory lower bounds in the dimension. First, \citet{kane2017communication} studied the communication complexity of classification problems, in a general setting which enables dealing with arbitrary classification problems.
Secondly, \citet{fel18learn} showed that in a distributed setting with limited communication, exponentially many samples are required to find a linear separator, if the margin is small.

%\citet{zhang2015distributed} proved an $\Omega(d^2)$ communication lower bound for deterministic algorithms approximating the generalized rank of a matrix. This shares similarities with the approximate null vector problem, however, there are key (technical) differences which make it impossible to derive our result from theirs (as far as we understand). Additionally, the randomized communication complexity of this problem is $\widetilde{O}(d)$, which allows only deriving $\widetilde{\Omega}(d)$ memory lower bounds for randomized algorithms.

We list some other linear algebraic works in streaming and communication settings over the real numbers:
\citet{DBLP:conf/soda/BalcanLW019} and \citet{zhang2015distributed} studied the problem of finding approximate matrix ranks,
\citet{DBLP:conf/icml/BravermanCKLWY18} studied Schatter $p$-norms of matrices, \citet{DBLP:conf/nips/LevinSW18} studied the problem of finding a subspace which approximates the input data, \citet{DBLP:conf/icalp/CohenNW16} studied approximate matrix product, \citet{DBLP:conf/stoc/BravermanGMNW16} studied sparse linear regression, and many other works exist. Relevant work studying related linear algebraic problems over finite fields includes \citet{li2014communication}, \citet{chu1995communication}, \citet{sun2012randomized} and many others.

%Communication complexity over finite fields \citep{li2014communication, chu1995communication, sun2012randomized}
%
%Different models of communication, not talking about quadraic lower bounds in the dimension \citep{balcan2012distributed, huang2017communication, roughgarden2016communication, steinhardt2015minimax, fel18learn, DBLP:conf/nips/ZhangDJW13, DBLP:conf/nips/BalcanEL13, DBLP:conf/nips/GargMN14, DBLP:conf/aistats/McMahanMRHA17,DBLP:conf/alt/DaumePSV12,svw16}
%
%Shay moran, uncomparable results \citep{kane2017communication}
%
%Gap hamming \citep{kozachinskiy2015some, vidick2012concentration, sherstov2012communication, yehudayoff2018anti, chakrabarti2012optimal}
%
%Main paper \citet{dai2007volume}
%Distances within the Grassmannian \citep{barg2002bounds, dhillon2008constructing, ye2016schubert, bachoc2008bounds, henkel2005sphere}
%Geometry of algorithms \citep{edelman1998geometry}
%
%Margin generalization bounds \citep{kakade2009complexity, bartlett1999generalization, bartlett1996fat, mcallester2003simplified}
%
%Perceptron related \citep{mohri2013perceptron, barak2011simple, mor, DBLP:journals/siamcomp/DekelSS08, DBLP:journals/ml/FreundS99}
%
%\citet{chu1991communication} studied a model 
%Other works study exact linear algebraic computations on integers, 
%where no margin of error is allowed (e.g. finding whether a matrix is exactly 
%invertible rather than asking whether the smallest eigenvalue is approximately 
%zero) . 
%
%Generalized rank michael Jordan \citep{zhang2015distributed}

\paragraph{Paper organization.} Section~\ref{sec:prel} contains preliminaries, Section~\ref{sec:mr} contains the main results, Section~\ref{sec:pr-sketch} contains the proof summary, Appendix~\ref{app:math-lem} contains auxiliary mathematical lemmas, and Appendix~\ref{sec:pr} contains the full proofs.

\section{Preliminaries} \label{sec:prel}

\paragraph{Notations.}
We use $C, C', C_1, c, c'$ etc. to denote absolute positive constants which do not depend on the dimension nor on the other problem parameters. When uppercase $C$ appears, the statement is correct for any sufficiently large constant, and when lowercase $c$ appears it holds for any sufficiently small positive value.

Here are some linear algebraic definitions: 
The unit sphere is denoted by $\mathbb{S}^{d-1} := \{ x \in \mathbb{R}^d \colon \|x\|_2=1 \}$. 
The \emph{Grassmannain}, denoted by $\G(k,d)$, is the set of all subspaces of $\mathbb{R}^d$ of dimension $k$.

We use the following standard notations:
The Euclidean norm $\| \cdot \|_2$ is denoted by $\| \cdot \|$.
Given $V \in \G(k,d)$ and $u \in \mathbb{R}^d$, $\Proj_V(u)$ denotes the projection of $u$ into $V$.

For convenience, 
given linearly independent vectors $v_1,\dots,v_{d-1} \in \mathbb{R}^d$, let $\ker(v_1 \cdots v_d)$ denote the unique unit vector orthogonal to $v_1 \cdots v_{d-1}$.

\paragraph{One-pass low memory algorithms.}
We assume a setting where samples $z_1, \dots, z_m \in Z$ are obtained one after the other in a streaming fashion, and an algorithm has to compute some function of them, the output lying in a domain $O$. There is not enough memory to store all samples: only $b$ binary bits are available. The memory configuration $s_i \in \{0,1\}^b$ after receiving $z_i$ is some function $f_i$ of the previous memory configuration $s_{i-1}$ and the sample $z_i$. Here is a formal definition:
\begin{definition}
	A \emph{one-pass algorithm} $\A$ with memory usage $b$ is a collection of functions, $(f_1, \dots, f_m, o)$, where $f_i \colon Z \times \{0,1\}^b \to \{0,1\}^b$ and $o \colon \{0,1\}^b \to O$. The output of $\A$ given the input $(z_1, \dots, z_m)$ is $o(s_m)$, where $s_m$ is defined by the recursive formula: $s_0 = 0$ and $s_i = f_i(z_i, s_{i-1})$, for $i \in \{1,\dots, m\}$.
\end{definition}
We also consider algorithms which use randomness: assume there exists a finite (but unbounded) collection of $N$ numbers drawn i.i.d uniformly from $[0,1]$ at the beginning of the execution. The algorithm is allowed to read these random numbers at any time, and they do not count towards the memory usage. Formally, these random numbers are now given to $f_i$ as additional inputs: $f_i  \colon Z \times \{0,1\}^b \times [0,1]^N \to \{0,1\}^b$.

\paragraph{Hard distributions and data arriving at a random order.}
To prove lower bounds, we show that there is some hard distribution \emph{over datasets} (over $Z^m$, rather than over $Z$), where any low memory algorithm fails. The samples $z_1, \dots, z_m$ are either assumed to be shuffled beforehand, arriving at a \emph{random order}, or at a \emph{fixed order}. Formally, we say that they arrive at a random order if for any $(z_1, \dots, z_m)$ and any permutation $\pi \colon \{1,\dots, m\} \to \{1,\dots, m\}$, the probability of $(z_1, \dots, z_m)$ to arrive equals the probability of $\left(z_{\pi(1)}, \dots, z_{\pi(m)}\right)$. While the main results on the approximate null vector problem and linear regression captures a random order of arrival, the impossibility results on linear separators requires them to arrive at a fixed order.

\paragraph{One sided communication protocols.}
This captures the setting where two parties receive inputs $z_1, z_2 \in Z$ (one input per party). The first party sends a short message based on its input. Then, the second party, upon receiving its input and looking on the message, decides on the output. We allow a finite unbounded collection of $N$ i.i.d random numbers, uniform in $[0,1]$.
\begin{definition}
	A \emph{communication protocol} $\A$ that communicates $b$ bits is a pair of functions, $f \colon Z \times [0,1]^N \to \{0,1\}^b$ and $o \colon Z \times \{0,1\}^b \times [0,1]^N \to O$. The output of $\A$ given the inputs $z_1, z_2$ and the randomness $R \in [0,1]^N$ equals $o(z_2, f(z_1,R), R)$.
\end{definition}

\paragraph{Reducing between communication protocols and one-pass algorithms.}
One can simulate a low memory algorithms using communication protocols: Fix a one-pass algorithm $\A$ with memory usage $b$, receiving samples $z_1,\dots,z_m$. Assume the corresponding communication setting, where the first party receives $z_1, \dots, z_{m/2}$ and the second party receives $z_{m/2+1},\dots,z_m$. There exists a communication protocol $\A'$ using $b$ bits of communication, which simulates $\A$, namely, given any input $(z_1,\dots,z_m)$, $\A'$ outputs the same as $\A$. Indeed, this protocol $\A'$ proceeds as follows: the first party starts simulating $\A$, feeding the samples $z_1,\dots,z_{m/2}$ into $\A$. Then, it sends the last memory configuration of $\A$, using $b$ bits. The second party continues simulating the algorithm on the points $z_{m/2+1},\dots,z_m$. Then, it outputs the same as $\A$.
Hence, any lower bound on the communication of $\A'$ derives a lower bound on the memory usage of $\A$.

\paragraph{Approximability and measurability.}
%In our setting, we allow algorithms to compute arbitrary functions. To avoid measurability issues, they are assumed to be measurable.
To avoid dealing with the technicalities of bit representation, we assume that the inputs are real numbers, and the algorithms are allowed to compute any \emph{measurable} function on them. However, both the upper and lower bounds apply also in the standard RAM model, where each number is rounded to logarithmically many bits. The lower bounds trivially apply, since the RAM model is weaker. The upper bounds apply as well: since we are dealing with approximate solutions and problems with large margin, rounding the numbers degrades the performance only by a negligible amount.

\paragraph{Linear separators and margin.}
Given a list of pairs $((x_i,y_i))_{i=1}^m$, where $x_i \in \mathbb{R}^d$ and $y_i \in \{-1,1\}$, we say that $w \in \mathbb{S}^{d-1}$ is a \emph{linear separator} if $w^\top x_i y_i > 0$ for all $i\in \{1,\dots,m\}$. The \emph{margin} of $w$ on this set equals $\min_{i=1}^m w^\top x_i y_i / \| x_i \|$. The \emph{margin} of the dataset is the maximal margin over $w \in \mathbb{S}^{d-1}$. A \emph{hyperplane} is any $w \in \mathbb{S}^{d-1}$ used for classification.

\section{Main Results} \label{sec:mr}

First, we discuss the approximate null vector problem, then linear separators and lastly, linear regression. %In section \yuval{ref} the results are proven for communication protocols (from the reduction between these settings, they apply on the restricted memory setting as well).

\subsection{The approximate null vector problem (ANV)} \label{subsec:ovp}

The following result shows that any one pass algorithm which receives vectors $x_1, \dots, x_{d-1}$ and outputs a vector which is approximately orthogonal to all of them, has a memory requirement of $\Omega(d^2)$. We present two variants: one, where the vectors are drawn from a standard normal distribution, and a different variant which we is use in the reductions to linear separators and linear regression.

\begin{theorem} \label{thm:ov-mem-g}
	Let $g_1, \dots, g_{d-1}$ be i.i.d vectors drawn from $\mathcal{N}(0, I_d)$. Let $\A$ be a randomized one-pass algorithm which outputs a unit vector $\hat{w}$ such that:
	\begin{equation} \label{eq:44}
		\frac{1}{d}\sum_{i=1}^{d-1} (\hat{w}^\top g_i)^2 \le c',
	\end{equation}
	with probability at least $e^{-cd}$ (the randomness is over the algorithm and over $g_1 \cdots g_{d-1}$).
	Then, the memory usage of $\A$ is $\Omega(d^2)$.
\end{theorem}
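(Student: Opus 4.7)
My plan is to reduce the memory lower bound to a one-way communication lower bound. By the streaming-to-communication reduction of Section~\ref{sec:prel}, it suffices to show that in the one-way game where Alice receives $g_1, \ldots, g_k$ with $k = \lfloor (d-1)/2 \rfloor$ and Bob receives $g_{k+1}, \ldots, g_{d-1}$, Alice must send $\Omega(d^2)$ bits. The combinatorial backbone is the Grassmannian packing of \citet{dai2007volume}: there exists $\mathcal F \subset \G(k,d)$ with $|\mathcal F| = N = e^{\Omega(d^2)}$ and pairwise chordal distance at least an absolute constant $\delta > 0$. Two preparatory reductions then simplify the analysis. First, rotational invariance of the Gaussian together with the fact that the span of $k$ i.i.d.\ $\mathcal N(0, I_d)$ vectors is Haar-distributed on $\G(k, d)$ lets me pass, via a Voronoi/covering argument, to a hard distribution in which Alice's span $V_A$ is uniform on $\mathcal F$ (with Gaussian coordinates inside $V_A$), at the cost of only an $e^{-O(d)}$ factor in the success probability that is absorbed into the $e^{-cd}$ target. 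Second, Davidson--Szarek bounds for the smallest singular value of Gaussian matrices let me promote the averaged condition $\tfrac{1}{d}\sum_i(\hat w^\top g_i)^2 \leq c'$ to the two geometric conditions $\|\Proj_{V_A}\hat w\|^2 \leq O(c')$ and $\|\Proj_{V_B}\hat w\|^2 \leq O(c')$, valid on a $1 - e^{-\Omega(d)}$ probability event.

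The heart of the argument is a uniqueness claim. For each message $m$ of the (Yao-derandomized) protocol and each Bob's input $G_B$, define
\[
S(m, G_B) := \bigl\{ V \in \mathcal F : \|\Proj_V \hat w(m, G_B)\|^2 \leq O(c') \text{ and } \|\Proj_{V_B}\hat w(m, G_B)\|^2 \leq O(c') \bigr\}.
\]
The claim is that $\Pr(|S(M, G_B)| \geq 2) \leq \tfrac12 e^{-cd}$ under the joint distribution. If distinct $V_1, V_2 \in S(m, G_B)$ both existed, the output $\hat w(m, G_B)$ would be within $O(\sqrt{c'})$ of each of $V_1^\perp$, $V_2^\perp$, and $V_B^\perp$; the chordal-distance bound forces the principal angles between $V_1$ and $V_2$ to be bounded below, so $V_1 + V_2$ has dimension $d-1$ with a well-conditioned orthogonal direction $n(V_1, V_2)$, and aligning the Gaussian $V_B$ with this single line is an $e^{-\Omega(d)}$ Beta-tail event. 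Granted this uniqueness claim, double-counting with $V_A \sim \mathrm{Unif}(\mathcal F)$ gives $\Pr(V_A \in S(M, G_B),\ |S(M, G_B)| = 1) \leq 2^b/N$, and therefore $\Pr(\mathrm{success}) \leq 2^b/N + \tfrac12 e^{-cd}$; combined with the hypothesis $\Pr(\mathrm{success}) \geq e^{-cd}$, this forces $2^b \geq \tfrac12 N e^{-cd} = e^{\Omega(d^2)}$, i.e., $b = \Omega(d^2)$.

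The main obstacle is upgrading the per-pair tail $e^{-\Omega(d)}$ into the required $\tfrac12 e^{-cd}$ bound on $\Pr(|S(M, G_B)| \geq 2)$ without incurring a $|\mathcal F|^2 = e^{\Omega(d^2)}$ union-bound loss over pairs in the packing. The resolution I have in mind is that for each fixed $m$, the output $\hat w(m, G_B)$ is a \emph{single} deterministic function of $G_B$, so the event $|S(m, G_B)| \geq 2$ really concerns a single geometric alignment between $V_B$ and the output direction produced by the algorithm --- not a quantifier over packing pairs. Combining the rotational invariance of the Gaussian $V_B$ relative to this output with the Davidson--Szarek singular-value estimates, the alignment probability is $e^{-\Omega(d)}$ uniformly in the algorithm's choice, giving the required bound after averaging over $M$. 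The remaining work --- tracking the constants through the singular-value and packing estimates, and verifying the Gaussian-to-packing reduction quantitatively --- is notationally heavy but routine, and is deferred to Appendix~\ref{sec:pr}.
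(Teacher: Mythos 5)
Your reduction chain (streaming $\to$ one-way communication, Grassmannian packing via \citet{dai2007volume}, Gaussian-to-Haar via singular-value bounds) matches the paper's structure, but the final counting step has a genuine gap. The uniqueness claim $\Pr(|S(M, G_B)| \geq 2) \leq \tfrac{1}{2}e^{-cd}$ is not justified, and your proposed resolution does not address the actual obstacle. The set $S(m, G_B)$ quantifies over all $V \in \mathcal{F}$ with $\|\Proj_V \hat{w}\|^2 \leq O(c')$; this is a quantifier over a set of size $e^{\Omega(d^2)}$, and the determinism of $\hat{w}(m, \cdot)$ as a function of $G_B$ does not collapse it. For any fixed unit $\hat{w}$, the subspaces $V$ with $\|\Proj_V\hat{w}\|^2 \leq O(c')$ are those approximately contained in the $(d-1)$-dimensional subspace $\hat{w}^\perp$; this can happen simultaneously for many far-apart $V_1, V_2 \in \mathcal{F}$, since both can be nearly inside $\hat{w}^\perp$ while remaining far from each other \emph{within} it (packing separation controls distances between the $V_i$, not the existence of a common nearly-orthogonal direction). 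In fact for a low-communication protocol the expectation of $|S(M,G_B)|$ over $G_B$ is of order $|\mathcal{F}|\cdot e^{-\Theta(d)} = e^{\Theta(d^2)}$, so the event $|S|\geq 2$ cannot be rare, and the decomposition $\Pr(\text{success}) \leq 2^b/N + \Pr(|S|\geq 2)$ yields nothing.

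The paper sidesteps this with a second-moment argument (Lemma~\ref{lem:pr-pack}) rather than a high-probability uniqueness claim. Letting $\mathcal{X}_i \subseteq \mathcal{F}$ be the preimage of message $i$, $I_{V,U}$ the success indicator, and $K_{i,U} = \sum_{V \in \mathcal{X}_i} I_{V,U}$, the pairwise bound (Lemma~\ref{lem:no-joint-sol}) gives $\E_U[I_{V_1,U}I_{V_2,U}] \leq \xi = e^{-\Omega(d)}$ for distinct $V_1,V_2 \in \mathcal{X}_i$ directly, with no union bound, hence $\E_U\bigl[K_{i,U}(K_{i,U}-1)\bigr] \leq |\mathcal{X}_i|^2 \xi$. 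Jensen's inequality then converts this into the first-moment bound $\E_U[K_{i,U}] \leq 1 + \sqrt{\xi}\,|\mathcal{X}_i|$, and summing over messages and normalizing by $|\mathcal{F}|$ gives $\Pr(\text{success}) \leq 2^b/|\mathcal{F}| + \sqrt{\xi}$, which is what is needed. The essential difference from your plan: the paper never asserts that $K_{i,U}\leq 1$ with overwhelming probability (which is false when $|\mathcal{X}_i|$ is large); it bounds $\E_U[K_{i,U}]$, permitting it to be large for large message classes while still controlling the normalized sum. Replacing your uniqueness claim with this moment/Jensen step is the missing ingredient.
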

\thmref{thm:ov-mem-g} is a direct corollary of the communication variant, \thmref{thm:gaussian}, proved in \appref{subsec:pr-ov}. A summary of the proof appears in \secref{sec:pr-sketch}.

Note that if $\hat{w}$ is drawn uniformly at random from $\mathbb{S}^{d-1}$, then $\sum_{i=1}^{d-1} (\hat{w}^\top g_i)^2 \approx d$. Hence, it is impossible to do significantly better than random, even with a tiny probability of $e^{-\Omega(d)}$.

Next, we state the second variant.
Given linearly independent vectors $v_1, \dots, v_{d-1}$. We show that it is hard to find an approximate null vector even if the first entry of $\mathrm{ker}(g_1 \cdots g_{d-1})$ (the unit vector orthogonal to $g_1 \cdots g_{d-1}$) is guaranteed to be least some constant.

\begin{theorem} \label{thm:ovp}
	Let $P$ denote the distribution over $d-1$ i.i.d uniformly drawn vectors from $\mathbb{S}^{d-1}$, $\theta_1' \cdots \theta_{d-1}'$. Let $E$ be the event that $e_1^\top \mathrm{ker}(\theta_1' \cdots \theta_{d-1}') \ge c_f$, where $c_f$ is some sufficiently small universal constant and $e_1 = (1,0,\dots,0)$.
	Assume that the input $\theta_1 \cdots \theta_{d-1}$ is drawn from $(P \mid E)$ (from the distribution $P$ conditioned on $E$). %Assume the following communication setting: the first party receives $\theta_1 \cdots \theta_{d/2}$ an the second receives $\theta_{d/2+1} \cdots \theta_{d-1}$.
	Let $\A$ be a randomized one-pass algorithm which outputs a vector $\hat{w}$ that satisfies:
	\begin{equation} \label{eq:ovp-thm}
	\sum_{i=1}^{d-1} \left( \hat{w}^\top \theta_i \right)^2
	\le c_1,
	\end{equation}
	with probability at least $e^{-c_2n}$. Then, the memory usage of $\A$ is $\Omega(d^2)$.
\end{theorem}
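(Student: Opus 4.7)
The plan is to reduce from \thmref{thm:ov-mem-g}. Suppose $\A$ is any randomized one-pass algorithm satisfying the hypotheses of \thmref{thm:ovp} with memory $b$. I will construct an algorithm $\A'$ for the Gaussian setting of \thmref{thm:ov-mem-g} using memory $b + O(d)$, which then forces $b = \Omega(d^2)$.

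First I would define the reduction. On input $g_1,\dots,g_{d-1} \sim \mathcal{N}(0,I_d)$ arriving one at a time, $\A'$ normalizes each sample on the fly to produce $\theta_i := g_i/\|g_i\|$ (this uses $O(d)$ auxiliary memory that can be discarded as soon as $\theta_i$ is fed to $\A$) and returns whatever $\A$ outputs. By spherical symmetry of the Gaussian, the $\theta_i$ are i.i.d.\ uniform on $\mathbb{S}^{d-1}$, and the direction vector $(\theta_1,\dots,\theta_{d-1})$ is independent of the magnitude vector $(\|g_1\|,\dots,\|g_{d-1}\|)$.

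Next I would relate the two success conditions. Let $F := \{\max_{i}\|g_i\|^2 \le 2d\}$; standard $\chi^2$ concentration and a union bound give $P(F) \ge 1/2$ for all large $d$. On $F$, whenever $\A$ outputs a unit vector $\hat w$ with $\sum_i (\hat w^\top \theta_i)^2 \le c_1$, we have
\[
\frac{1}{d}\sum_{i=1}^{d-1} (\hat w^\top g_i)^2 \;=\; \frac{1}{d}\sum_i \|g_i\|^2 (\hat w^\top \theta_i)^2 \;\le\; 2c_1,
\]
which is at most $c'$ (the constant in \eqref{eq:44}) provided $c_1 \le c'/2$. So the output of $\A$ translates into a valid output for the Gaussian problem whenever $F$ holds.

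Now I would bound the success probability. Let $E := \{e_1^\top \mathrm{ker}(\theta_1\cdots\theta_{d-1}) \ge c_f\}$. The null vector $w^* := \mathrm{ker}(\theta_1\cdots\theta_{d-1})$ is uniform on $\mathbb{S}^{d-1}$ by rotational symmetry, so integrating the density of its first coordinate gives $P(E) \ge e^{-\alpha(c_f)\,d}$ for some $\alpha(c_f)$ that tends to $0$ as $c_f \to 0$. Since $F$ depends only on the norms $\|g_i\|$, it is independent of $E$ and of $\A$'s output. Hence
\[
P(\A' \text{ succeeds}) \;\ge\; P(F)\,P(E)\,P(\A \text{ succeeds}\mid E) \;\ge\; \tfrac{1}{2}\,e^{-\alpha(c_f)\,d}\,e^{-c_2 d}.
\]
Choosing $c_f$ and $c_2$ small enough that $\alpha(c_f) + c_2 < c$, where $c$ is the universal constant from \thmref{thm:ov-mem-g}, this probability exceeds $e^{-cd}$ for $d$ large. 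Applying \thmref{thm:ov-mem-g} then yields $b + O(d) = \Omega(d^2)$, hence $b = \Omega(d^2)$.

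The main obstacle is the quantitative calibration in the last step: because the conditioning event $E$ itself occurs with only exponentially small probability in $d$, the product $P(E)\cdot e^{-c_2 d}$ must still beat the permitted failure rate $e^{-cd}$ inherited from \thmref{thm:ov-mem-g}. This is precisely what dictates the stipulation in \thmref{thm:ovp} that $c_f$ (and implicitly $c_2$) be sufficiently small universal constants; no other part of the reduction requires new ideas.
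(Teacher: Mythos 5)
Your proof is correct and follows essentially the same approach as the paper's (which appears as Lemma~\ref{lem:cc-cf}, the communication variant): normalize the Gaussians to get uniform spherical vectors, use the exponential-tail estimate (Lemma~\ref{lem:caps}) to lower-bound $P(E)$, control the norms $\|g_i\|$ via $\chi^2$ concentration, and balance the constants so the combined success probability still exceeds $e^{-cd}$. The only cosmetic differences are that you reduce the streaming statements to each other directly rather than passing explicitly through the communication variant, and you exploit independence of the norm event $F$ from the direction data to get a cleaner factor of $1/2$ where the paper uses a union-bound-style intersection of high-probability events.
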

\thmref{thm:ovp} is a direct corollary of the communication variant, \lemref{lem:cc-cf}, proved in \subsecref{subsec:pr-ov}. A summary of the proof appears in \secref{sec:pr-sketch}.

Both \thmref{thm:ov-mem-g} and \thmref{thm:ovp} follow from the following lemma, which regards the communication setting where two parties receive vector spaces from $\G(d/2,d)$ and $\G(d/2-1,d)$, respectively, and their goal is to find an approximately orthogonal $\hat{w}$.

\begin{lemma} \label{lem:G-G}
	Assume the following communication setting: the first party receives a uniformly random vector space $V$ from $\G(d/2,d)$, and the second party receives a uniformly random vector space $U$ from $\G(d/2-1, d)$. Let $\A$ be randomized one-sided communication protocol which outputs $\hat{w} \in \mathbb{S}^{d-1}$ that satisfies: 
	\[
	\max\left(
	\|\Proj_{V}(\hat{w})\|,
	\|\Proj_U(\hat{w})\|
	\right)
	\le c,
	\]
	with probability at least $e^{-c'd}$. Then, the communication contains $\Omega(d^2)$ bits.
\end{lemma}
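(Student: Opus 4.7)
The plan is a packing-plus-pigeonhole argument. I would construct a large collection $\mathcal{F} \subseteq \G(d/2,d)$ of pairwise-far subspaces, observe that any successful output $\hat{w}$ on an input $(V,U)$ must be essentially the unique unit vector $\ker(V+U) \in V^\perp \cap U^\perp$, and then argue that a single message cannot simultaneously ``point Bob at'' too many packing elements.

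First I would invoke the Grassmannian volume estimates of \citet{dai2007volume} to obtain a family $\mathcal{F} \subseteq \G(d/2,d)$ with $|\mathcal{F}| = \exp(c_0 d^2)$ such that any two distinct $V,V' \in \mathcal{F}$ are pairwise far in the standard (chordal) distance. By rotational symmetry of the uniform measure on $\G(d/2,d)$ and a covering-by-small-balls argument, it suffices to prove the lower bound when $V$ is uniform on the finite set $\mathcal{F}$ (success on a generic $V$ within a small neighborhood of some $V_i \in \mathcal{F}$ implies approximate success on $V_i$, losing only constants in $c$).

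Next I would establish the distinguishability lemma, which is the technical heart. Define $\psi_V(U) := \ker(V+U) \in \Sp^{d-1}$, the unique (up to sign) unit vector orthogonal to both $V$ and $U$ (well-defined for generic $U$). A success on $(V,U)$ forces $\hat{w}$ to lie within Euclidean distance $O(c)$ of $\psi_V(U)$. The claim is that, for any fixed $U$, the values $\{\psi_V(U)\}_{V \in \mathcal{F}}$ are pairwise separated by at least $2c$ on $\Sp^{d-1}$, so any fixed $\hat{w}$ can be $c$-close to at most one of them. This is where the ``angles between vector spaces, symmetries, and distribution over singular values of random matrices'' enter: one uses the packing separation of $\mathcal{F}$ together with sharp estimates on principal angles between random subspaces (the Jacobi/MANOVA ensemble) to verify the pairwise separation uniformly in $U$ (or on all but a negligible set of $U$'s).

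Third, I would execute the counting step. Fix all randomness so the protocol becomes deterministic; Alice's $b$-bit message $M$ partitions $\mathcal{F}$ into at most $2^b$ buckets $A_m = M^{-1}(m)$. For each $m$, Bob's output $\hat{w}(U,m,R_B)$ is a function of $U$ alone, and by the distinguishability step, for each fixed $U$ it is $c$-close to $\psi_V(U)$ for at most one $V \in A_m$. Integrating the indicator over $U$ gives
\[
\sum_{V \in A_m} \Pr\nolimits_U\!\bigl[\hat{w}(U,m,R_B) \text{ is } c\text{-close to } \psi_V(U)\bigr] \;\le\; 1,
\]
and summing over the $\le 2^b$ buckets yields $\sum_{V \in \mathcal{F}} q_V \le 2^b$, where $q_V$ is the $V$-conditional success probability. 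Therefore $\Pr[\text{success}] = \tfrac{1}{|\mathcal{F}|} \sum_V q_V \le 2^b/|\mathcal{F}|$. The assumed lower bound $\Pr[\text{success}] \ge e^{-c'd}$ then gives $2^b \ge e^{-c'd}\,|\mathcal{F}| = \exp(\Omega(d^2))$, i.e.\ $b = \Omega(d^2)$.

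The main obstacle is the distinguishability claim (step two). A direct counting with only the chordal-distance packing yields at most $\Omega(d)$, because $\hat{w}$ carries only $O(d)$ bits of information about $V$, and the ``slice'' $\{V : \|\Proj_V \hat{w}\| \le c\}$ intersects the packing in many elements. The extra factor of $d$ must come from the randomness in $U$: requiring the output to approximate $\ker(V+U)$ for a non-negligible measure of $U$'s effectively forces Alice to ``prepare'' Bob for arbitrary linear probes of $V$, of which there are $\Omega(d^2)$ independent ones. Making this rigorous is exactly the content of the singular-value / principal-angle estimates for random subspaces, combined with the symmetries of the Haar measure on $\G(d/2,d)$ and $\G(d/2-1,d)$.
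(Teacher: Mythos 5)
Your high-level architecture matches the paper's: reduce to a uniform input over a separated packing $\mathcal{F} \subseteq \G(d/2,d)$ of size $\exp(\Omega(d^2))$ (the paper does this with a shared random rotation, which is cleaner than a covering argument since the protocol need not be continuous), identify as the technical heart a distinguishability statement for the null vectors of $(V_1,U)$ and $(V_2,U)$, and close with a pigeonhole over Alice's buckets. The distinguishability statement you want is exactly the paper's Lemma~\ref{lem:no-joint-sol}.

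The gap is in your counting step. You assert that for a fixed $U$ (or ``all but a negligible set of $U$'s'') the vectors $\{\psi_V(U)\}_{V \in \mathcal{F}}$ are pairwise $2c$-separated, which would give the clean bound $\sum_{V\in A_m} \Pr_U[\cdot]\le 1$ and hence $\Pr[\text{success}]\le 2^b/|\mathcal{F}|$. But Lemma~\ref{lem:no-joint-sol}, and any argument of this concentration-on-the-sphere type, only gives failure probability $e^{-\Omega(d)}$ over random $U$ \emph{per pair} $(V_1,V_2)$: a single random direction in $V_1^{\perp}\cap\mathbb{S}^{d-1}$ lands near $V_2^{\perp}$ with probability $e^{-\Theta(d)}$, not smaller, so you cannot push this below $e^{-O(d)}$. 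Since $\mathcal{F}$ has $\exp(\Omega(d^2))$ pairs, a union bound over pairs cannot make the bad set of $U$'s negligible, and ``pairwise separated for all but a negligible set of $U$'s'' does not follow. The paper circumvents this with a second-moment argument instead of a union bound: set $K_{i,U}=\sum_{V\in\mathcal{X}_i} I_{V,U}$ (number of successes in bucket $i$), note $I_{V_1,U}I_{V_2,U}\le J_{V_1,V_2,U}$ for same-bucket pairs, bound $\E_U\left[\binom{K_{i,U}}{2}\right]\le\binom{|\mathcal{X}_i|}{2}\xi$ by linearity of expectation (no union bound), and apply Jensen to get $\E_U[K_{i,U}]\le 1+|\mathcal{X}_i|\sqrt{\xi}$. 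Summing buckets gives $\Pr[\text{success}]\le N/|\mathcal{F}|+\sqrt{\xi}$, which is still $e^{-\Omega(d)}$-small when $N\ll|\mathcal{F}|$. Your argument as written needs this replacement to close.
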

The proof appears in \subsecref{subsec:pr-ov} and a summary appears in \secref{sec:pr-sketch}.

%Let $x_1, \dots, x_{d-1} \in \mathbb{S}^{d-1}$. Assume that these vectors are linearly independent and let $x_d$ denote the vector perpendicular to $x_1, \dots, x_{d-1}$. Assume that the first coordinate of $x_d$ is guaranteed to equal at least $c_f$, where $c_f > 0$ is some absolute constant. The following holds:
%\begin{theorem} \label{thm:ovp}
%	Let $\A$ be a one-pass algorithm which receives a sample $x_1, \dots, x_{d-1}$ such that each $x_i \in \mathbb{S}^{d-1}$. Additionally, assume that it is guaranteed that the vectors are linearly independent and that the vector $x_d$ which is orthogonal to all $x_i$ satisfies: $e_1^\top x_d \ge c_f$, for some universal constant $c_f$. Let $\hat{w}$ be the output of the algorithm, and assume that for any fixed viable input, with probability at least $e^{-c_2d}$,
%	\begin{equation} \label{eq:ovp-thm}
%	\sum_{i=1}^{d-1} (\hat{w}^\top x_i)^2 \le c_1. 
%	\end{equation}
%	Then, the memory usage of $\A$ is $\Omega(d^2)$.
%\end{theorem}

%Note that if $w$ is selected from $\mathbb{S}^{d-1}$ uniformly at random, then $\E\left[\sum_{i=1}^{d-1} (w^\top x_i)^2\right] \approx 1$ (the dot product between two typical unit vectors equals approximately $d^{-1/2}$). Hence, the theorem implies that in order to be slightly better than random with non-negligible probability, a memory of $\Omega(d^2)$ is required.

\subsection{Linear separators (LSP)}

Let $((x_i,y_i))_{i=1}^{2m}$ denote a dataset, where $x_i \in \mathbb{S}^{d-1}$, $y_i \in \{-1,1\}$ and $m \ge C d$, for some constant $C > 0$.
Assume that the points are separable with a margin of $\gamma = \Theta(d^{-1/2})$. Given a specific dataset, the goal of the algorithm is to find a hyperplane which classifies a large fraction of the points correctly. For the lower bounds, we will fix some hard distribution over datasets (rather than on examples, which are assumed to arrive at a fixed order). We show that any algorithm which outputs a hyperplane which classifies more than $(1-c_2)2m$ points correctly ($c_2 > 0$ is a universal constant), with non-negligible probability, requires a memory of $\Omega(d^2)$.

\begin{theorem} \label{thm:LSP-err}
	There exists a distribution over datasets $((x_i,y_i))_{i=1}^{2m}$ satisfying the above properties, such that the following holds:
	any randomized one-pass algorithm which outputs a hyperplane $\hat{w}$, that with probability at least $e^{-cd}$ classifies $(1-c_2)m$ points correctly, has a memory usage of $\Omega(d^2)$ (the randomness is over the algorithm and the distribution over datasets).
\end{theorem}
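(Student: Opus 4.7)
The plan is to reduce from the approximate null vector problem of \thmref{thm:ovp}. Given input vectors $\theta_1, \dots, \theta_{d-1}$ drawn from the conditional distribution $(P \mid E)$, and writing $w^* := \ker(\theta_1, \dots, \theta_{d-1})$ for the true null direction (which by event $E$ has $w^*_1 \ge c_f$), the reduction encodes orthogonality to $w^*$ as a linear separability problem with margin $\Theta(1/\sqrt{d})$. A critical choice will be a small constant $\gamma_0$ controlling the margin parameter: taking $\gamma_0$ small enough prevents trivial solutions such as $\hat w = e_1$ from satisfying the LSP condition and forces every feasible $\hat w$ to align with $w^*$.

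Concretely, fix $\alpha = \gamma_0 / \sqrt{d}$ and, for each $\theta_i$, emit the two labeled examples $x_i^+ = (\theta_i + \alpha e_1)/\|\theta_i + \alpha e_1\|$ with $y_i^+ = +1$ and $x_i^- = (\theta_i - \alpha e_1)/\|\theta_i - \alpha e_1\|$ with $y_i^- = -1$. A hyperplane $w$ classifies both examples correctly if and only if $|w^\top \theta_i| < \alpha\, w_1$. The true null vector $w^*$ satisfies $w^{*\top} \theta_i = 0$ and $w^*_1 \ge c_f$, and hence perfectly separates the dataset with margin $\Theta(\gamma_0 c_f / \sqrt d) = \Theta(1/\sqrt{d})$. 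To reach the required sample size $2m \ge 2Cd$, I would pad the stream with copies of $(e_1, +1)$, which are correctly labeled by any $w$ with $w_1 > 0$.

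The hard part is converting the allowed constant misclassification fraction into the ANV guarantee. At least a $(1 - 2c_2)$-fraction of indices are ``pair-good'' in the sense that $|\hat w^\top \theta_i| < \alpha \hat w_1 \le \alpha$. Conditioning on $w^*$, the $\theta_i$'s are i.i.d.\ uniform on $w^{*\perp} \cap \mathbb{S}^{d-1}$, so the scalar $\hat w^\top \theta_i = \Proj_{w^{*\perp}}(\hat w)^\top \theta_i$ has standard deviation on the order of $q/\sqrt{d-1}$ where $q := \|\Proj_{w^{*\perp}}(\hat w)\|$, with an approximately Gaussian tail. A standard concentration argument combined with a net/union bound over $\hat w \in \mathbb{S}^{d-1}$ then shows that observing a pair-good fraction of at least $1 - 2c_2$ forces $q \le \gamma_0 \hat w_1 / \Phi^{-1}(1 - c_2)$, which is strictly less than one once $\gamma_0$ is chosen small enough relative to $c_2$. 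Concentration of the empirical Gram matrix $\sum_i \theta_i \theta_i^\top$ on $w^{*\perp}$ further gives $\sum_{i=1}^{d-1}(\hat w^\top \theta_i)^2 \le C_1 q^2 \le c_1$ for universal constants $C_1$ and $c_1 < 1$. Thus the LSP-solver is also an ANV-solver in the sense of \thmref{thm:ovp}, yielding the claimed $\Omega(d^2)$ memory lower bound.
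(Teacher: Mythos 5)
Your reduction is a genuinely different route from the paper's, and in my assessment it can be made to work, but there is one load-bearing step you wave at too quickly and one construction choice that would, as written, undermine the argument.

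First, the structural difference. The paper proves the communication version (Theorem~\ref{thm:lse}) by reducing from Lemma~\ref{lem:otherlem}: each party, given its subspace $V$ or $U$, independently constructs a labeled distribution $D_V$ (resp.\ $D_U$) supported on perturbations of uniform points in $V\cap\mathbb{S}^{d-1}$, draws a $c_2$-approximate net $S_V$ (resp.\ $S_U$) of size $\Theta(d)$ from it, and runs the LSP protocol on $S_V\cup S_U$. The key implication is a one-sided population statement: any $w$ with small error on $D_V$ must have $\|\Proj_V(w)\|<c/2$, and Claim~\ref{cla:approx} (VC uniform convergence for halfspaces) translates empirical error on $S_V$ to population error on $D_V$. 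You instead reduce directly from the streaming Theorem~\ref{thm:ovp} and build the LSP dataset from the very same $\theta_i$'s, via the pair $(\theta_i\pm\alpha e_1)$. This is a valid and in fact more elementary plan, and your ``pair-good'' characterization $|\hat w^\top\theta_i|<\alpha\hat w_1$ is correct, as is the conditional distribution fact (given $w^*$, the $\theta_i$ are i.i.d.\ uniform on $w^{*\perp}\cap\mathbb{S}^{d-1}$) and the Gram-matrix step $\sum_i(\hat w^\top\theta_i)^2\le C_1 q^2$.

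The gap is the ``net/union bound over $\hat w\in\mathbb{S}^{d-1}$.'' A naive $\epsilon$-net over the sphere does not suffice here. The pair-good event is a threshold $|\hat w^\top\theta_i|<\alpha\hat w_1$ at resolution $\alpha=\Theta(1/\sqrt d)$. Perturbing $\hat w$ by $\epsilon$ perturbs $\hat w^\top\theta_i$ by up to $\epsilon$, so to preserve the pair-good indicator you would need $\epsilon\ll 1/\sqrt d$; that forces a net of size $\exp(\Omega(d\log d))$, which defeats the $\exp(-\Omega(c_2^2 d))$ Chernoff deviation available for a single $\hat w$. What you actually need is uniform convergence over the VC class of slabs $\{\theta:|\hat w^\top\theta|<\alpha\hat w_1\}$ (VC dimension $O(d)$), which gives deviation $c_2$ simultaneously for all $\hat w$ with probability $1-e^{-\Omega(c_2^2 d)}$. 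This is exactly the role played in the paper by Claim~\ref{cla:approx}; you should invoke it (or a slab analogue) rather than a raw covering argument. Without this, the deduction ``pair-good fraction $\ge1-2c_2\Rightarrow q$ small'' does not follow.

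Second, the padding. If you pad with many copies of $(e_1,+1)$ to inflate the sample size to $2m$, then misclassifying a $c_2$-fraction of $2m$ points can mean misclassifying \emph{all} of the $2(d-1)$ informative points once the padding dominates, and the reduction to ANV collapses. The fix is simple: take $m=d-1$ so the dataset is exactly the $2(d-1)$ informative pairs and no padding is needed (the theorem only requires $m\ge Cd$ for \emph{some} constant $C>0$, which you are free to take as $C\le1$). Alternatively, if you insist on padding, you must keep the padding to a $\Theta(1)$ fraction strictly below $1$ and re-budget the error fraction so that at least a $(1-O(c_2))$-fraction of the informative pairs remain pair-good; as written, this is not done. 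With these two repairs your reduction is sound, though you should also note explicitly that the algorithm's success probability $e^{-cd}$ must be chosen small relative to both the ANV constant and the $e^{-\Omega(c_2^2 d)}$ failure probability of the uniform-convergence event.
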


This is a direct corollary of the following communication bound, for the setting where the first party receives $((x_i,y_i))_{i=1}^m$ and the second receives the remaining $m$ examples.

\begin{theorem} \label{thm:lse}
	There exists a distribution over datasets $((x_i,y_i))_{i=1}^{2m}$ satisfying the above properties, such that the following holds:
	any randomized one-sided communication protocol $\A$ which outputs a hyperplane $\hat{w}$ that, with probability at least $e^{-c_3d}$ classifies $(1-c_2)2m$ points correctly, has a memory usage of $\Omega(d^2)$.
\end{theorem}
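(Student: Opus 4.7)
The plan is to reduce from $\lemref{lem:cc-cf}$, the one-sided communication variant of $\thmref{thm:ovp}$. Given inputs $\theta_1,\ldots,\theta_{d-1}\sim(P\mid E)$ split as $\theta_1,\ldots,\theta_{k_1}$ for party~1 and $\theta_{k_1+1},\ldots,\theta_{d-1}$ for party~2 with $k_1=\lfloor(d-1)/2\rfloor$, each party $k$ sets $V_k=\mathrm{span}\{\theta_i:i\in I_k\}$ and, using shared randomness, samples $u_i^{(k)}\sim\mathrm{unif}(V_k\cap\mathbb{S}^{d-1})$ for $i=1,\ldots,m$ with $m=\Theta(d)$, constructing LSP examples $(x_i^{(k)},+1)$ where $x_i^{(k)}=\alpha u_i^{(k)}+\beta e_1$, $\beta=C_\beta/\sqrt{d}$, $\alpha=\sqrt{1-\beta^2}$. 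Writing $\tilde{w}:=\ker(\theta_1,\ldots,\theta_{d-1})$, the conditioning on $E$ guarantees $\tilde{w}^\top e_1\ge c_f$, and since $\tilde{w}\perp V_1\cup V_2$ we have $\tilde{w}^\top x_i^{(k)}=\beta\tilde{w}^\top e_1\ge\beta c_f=\Theta(d^{-1/2})$, which gives the required margin and verifies the distributional assumptions of the theorem.

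Now suppose the LSP protocol outputs $\hat{w}$ classifying $(1-c_2)\cdot 2m$ examples correctly; by averaging each party's empirical error is at most $2c_2$. A standard uniform concentration bound (halfspaces have VC dimension $O(d)$ and $m=\Theta(d)$) implies that with probability $1-e^{-c'd}$, the expected error over $u\sim\mathrm{unif}(V_k\cap\mathbb{S}^{d-1})$ is at most $2c_2+\varepsilon$ for every $\hat{w}\in\mathbb{S}^{d-1}$ simultaneously. Isoperimetric concentration on $V_k\cap\mathbb{S}^{d-1}$ makes $\Proj_{V_k}(\hat{w})^\top u$ sub-Gaussian with proxy $\|\Proj_{V_k}(\hat{w})\|/\sqrt{\dim V_k}$, and the resulting tail bound turns the error bound into $\|\Proj_{V_k}(\hat{w})\|\le C_0$ for a constant $C_0=C_0(C_\beta,c_2)$ that shrinks as $C_\beta$ or $c_2$ does.

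To conclude, since $\theta_j\in V_k$ for $j\in I_k$, $\hat{w}^\top\theta_j=\Proj_{V_k}(\hat{w})^\top\theta_j$, so
\[
\sum_{j=1}^{d-1}(\hat{w}^\top\theta_j)^2=\sum_{k=1,2}\Proj_{V_k}(\hat{w})^\top M_k\,\Proj_{V_k}(\hat{w}),\qquad M_k:=\sum_{j\in I_k}\theta_j\theta_j^\top.
\]
Conditional on $V_k$, the $\theta_j$ with $j\in I_k$ are i.i.d.\ uniform on $V_k\cap\mathbb{S}^{d-1}$, so Wishart/Marchenko--Pastur-type concentration yields $\|M_k\|_{\mathrm{op}}\le C_M$ with probability $1-e^{-c''d}$, hence $\sum_j(\hat{w}^\top\theta_j)^2\le 2C_MC_0^2\le c_1$ for a suitable choice of constants. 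Chasing probabilities, the induced ovp-protocol succeeds with probability at least $e^{-c_3d}-e^{-c'd}-e^{-c''d}\ge e^{-c_4d}$, and $\lemref{lem:cc-cf}$ then gives the $\Omega(d^2)$ communication lower bound, which is exactly the content of $\thmref{thm:lse}$.

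The main technical obstacle is the chain of concentration/geometric steps that must all fire with constants aligned: one must tune $C_\beta$ and $c_2$ so that the projection bound $C_0$ is small enough for $2C_MC_0^2\le c_1$, while the uniform-in-$\hat{w}$ VC bound, the spherical sub-Gaussian tail on $V_k\cap\mathbb{S}^{d-1}$, and the operator-norm bound on $M_k$ all hold simultaneously on a high-probability event. A secondary subtlety is that $V_k$ is sampled from a distribution \emph{conditioned} on the rare event $E$, so one must verify that the Wishart concentration for $M_k$ survives this conditioning and that the isoperimetric tail on the sphere is not distorted by the conditioning.
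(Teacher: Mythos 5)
Your approach is sound and reaches the right conclusion, but it routes through a different source lemma than the paper's proof and is consequently a bit longer. The paper reduces from Lemma~\ref{lem:otherlem}, which is the $E$-conditioned \emph{subspace} version of Lemma~\ref{lem:G-G}: the hard instance hands the two parties subspaces $V\in\G(d/2,d)$ and $U\in\G(d/2-1,d)$, and the ANV objective is phrased directly as $\max(\|\Proj_V(\hat w)\|,\|\Proj_U(\hat w)\|)\le c$. You instead reduce from Lemma~\ref{lem:cc-cf}, the \emph{vector} version, where the objective is $\sum_j(\hat w^\top\theta_j)^2\le c_1$. Because of this, you need an extra step at the end (the operator-norm bound $\|M_k\|_{\mathrm{op}}\le C_M$) in order to pass from $\|\Proj_{V_k}(\hat w)\|\le C_0$ to $\sum_j(\hat w^\top\theta_j)^2\le c_1$; this step is essentially re-deriving a piece of Lemma~\ref{lem:GaussianMatrix}, which the paper already used to go from subspaces to Gaussian vectors and which the paper's choice of source lemma avoids repeating. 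Other, smaller differences: you construct the dataset with only $+1$ labels ($x_i^{(k)}=\alpha u_i^{(k)}+\beta e_1$), whereas the paper places both $x+\Theta(e_1/\sqrt d)$ with label $+1$ and $x-\Theta(e_1/\sqrt d)$ with label $-1$. Both work -- your version still forces $\|\Proj_{V_k}(\hat w)\|$ small because $\hat w^\top u$ is symmetric around $0$, so even a one-sided misclassification constraint suppresses its variance -- but the $\pm 1$ construction pins down $|\hat w^\top u|$ directly and is slightly cleaner. You also use a random i.i.d.\ sample plus a VC uniform-convergence bound, whereas the paper fixes a deterministic $\varepsilon$-approximate net (Claim~\ref{cla:approx}); these are essentially interchangeable.

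One technical slip worth flagging: you invoke sub-Gaussianity of $\Proj_{V_k}(\hat w)^\top u$ to "turn the error bound into $\|\Proj_{V_k}(\hat w)\|\le C_0$," but a sub-Gaussian tail is an \emph{upper} bound on $\Pr[\hat w^\top u\le -t]$, which by itself cannot force the projection to be small. What you actually need is a \emph{lower} bound on the misclassification probability when $\|\Proj_{V_k}(\hat w)\|$ is not small, i.e.\ an anti-concentration (or CLT-type two-sided) estimate; this is exactly what the paper's Lemma~\ref{lem:first-coord} supplies (convergence of $\sqrt{d}\,w^\top\theta$ to $\mathcal N(0,1)$). Also, when you claim "conditional on $V_k$, the $\theta_j$ with $j\in I_k$ are i.i.d.\ uniform on $V_k\cap\mathbb{S}^{d-1}$," you should note that this remains true under the conditioning on $E$ precisely because $E$ is a deterministic function of the pair $(V_1,V_2)$ and not of the individual $\theta_j$'s. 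Finally, the constant-chasing you flag at the end is genuine but straightforward: all three high-probability events (VC uniform convergence, the spherical tail bound, and $\|M_k\|_{\mathrm{op}}\le C_M$) fail with probability $e^{-\Omega(d)}$ with rates exceeding the $\alpha$ in $\Pr[E]\ge e^{-\alpha d}$ once $c_f$ is chosen small enough, so conditioning on $E$ costs only a constant in the exponent, matching how the paper handles the same issue in the proof of Lemma~\ref{lem:cc-cf}.
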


The proof of Theorem~\ref{thm:lse} appears in \appref{subsec:pr-lsp}, and its proof sketch appears in \secref{sec:pr-sketch}. To illustrate some proof ideas of \thmref{thm:LSP-err}, we prove a weaker version, on finding an exact separator:

\begin{theorem} \label{thm:LSP}
	There exists a distribution over datasets $((x_i,y_i))_{i=1}^{2d-2}$ satisfying the above properties, such that any one-pass algorithm $\A$ which outputs with probability at least $e^{-cd}$ a linear separator (classifying all points correctly), has a memory usage of $\Omega(d^2)$.
\end{theorem}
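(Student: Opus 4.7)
The plan is to reduce from the approximate null vector problem of \thmref{thm:ovp}. First, I would construct the hard distribution over datasets as follows: draw $\theta_1, \dots, \theta_{d-1}$ from the distribution $(P \mid E)$ of \thmref{thm:ovp}, which ensures that $w^\star := \ker(\theta_1 \cdots \theta_{d-1})$ satisfies $e_1^\top w^\star \ge c_f$ for a universal constant $c_f > 0$. Fix $\delta := \sqrt{c_1/(d-1)} = \Theta(d^{-1/2})$, where $c_1$ is the constant from \thmref{thm:ovp}. The dataset consists of $2d-2$ labeled unit vectors: for each $i \in \{1,\dots,d-1\}$, the pair $(x_i^+, +1)$ with $x_i^+ := (\theta_i + \delta e_1)/\|\theta_i + \delta e_1\|$, and $(x_i^-, -1)$ with $x_i^- := (\theta_i - \delta e_1)/\|\theta_i - \delta e_1\|$. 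Since $w^{\star\top}(\theta_i \pm \delta e_1) = \pm \delta\, e_1^\top w^\star$ and $\|\theta_i \pm \delta e_1\| = 1 \pm O(d^{-1/2})$, the hyperplane $w^\star$ separates the dataset with margin $\Omega(\delta) = \Omega(d^{-1/2})$, matching the required regime.

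Next, the reduction. Suppose $\A$ is a one-pass LSP algorithm with memory $b$ that, with probability at least $e^{-cd}$, outputs an exact separator on a dataset drawn from the above distribution. I would build a one-pass ANV algorithm $\A'$ using the same $b$ bits of memory: upon reading $\theta_i$ in the stream, $\A'$ computes $x_i^+$ and $x_i^-$ from $\theta_i$ (no extra stored state is needed) and feeds the two labeled points sequentially into a simulated copy of $\A$; at the end, $\A'$ outputs whatever hyperplane $\hat w$ that $\A$ outputs. If $\hat w$ is an exact separator, then for every $i$, $\hat w^\top(\theta_i + \delta e_1) > 0$ and $\hat w^\top(\theta_i - \delta e_1) < 0$. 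Subtracting forces $\hat w^\top e_1 > 0$, and combining the two inequalities gives $|\hat w^\top \theta_i| < \delta\, \hat w^\top e_1 \le \delta$. Summing, $\sum_{i=1}^{d-1}(\hat w^\top \theta_i)^2 < (d-1)\delta^2 = c_1$, which is exactly the ANV guarantee of \thmref{thm:ovp}. Since the success probability of $\A'$ equals that of $\A$, the $\Omega(d^2)$ lower bound from \thmref{thm:ovp} transfers to $\A$.

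There is no deep obstacle in this argument. The main design choice is the scalar $\delta$, which must be simultaneously small enough that the $d-1$ squared bounds $(\hat w^\top \theta_i)^2 < \delta^2$ sum to at most $c_1$, and large enough that the separating margin $\Omega(\delta)$ lies in the prescribed regime $\Theta(d^{-1/2})$. The value $\delta = \Theta(d^{-1/2})$ satisfies both constraints, and the conditioning event $E$ in \thmref{thm:ovp} is precisely what guarantees that a separator exists by ensuring $e_1^\top w^\star$ is bounded away from zero. The only care to take is that the normalization $\|\theta_i \pm \delta e_1\|$ does not spoil the margin, but this quantity is $1 + o(1)$ uniformly in $i$, so all estimates go through with universal constants.
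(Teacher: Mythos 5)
Your reduction is correct and is essentially the same as the paper's: both feed $\A$ the perturbed pairs $(\theta_i \pm \Theta(d^{-1/2}) e_1, \pm 1)$, use the conditioning event $E$ to certify that $w^\star = \ker(\theta_1\cdots\theta_{d-1})$ is a separator with margin $\Omega(d^{-1/2})$, and then combine the two separator inequalities to bound $\sum_i (\hat w^\top \theta_i)^2$ by $c_1$. The only cosmetic difference is that you normalize $x_i^\pm$ to be exactly unit vectors, a detail the paper leaves implicit via the margin definition.
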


\begin{proof}
	We reduce \thmref{thm:LSP} from \thmref{thm:ovp}, by showing that given an algorithm $\A$ for LSP which satisfies the requirements in \thmref{thm:LSP}, one can create an algorithm $\A'$ for ANV satisfying the requirements in \thmref{thm:ovp}, with the same memory usage. 
	\thmref{thm:ovp} states that the memory usage of $\A'$ is $\Omega(d^2)$, which implies that the memory usage of $\A$ is $\Omega(d^2)$ as well and concludes the proof.
	
	Here is how $\A'$ is constructed, by simulating $\A$:
	Whenever $\mathcal{A}'$ receives a point $x_i$, it creates the points $x_{i+} = x_i + c_4 e_1 /\sqrt{d}$ and $x_{i-} = x_i - c_4 e_1 /\sqrt{d}$, where $e_1$ is the first vector in the standard basis and $c_4 = \sqrt{c_1}$ ($c_1$ is the constant defined in \eqref{eq:ovp-thm}). Then, $\mathcal{A}'$ feeds $\mathcal{A}$ with the two pairs $(x_{i+}, 1)$ and $(x_{i-}, -1)$. Once the last iteration terminates, $\mathcal{A}'$ outputs the output of $\mathcal{A}$ (assuming, without loss of generality, that $\mathcal{A}$ outputs a unit vector).
	
	Note that the algorithm $\mathcal{A}$ is assumed to operate only if the margin is $\Omega(d^{-1/2})$: our theorem is only concerned with such datasets. Luckily, $\mathcal{A}$ is fed with a sufficiently separated dataset. 
	Indeed, Theorem~\ref{thm:ovp} states that $w^* := \ker(x_1 \cdots x_{d-1})$ satisfies $w^{*\top} e_1 \ge c_f$.
	The same $w^*$ is a linear separator with margin $c_f c_4 / \sqrt{d}$:
	\[
	w^{*\top} x_{i+} 
	= w^{*\top} x_i + w^{*\top} e_1 c_4 / \sqrt{d}
	\ge c_f c_4 / \sqrt{d} ~; \qquad
	w^{*\top} x_{i-} 
	= w^{*\top} x_i - w^{*\top} e_1 c_4 / \sqrt{d}
	\le - c_f c_4 / \sqrt{d}.
	\]
	
	We are left with showing that $\mathcal{A}'$ outputs a vector with a loss of at most $c_1$, satisfying \eqref{eq:ovp-thm}.
	Indeed, since the output $\hat{w}$ of $\mathcal{A}$ is a linear separator:
	\[
	0 < \hat{w}^\top x_{i+} = \hat{w}^\top x_i + \hat{w}^\top e_1 c_4 /\sqrt{d} ~; \qquad 
	0 > \hat{w}^\top x_{i-} = \hat{w}^\top x_i - \hat{w}^\top e_1 c_4 / \sqrt{d}
	\]
	hence $|\hat{w}^\top x_i| < \hat{w}^\top e_1 c_4 / \sqrt{d} \le c_4 / \sqrt{d}$. 
	Therefore,
	$
	\sum_{i=1}^{d-1} (\hat{w}^\top x_i)^2 
	\le c_4^2
	= c_1.
	$
\end{proof}

\thmref{thm:lse} shows that when the margin is $\gamma$ and $m, d = \Theta(\gamma^{-2})$, any algorithm classifying $(1-\varepsilon)$ of the points correctly requires $\Omega(\gamma^{-4})$ memory (where $\varepsilon$ is a small constant). This bound is asymptotically tight up to logarithmic factors, and there exists a one-pass algorithm with memory $\widetilde{O}(\log^2 m/(\gamma^4\varepsilon))$ (or, $\widetilde{O}(\log^2 m/\gamma^4)$ when $\varepsilon$ is a constant). This upper bound holds for any values of $m$ and $d$, where $m$ is the sample size. It is based on the following fact: if we randomly project all points to dimension $d' = O(\log m /\gamma^2)$, with high probability the dataset will still be separable with margin $\gamma/2$ \citep{blum2006random}. We sketch this algorithm below.

First, note that if $m \gg d' (= \widetilde{\Theta}(1/\gamma^2))$, it suffices to subsample $O(d'/\varepsilon)$ points, and with high probability, any linear separator on the subsample will classify $(1-\varepsilon)$ of the points in the original dataset correctly (this follows from the sample complexity of realizable learning over $\mathbb{R}^{d'}$, see \cite{shalev2014understanding}, Sec.~6.4).

%To get an algorithm with memory usage $\widetilde{O}(\log^2 m/(\gamma^4 \varepsilon))$ which classifies $(1-\varepsilon)$ of the points correctly, one, instead of storing all points in memory, stores a random subsample of $O(d'/\varepsilon)$ points. Since the projected points are in $\mathbb{R}^{d'}$, $O(d'/\varepsilon)$ points suffice for realizable learning.

Hence, it suffices to construct an algorithm with memory $O(m \log m/\gamma^2)$ which finds a hypothesis that classifies \emph{all} points correctly. This algorithm is implemented as follows: first, a uniformly random projection $P$ from $\mathbb{R}^d$ to $\mathbb{R}^{d'}$ is drawn, where $d' = O(\log m/\gamma^2)$. The algorithm projects all points $x_i$ and stores the projection $P x_i$ up to a sufficient accuracy, together with the label $y_i$. 
Then, it finds a linear separator $w_p$ in the projected space. Lastly, it outputs a preimage of $w_p$, namely, a vector $\hat{w}$ which satisfies $P\hat{w} = w_p$. There are many preimages of $\hat{w}_p$, and we select the one which is orthogonal to the kernel of $P$. This ensures that $\hat{w}^\top x_i y_i = w_p^\top Px_i y_i > 0$, and $\hat{w}$ is a linear separator as required. Indeed, if $x_{i,k}$ is the projection of $x_i$ to the kernel of $P$ and $x_{i,p} = x_i - x_{i,k}$, the following holds:
\[
\hat{w}^\top x_i
= \hat{w}^\top x_{i,p}
= (P\hat{w})^\top (P x_{i,p})
= w_p^\top P x_i,
\]
where the second equality follows from the fact that $\hat{w}$ and $x_{i,p}$ are in the subspace orthogonal to the kernel of $P$, hence applying $P$ on them results in a rotation, and, in particular, the angle between $\hat{w}$ and $x_{i,p}$ is the same as the angle between $P \hat{w}$ and $P x_{i,p}$.

\begin{remark}
The lower bound shows that while the low-memory perceptron attains low \emph{online mistake bound}, it does not guarantee low error on the training set.
\end{remark}

\subsection{Linear regression (LR)}

Let $A$ be a real matrix of dimension $d \times d$ where each row $A_i$ satisfies $\|A_i\| \le 1$. Let $b \in \mathbb{R}^d$ where $\|b\| \le 1$. Assume that there is a solution $w^* \in \mathbb{R}^d$ with $\|w^* \| \le 1$ for the equation system $Aw = b$. We prove the following theorem, on algorithms which receive the linear equations one after the other in a random order:

\begin{theorem} \label{thm:LR}
	There exists a distribution $P$ over pairs $(A,b)$ satisfying the definition from above, where the equations arrive at a random order, such that the following holds: 
	Any randomized one-pass algorithm $\mathcal{A}$ outputting $\hat{w}$ which satisfies $\| A \hat{w} - b\|^2 \le c$ with probability at least $e^{-c' n}$, has a memory usage of $\Omega(d^2)$.
\end{theorem}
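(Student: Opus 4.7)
The plan is to reduce from the approximate null vector problem of Theorem~\ref{thm:ovp}, mirroring the reduction used for linear separators in Theorem~\ref{thm:LSP}. Given $\theta_1, \ldots, \theta_{d-1}$ sampled from the conditioned distribution $(P\mid E)$ of Theorem~\ref{thm:ovp}, I would build a hard LR instance with $d$ equations: for $i < d$ set $(A_i, b_i) = (\theta_i, 0)$, and set $(A_d, b_d) = (e_1, c_f)$, where $c_f$ is the constant from Theorem~\ref{thm:ovp}. Then shuffle the $d$ pairs uniformly at random. Letting $v^* = \ker(\theta_1, \ldots, \theta_{d-1})$ with the sign convention $e_1^\top v^* \ge c_f$, the vector $w^* = (c_f/v^*_1)v^*$ is an exact solution of $Aw = b$ with $\|w^*\| = c_f/v^*_1 \le 1$; together with $\|A_i\| = 1$ and $\|b\| = c_f \le 1$, this places the instance inside the admissible class from the theorem statement.

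Given a one-pass LR algorithm $\A$ with memory $b$, I would construct a one-pass ANV algorithm $\A'$ by simulation: draw $j$ uniformly in $\{1, \ldots, d\}$; as the ANV vectors $\theta_i$ stream in, feed $\A$ with $(\theta_i, 0)$ for the first $j-1$ of them, then splice in $(e_1, c_f)$ at step $j$, and finish by feeding $(\theta_i, 0)$ for the remaining ones; finally, output $\hat{w}/\|\hat{w}\|$ where $\hat{w}$ is $\A$'s output. Because the event $E$ is invariant under permutations of the $\theta_i$'s, the conditional distribution $(P \mid E)$ is exchangeable, so the sequence fed to $\A$ is distributed as a uniformly random permutation of the $d$ LR equations, as required by Theorem~\ref{thm:LR}. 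The extra memory overhead of $\A'$ is $O(\log d)$ for storing $j$ and is negligible.

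For the error analysis, if $\A$ succeeds with $\|A\hat{w} - b\|^2 \le c$, then
\[
\sum_{i=1}^{d-1}(\theta_i^\top \hat{w})^2 + (e_1^\top \hat{w} - c_f)^2 \le c.
\]
The second summand yields $e_1^\top \hat{w} \ge c_f - \sqrt{c}$; choosing $c \le c_f^2/4$ gives $\|\hat{w}\| \ge c_f/2$. Then $\hat{u} = \hat{w}/\|\hat{w}\|$ satisfies $\sum_{i=1}^{d-1}(\theta_i^\top \hat{u})^2 \le 4c/c_f^2$, which can be made at most the ANV threshold $c_1$ of Theorem~\ref{thm:ovp} by taking $c$ a sufficiently small universal constant. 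Thus whenever $\A$ succeeds, $\A'$ satisfies \eqref{eq:ovp-thm}, so $\A'$ inherits the $e^{-c'd}$ success probability of $\A$. Theorem~\ref{thm:ovp} then forces $\Omega(d^2)$ memory for $\A'$, and hence for $\A$.

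The main subtlety I anticipate is cleanly verifying the random-order requirement: inserting $(e_1, c_f)$ at a uniform index $j$ into an exchangeable sequence of $d-1$ items must be shown to induce a uniform distribution over orderings of the $d$ labeled LR equations. This follows from a short symmetry argument, but it is the one place where the reduction interacts nontrivially with the random-order hypothesis of Theorem~\ref{thm:LR}; the remaining steps are a direct adaptation of the linear-separator reduction.
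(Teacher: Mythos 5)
Your reduction is essentially identical to the paper's: same instance construction (equations $\theta_i^\top w=0$ plus $e_1^\top w=c_f$ spliced at a uniformly random position), same admissibility check, same normalization of the LR output, and the same chain of inequalities using $e_1^\top\hat w\ge c_f/2$ to bound $\sum_i(\theta_i^\top\hat u)^2$ by $c_1$. The paper is even more terse than you on the random-order point (it simply asserts it), and your $O(\log d)$ overhead remark is unnecessary since the random index can be read from the free randomness at each step; neither difference affects correctness.
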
 

\begin{proof}
	We reduce this theorem from \thmref{thm:ovp}, as in the proof of \thmref{thm:LSP}. Assume the existence of an algorithm $\mathcal{A}$ for LR which satisfies the conditions in \thmref{thm:LR} with $c = \min(c_1 c_f^2/4, c_f^2/4)$ and $c' = c_2$, where $c_1$, $c_f$ and $c_2$ are the constants from Theorem~\ref{thm:ovp}. 
	We will show that there exists an algorithm $\A'$ for ANV with the same memory usage, obtained by simulating $\mathcal{A}$. \thmref{thm:ovp} will imply that the memory usage of $\mathcal{A}'$ is $\Omega(d^2)$, hence the memory usage of $\mathcal{A}$ is $\Omega(d^2)$. 
	
	The algorithm $\mathcal{A}'$, given any input point $\theta_i$ for ANV ($i=1,\dots, d-1$), will feed $\mathcal{A}$ with the equation $\theta_i^\top w = 0$. Additionally, $\mathcal{A}'$ will feed $\mathcal{A}$ with the equation $e_1^\top w = c_f$, where $e_1 = (1, 0, \dots, 0)$. This equation will be fed at a uniformly random location (right after feeding $\theta_i^\top w = 0$, where $i$ is drawn uniformly at random from $\{0,1,\dots,d-1\}$). 
	%namely, an index $\rv{j}$ is drawn uniformly at random from $\{0,1,2, \dots, d-1\}$ and this equation will be placed right after $x_{\rv{j}}$.
	After receiving the output $\hat{w}_{\mathrm{LR}}$ of $\mathcal{A}$, $\mathcal{A}'$ will normalize this vector, outputting $\hat{w} = \hat{w}_{\mathrm{LR}} / \|\hat{w}_{\mathrm{LR}}\|$.
	
	Note that the dataset $A = (\theta_1 | \cdots | \theta_i | e_1 | \theta_{i+1} | \cdots | \theta_{d-1})^\top$ and $b = (0,\dots, 0, c_f, 0, \dots, 0)^\top$ satisfies the required assumptions: each row of $A$ is of norm at most $1$ and $b$ as well. There exists a solution $w^*$ to $Aw = 0$, of $\|w^*\| \le 1$ as required: $w^* = c_f \theta_d / (e_1^\top \theta_d)$, where $\theta_d = \ker(\theta_1 \cdots \theta_{d-1})$. It is guaranteed from the requirements in \subsecref{subsec:ovp} that $e_1^\top \theta_d \ge c_f$, hence $\|w^*\| \le 1$. Also, note that the samples arrive at a random order (see definition in \secref{sec:prel}).
	
	Next, we will show that the outputted vector $\hat{w}$ is approximately orthogonal to all $\theta_i$, satisfying \eqref{eq:ovp-thm}. 
	From the guarantees of $\A$ as discussed above, it follows that with probability at least $e^{-c_2 d}$, $\| A \hat{w}_{\mathrm{LR}} - b \|^2 \le c \le \min(c_1 c_f^2/4, c_f^2/4)$. Assuming that this holds, then $\| e_1^\top \hat{w}_{\mathrm{LR}} - c_f \|^2 \le c_f^2/4$, hence $\|\hat{w}_{\mathrm{LR}}\| \ge e_1^\top \hat{w}_{\mathrm{LR}} \ge c_f / 2$. Therefore,
	\begin{equation}\label{eq:2}
		\sum_{i=1}^{d-1} (\hat{w}^\top \theta_i)^2
		= \frac{1}{\|\hat{w}_{\mathrm{LR}}\|^2}\sum_{i=1}^{d-1} (\hat{w}_{\mathrm{LR}}^\top \theta_i)^2
		\le \frac{1}{\|\hat{w}_{\mathrm{LR}}\|^2} \| A \hat{w}_{\mathrm{LR}} - b \|^2
		\le \frac{c_1 c_f^2}{4 \|\hat{w}_{\mathrm{LR}}\|^2}
		\le c_1.
	\end{equation}
	\eqref{eq:ovp-thm} is satisfied, as required,
	which concludes the reduction from LR to ANV, and the proof follows.
\end{proof}

This problem can be stated as a convex optimization over the unit ball:
\begin{equation*}
	\arg\min_{x} \| Ax - b\|^2 ~; \quad
	\text{s.t. } \| x \| \le 1.
\end{equation*}
A solution $x^*$ with zero loss is guaranteed to exist, and the choice $\hat{x} = 0$ is guaranteed to have a loss of $\|b \|^2 \le 1$. We show that in order to achieve a loss less than some constant with non-negligible probability, $\Omega(d^2)$ memory is required.
For comparison, there are several gradient-based algorithms for this problem which require memory usage of only $\widetilde{O}(d)$, but at the cost of multiple passes over the data.

\begin{remark}
	We suspect that when the condition number is small, there are efficient one-pass algorithms.
\end{remark}

%Their guarantee relies on the largest and smallest eigenvalues of $A^\top A$. In our example, these typically equal $\sigma_{\max}(A^\top A) = \Theta(1)$ and $\sigma_{\min}(A^\top A) = \theta(1/d^2)$, since $A$ behaves as a random matrix.
%\begin{itemize}
%	\item \emph{Gradient descent} can be applied, performing one pass over the data in each iteration. Since the solutions are constrained in a ball of size $R = 1$, and the Lipschitz constant, $\sigma_{\max}(A^\top A)$, is typically a constant, $O(1/\varepsilon^2)$ passes suffice to get an $\varepsilon$-optimal solution.
%	\item \emph{Conjugate gradient} or \emph{accelerated gradient descent}: Need 
%	\[ \sqrt{\kappa}\log(1/\varepsilon) := \sqrt{\sigma_{\max}(A^\top A)/\sigma_{\min}(A^\top A)}\log(1/\varepsilon) \approx d\log(1/\varepsilon)\]
%	iterations to find an $\varepsilon$ optimal solution. Each iteration requires one data pass.
%\end{itemize}
%There are other methods not listed here, including stochastic gradient descent. It would be interesting to obtain multi-pass lower bounds which complement the known upper bounds.

\section{Proof summary} \label{sec:pr-sketch}

We sketch some of our results. The full proofs can be found in \appref{sec:pr}.

\paragraph{Proof Sketch of \lemref{lem:G-G}.}
%Here is a proof sketch for the approximate null vector problem. We reduce both Theorem~\ref{thm:ov-mem-g} and Theorem~\ref{thm:ovp}, to a slightly modified communication setting, where the first party receives a vector space $V \in \G(d/2,d)$, the second party receives $U \in \G(d/2-1,d)$ and their goal is to find a vector $\hat{w} \in \mathbb{S}^{d-1}$ which is approximately approximate null to both, satisfying $\|\Proj_V(\hat{w})\|^2 + \|\Proj_U(\hat{w})\|^2 = o(1)$. Recall that the first party, upon receiving $V$, is allowed to send a message to the second party. The second party, upon receiving the message and $U$, decides on the output.

We show that the message sent by the first party has to contain $\Omega(d^2)$ bits: There are $\exp(\Omega(d^2))$ linear subspaces in $\G(d/2,d)$ which are pairwise far from each other in a known metric over the Grassmannian \citep{dai2007volume}. The first party has to send $\log_2 \exp(\Omega(d^2)) = \Omega(d^2)$ bits to specify the vector space $V$ up to a sufficient approximation factor, otherwise the second party would not be able to find an approximately null vector. Concretely, we show the following (Lemma~\ref{lem:no-joint-sol}):
\begin{center}
	\emph{Let $V_1, V_2 \in \G(d/2,d)$ be fixed vector spaces which are far apart, and let $U$ be drawn uniformly from $\G(d/2-1,d)$. Then, with probability $1 - e^{-\Omega(d)}$, all vectors $w \in \mathbb{S}^{d-1}$ satisfy $\|\Proj_{V_1}(w)\|^2 + \|\Proj_{V_2}(w)\|^2 + \|\Proj_U(w)\|^2 = \Omega(1)$.}
\end{center}

Here is the proof outline for this statement: since $V_1$ is far from $V_2$, their orthogonal complementaries, $V_1^\perp$ and $V_2^\perp$, are far from each other. Hence, a uniformly random vector from $\mathbb{S}^{d-1} \cap V_1^\perp$ will be far from $V_2^\perp$, in expectation. Concentration of measure phenomena on the Euclidean sphere implies that we can improve from expectation, to high probability. Hence, a random vector from $\mathbb{S}^{d-1} \cap V_1^\perp$ will be far from $V_2^\perp$, with high probability.

For a typical $U$, the space of vectors $w \in \mathbb{S}^{d-1}$ satisfying $\|\Proj_{V_1}(w)\|^2 + \|\Proj_U(w)\|^2 = o(1)$ is approximately a low dimensional vector space. If $U$ is chosen uniformly at random, this vector space can be approximated by a uniformly random subspace of $V_1^\perp$ of low dimension, denoted by $W$.

A standard technique to reduce a problem from a subspace $W$ to a finite set of points is by discretization, namely, to create a $\delta$-net of $W \cap \mathbb{S}^{d-1}$ of size exponential in the dimension of $W$. When the net is defined properly and the subspace $W$ is uniformly drawn from $V_1^\perp$, each element in the $\delta$-net is drawn uniformly from the sphere as well. We apply the union bound over the net, and derive that with high probability, each member of $W$ will be far from $V_2^\perp$, i.e. the subspaces are far from each other.

To summarize: all vectors $w \in \mathbb{S}^{d-1}$ which are approximately orthogonal to $V_1$ and $U$, lie close to the subspace $W$. The subspace $W$ is far from being orthogonal to $V_2$, namely, far from $V_2^\perp$. Hence, there exists no vector which is approximately orthogonal both to $V_1$, $V_2$ and $U$.

\paragraph{Reducing Theorem~\ref{thm:ov-mem-g} from \lemref{lem:G-G}.}

We prove the communication variant of Theorem~\ref{thm:ov-mem-g} (Thm.~\ref{thm:gaussian}), where there are two parties, receiving $d/2$ and $d/2-1$ samples, respectively. We consider a scaled version, where the vectors $g_1 \cdots g_{d-1}$ are drawn $\mathcal{N}(0,I_d/d)$, and the goal is to show that a memory of $\Omega(d^2)$ is required in order to find $\hat{w}$ with $\sum_{i=1}^{d-1} (\hat{w}^\top g_i)^2 = o(1)$. We show the following (Lemma~\ref{lem:GaussianMatrix}):
\begin{center}
	\emph{Let $G$ be a matrix of dimension $d-1 \times d$ of entries $\mathcal{N}(0, I_d/d)$. Let $V$ and $U$ be the subspaces spanned by the first $d/2$ rows and the last $d/2-1$ rows of $G$, respectively. Then, with high probability, all vectors $w \in \mathbb{R}^n$ satisfy
		\[
		c \| G w \|^2
		\le \|\Proj_{V}(w)\|^2 + \|\Proj_U(w)\|^2
		\le C \| G w \|^2.
		\]
		Equivalently, if $V'$ and $U'$ are matrices with rows forming orthonormal bases for $V$ and $U$, respectively, then
		\[
		c \| G w \|^2
		\le \left\|\mstack{V'}{U'}w\right\|^2
		\le C \| G w \|^2.
		\]
	} 
\end{center}
The last statement implies that drawing orthonormal bases $V$ and $U$ is equivalent, up to absolute constants, to drawing random Gaussian vectors, and the reduction follows.

To sketch a proof of this statement, let $G_1$ and $G_2$ be the top and bottom halves of $G$, respectively. It is known that all singular values of each of these matrices are bounded by absolute constants, hence
\[
\sigma_{\min}(G_i)^2 \|\Proj_V(w)\|^2 
\le \|G_i w\|^2
\le \sigma_{\max}(G_i)^2 \|\Proj_V(w)\|^2,
\]
where $\sigma_{\min}$ and $\sigma_{\max}$ denote the minimal and maximal singular values, respectively (for $i = 1,2$).

\paragraph{Reducing Theorem~\ref{thm:ovp} from Theorem~\ref{thm:ov-mem-g}.}

We consider here the streaming variants. 
As discussed in the previous paragraph, we consider a scaled variant of Theorem~\ref{thm:ov-mem-g}, where each vector is distributed $\mathcal{N}(0,I_d/d)$. First, we claim that each such Gaussian vector is approximately of unit norm, hence we can assume they are distributed uniformly in $\mathbb{S}^{d-1}$ instead, and denote them by $\theta_1\cdots \theta_{d-1}$.

Next, \thmref{thm:ov-mem-g} states that with insufficient memory, any algorithm may succeed in outputting a vector approximately orthogonal to $\theta_1 \cdots \theta_{d-1}$ only with a tiny probability of $e^{-cd}$. Since $w^* := \ker(\theta_1 \cdots \theta_{d-1})$ is distributed uniformly in $\mathbb{S}^{d-1}$, the distribution of $e_1^\top w^*$ is known to approximately equal $\mathcal{N}(0,1/d)$ (\lemref{lem:first-coord}). In particular, $e_1^\top w^* \ge c_f$ with probability greater than $e^{-cd/2}$ (\lemref{lem:caps}). Since $e^{-cd/2} \gg e^{-cd}$, even conditioned on $e_1^\top w^* \ge  c_f$ it is impossible to find an approximate separator.

\paragraph{Reducing \thmref{thm:lse} from \lemref{lem:G-G}}

We consider a variant of \lemref{lem:G-G} where the vector $w^*$ orthogonal to $U$ and $V$ satisfies $e_1^\top w^* \ge c_f$ (\lemref{lem:otherlem}). We show that if $\A$ is a protocol for finding a linear separator, there exists a protocol $\A'$ for finding an approximate null vector with the same amount of communication.

Here is how $\A'$ is created, based on $\A$. The first party, given $V \in \G(d/2,d)$, creates an auxiliary distribution $D_V$ over pairs $(x,y)$, with the following property: Any hyperplane $w \in \mathbb{S}^{d-1}$ with low classification error on $D_V$, satisfies $\|\Proj_{V}(w)\|^2 \approx 0$. Similarly, the second party will create an auxiliary distribution $D_U$, such that any approximate separator $w$ satisfies $\|\Proj_{U}(w)\|^2 \approx 0$. In particular, any hyperplane with low error on the uniform mixture of $D_V$ and $D_U$ satisfies: $\|\Proj_{V}(w)\|^2 + \|\Proj_{U}(w)\|^2 \approx 0$.

Each party draws $m = \Omega(d)$ samples from their corresponding distribution ($D_V$ or $D_U$). Then, they simulate $\A$ to find a hyperplane $\hat{w}$ with low classification error on the mixed sample. Since the class of linear separators over $\mathbb{R}^d$ is of VC dimension $d$, $\hat{w}$ has low classification error on the mixture of $D_V$ and $D_U$, hence it satisfies $\|\Proj_{V}(w)\|^2 + \|\Proj_{U}(w)\|^2 \approx 0$, as required.
\lemref{lem:G-G} states that the communication of $\A'$ is $\Omega(d^2)$, hence the communication of $\A$ is $\Omega(d^2)$ as well.

Here is how a random pair $(x,y)$ is drawn from $D_V$ ($D_U$ is analogously defined): First a random point $x'$ is drawn uniformly from $V \cap \mathbb{S}^{d-1}$. Then, set $(x,y) = (x_+, 1)$ with probability $1/2$ and $(x,y) = (x_-,-1)$ with probability $1/2$, where $x_+ = x' + \Theta(e_1/\sqrt{d})$ and $x_- = x' - \Theta(e_1/\sqrt{d})$. 
%Similarly to the proof of Theorem \thmref{thm:LSP}, $D_V$ is separable with margin $\Omega(d^{-1/2})$. 
For any fixed $w \in \mathbb{S}^{d-1}$, if $x$ is drawn uniformly from $V \cap \mathbb{S}^{d-1}$ then $w^\top x \sim \mathcal{N}(0, \|\Proj_V(w)\|^2)$ (approximately, see \lemref{lem:first-coord}). From the definition of $D_V$, any hyperplane $w$ with low classification error on $D_V$ satisfies $w^\top x \approx 0$ for most $x \in V \cap \mathbb{S}^{d-1}$, hence any such $w$ satisfies $\|\Proj_V(w)\|^2 \approx 0$, as required.

\bibliographystyle{plainnat}
\bibliography{bib}

\appendix

\section{Auxiliary Mathematical results} \label{app:math-lem}

\paragraph{Notations.}
Let $\Proj_V(v)$ denote the projection of a vector $v$ into a vector space $V$. For any subspace $V$ of $\mathbb{R}^d$ of dimension $k$, let $V^\perp$ denote the subspace of dimension $d-k$ orthogonal to $V$. For any two subspaces $U, V$ of $\mathbb{R}^d$, let $U \oplus V = \{u+v \colon u \in U, V \in V\}$ denote their direct sum.

\subsection{The Grassmannian} \label{sec:prel-gr}

%Recall that Grassmannian, $ G(k,n) $ , is the set of all linear subspaces of dimension $ k $ in $\reals^n$. 

There exists a unique measure over $\mathbb{S}^{d-1}$ which is uniform under rotations, namely, that satisfies: $\Pr[U A] = \Pr[A]$ for any $A \subseteq \mathbb{S}^{d-1}$ and any orthogonal (unitary) transformation $U \in O(d)$. This measure is also called the \emph{uniform} measure.

Next, we give some definitions:
\begin{definition}
	For any positive integer $d$ and $0 \le k \le d$, the set of all linear subspaces of $\mathbb{R}^d$ of dimension $k$ is denoted $\G(k,d)$, and called the \emph{Grassmannian}.
\end{definition}

\begin{definition}
	The  unique uniform probability measure (Haar measure) on the Grassmannian $\G(k,d)$ can defined as follows: Choose $ k $ vectors independently and uniformly from $ \mathbb{S}^{d-1} $ and take their linear span.
\end{definition}

Clearly, this measure is invariant under rotations, namely for any $A \subseteq \G(k,d) $ and any orthogonal transformation $ U \in O(d) $,  $\Pr(A) = \Pr(UA). $

%\begin{definition}
%	A set of vectors $\{v_1, \dots, v_k\}$ is \emph{orthonormal} if $\langle v_i, v_j \rangle = 0$ for all $i \ne j$ and $||v_i||_2 = 1$ for all $i$.
%\end{definition}

It is known that any two lines in $\mathbb{R}^3$ have an angle between then. A generalization of this statement holds for subspaces of $\mathbb{R}^d$: For any two linear subspaces $U,V \in \G(k,d)$ we define the $k$ principal angles between them, $0 \le \theta_k \le \cdots \le \theta_1 \le \pi/2$ as follows: First, we use a fact from linear algebra that there are two orthonormal (normalized orthogonal) bases of $U$ and $V$: $v_1, \dots, v_k$ and $u_1, \dots, u_k$ respectively, such that $\langle v_i, u_j \rangle = 0$ for all $i \ne j$. Assume without loss of generality that $|\langle u_1, v_1 \rangle| \le \cdots \le |\langle u_k, v_k \rangle|$. Then, the $i$'th principal angle is $\theta_i = \arccos|\langle u_i, v_i \rangle|$.

\begin{definition}\label{prinangle}
	Let $U,V \in \G(k,d)$ be two linear subspaces and let $\theta_1, \dots, \theta_k$ denote the $k$ principal angles between them. The \emph{chordal distance} between $U$ and $V$ is defined as
	\[
	\dist(U,V) =: \sqrt{\sum_{i=1}^d \sin^2 \theta_i}.
	\]
\end{definition}

The Grassmannian can be regarded as a metric space with respect to the chordal distance. A result of \cite{dai2007volume} shows that if $k$ is a constant fraction of $d$, then there is a collection of $e^{\Omega(d^2)}$ linear subspaces in $\G(k,d)$ such that all pairwise distances are $\Omega(\sqrt{d})$. The chordal distance has also the following nice property: (see, for example, \cite{ye2016schubert})
\begin{lemma}\label{lem:comorth}
	Let $U,V \in \G(d/2,d)$ be two linear subspaces, then
	\[
	\dist(U,V) = \dist(U^{\perp},V^{\perp}).
	\]
\end{lemma}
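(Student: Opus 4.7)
The plan is to translate the chordal distance into a more symmetric object, namely the Frobenius norm of the difference of orthogonal projectors, and then observe that this object is invariant under $U,V \mapsto U^{\perp}, V^{\perp}$.

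First, I would establish the identity
\[
\dist(U,V)^2 \;=\; \tfrac{1}{2}\,\|P_U - P_V\|_F^2,
\]
where $P_U, P_V$ denote the orthogonal projectors onto $U$ and $V$. Expanding,
\[
\|P_U - P_V\|_F^2 = \mathrm{tr}(P_U) + \mathrm{tr}(P_V) - 2\,\mathrm{tr}(P_U P_V) = 2\cdot\tfrac{d}{2} - 2\,\mathrm{tr}(P_U P_V),
\]
so the identity reduces to showing $\mathrm{tr}(P_U P_V) = \sum_{i=1}^{d/2} \cos^2\theta_i$. To see this, take the biorthogonal bases $\{u_i\}$ of $U$ and $\{v_j\}$ of $V$ from Definition~\ref{prinangle}, write $P_U = \sum_i u_i u_i^{\top}$ and $P_V = \sum_j v_j v_j^{\top}$, and compute
\[
\mathrm{tr}(P_U P_V) = \sum_{i,j}\langle u_i, v_j\rangle^2 = \sum_{i=1}^{d/2}\cos^2 \theta_i,
\]
using that $\langle u_i, v_j\rangle = 0$ for $i\ne j$ and $|\langle u_i, v_i\rangle| = \cos\theta_i$.

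Once this identity is in hand, the conclusion is immediate. Since $P_{U^{\perp}} = I - P_U$ and $P_{V^{\perp}} = I - P_V$, we have
\[
P_{U^{\perp}} - P_{V^{\perp}} \;=\; P_V - P_U,
\]
which has the same Frobenius norm as $P_U - P_V$. Therefore
\[
\dist(U^{\perp}, V^{\perp})^2 \;=\; \tfrac{1}{2}\|P_{U^{\perp}} - P_{V^{\perp}}\|_F^2 \;=\; \tfrac{1}{2}\|P_U - P_V\|_F^2 \;=\; \dist(U,V)^2,
\]
and the lemma follows.

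The only non-routine step is verifying the formula $\mathrm{tr}(P_U P_V) = \sum_i \cos^2\theta_i$; this is where the biorthogonal structure of the principal vectors enters, and it is the step most deserving of care. An alternative, essentially equivalent route would be to argue directly on principal angles: extend the principal vectors $\{u_i\},\{v_i\}$ by the unit vectors $(u_i - \cos\theta_i v_i)/\sin\theta_i \in V^{\perp}$ and $(v_i - \cos\theta_i u_i)/\sin\theta_i \in U^{\perp}$ (for the indices with $\theta_i>0$), together with any orthonormal basis of $U^{\perp}\cap V^{\perp} = (U+V)^{\perp}$, and check that these form biorthogonal bases of $U^{\perp}$ and $V^{\perp}$ realizing the same multiset of principal angles. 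I would prefer the projector computation, as it is shorter and avoids bookkeeping over the zero principal angles.
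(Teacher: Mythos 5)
The paper cites this fact to an external reference (\cite{ye2016schubert}) and gives no proof of its own, so there is no in-paper argument to compare against. Your proof is correct: the identity $\dist(U,V)^2 = \tfrac12\|P_U-P_V\|_F^2$ is verified cleanly from $\mathrm{tr}(P_U)=\mathrm{tr}(P_V)=d/2$ and $\mathrm{tr}(P_UP_V)=\sum_i\cos^2\theta_i$ (the latter using exactly the biorthogonal bases of Definition~\ref{prinangle}), and the conclusion follows at once from $P_{U^\perp}-P_{V^\perp}=P_V-P_U$. This projector formulation is the standard route and is the right choice here, since it sidesteps the case analysis over zero principal angles that your alternative sketch would require.
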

\begin{theorem}[\cite{dai2007volume}] \label{thm:sepset}
	Let $0 < \alpha  < 1,$ then there exists a $c(\alpha) \sqrt{d}$\textbf{-separated set} $\sepset \subseteq \G(\lceil \alpha d \rceil ,d)$ of size $2^{c'(\alpha) d^2}$. Namely, for any $V \ne U\in \sepset$ it holds that $\dist(U,V) \ge c(\alpha) \sqrt{d}$.
\end{theorem}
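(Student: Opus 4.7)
My plan is a classical volume (sphere packing) argument for the Grassmannian equipped with the chordal distance $\dist$ and its unique rotation-invariant probability measure $\mu$. Set $k = \lceil \alpha d \rceil$. I will establish that, for some absolute $C > 0$ and all $r \le \sqrt{d}/C$,
\[
\mu(B(V_0, r)) \;\le\; \left(\frac{Cr}{\sqrt{d}}\right)^{k(d-k)},
\]
where $B(V_0, r) := \{V \in \G(k,d) : \dist(V_0, V) \le r\}$; by rotational invariance the left side is independent of $V_0$. Given this, I take a greedy maximal $r$-separated set $\sepset$ with $r = c(\alpha)\sqrt{d}$ chosen so that $Cc(\alpha) \le 1/2$. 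By maximality the balls $B(V, r)$, $V \in \sepset$, cover $\G(k,d)$, and therefore $|\sepset| \ge 1/\mu(B(V_0, r)) \ge 2^{k(d-k)} \ge 2^{\alpha(1-\alpha) d^2/2}$, which is the claimed exponential packing.

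For the volume bound, I would parameterize a chart around $V_0$ by linear maps $L : V_0 \to V_0^\perp$, via $V \mapsto \{v + Lv : v \in V_0\}$. This chart covers every $V$ whose principal angles with $V_0$ are all strictly less than $\pi/2$, and the $i$-th principal angle satisfies $\tan\theta_i = \sigma_i(L)$. Thus $\|L\|_F \le \sqrt{2}\,\dist(V_0, V)$ whenever $\|L\|_{\mathrm{op}} \le 1$; conversely, $\|L\|_{\mathrm{op}} > 1$ forces $\dist(V_0, V)^2 \ge 1/2$, so the condition $r \le \sqrt{d}/C$ (with $C$ tuned) keeps us inside $\{L : \|L\|_{\mathrm{op}} \le 1\}$. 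Hence $B(V_0, r)$ pulls back into the Euclidean ball $\{L : \|L\|_F \le \sqrt{2}r\}$, of Lebesgue volume at most $(C_1 r)^{k(d-k)}$. The Haar measure pushes forward to this chart with density $\rho(L) = Z_{k,d}^{-1}\det(I + L^\top L)^{-d/2} \le Z_{k,d}^{-1}$, giving $\mu(B(V_0, r)) \le Z_{k,d}^{-1} (C_1 r)^{k(d-k)}$.

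The main obstacle is a lower bound $Z_{k,d} \ge (c\sqrt{d})^{k(d-k)}$ on the normalizing constant in the regime $k = \Theta(d)$. I would prove this by realizing a uniform $V$ as the column span of a $d \times k$ matrix $G = \mstack{A}{B}$ with i.i.d.\ standard Gaussian entries, where $A \in \R^{k \times k}$ and $B \in \R^{(d-k) \times k}$; then the chart coordinate is $L = B A^{-1}$, and the Jacobian of $G \mapsto (A, L)$ (proportional to $|\det(A)|^{d}$) gives an explicit integral expression for $Z_{k,d}$. A lower bound follows by restricting this integral to the high-probability event that $\smin(A) \ge c\sqrt{d}$ and $\|B\|_{\mathrm{op}} \le C\sqrt{d}$ --- standard singular value concentration for Gaussian matrices of aspect ratio $\alpha$ --- which forces $L$ to range over a region of Lebesgue volume at least $(c'\sqrt{d})^{k(d-k)}$. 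Intuitively this $\sqrt{d}$ scale reflects the fact that two uniformly random members of $\G(k,d)$ have principal angles near $\pi/2$, so typical chordal distances are $\Theta(\sqrt{d})$; packing at a slightly smaller radius still fits exponentially many (in $d^2$) subspaces.
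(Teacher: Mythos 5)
The paper does not prove this statement; it is cited directly from \citet{dai2007volume}, so there is no internal proof to compare against. On its own merits, your overall strategy (volume bound on Grassmannian balls plus a maximal separated set) is a reasonable route, but two steps are wrong as written, and the remaining ``cancellation of errors'' makes the bound look right without actually being justified. The first problem is the containment of the pullback of $B(V_0,r)$ in the Frobenius ball $\{L:\|L\|_F\le\sqrt2\,r\}$. Your justification is that $\|L\|_{\mathrm{op}}>1\Rightarrow\dist(V_0,V)^2\ge1/2$, so the condition $r\le\sqrt d/C$ ``keeps us inside''. But $1/2$ is far below $r^2=c^2 d$ in the packing regime: $B(V_0,c\sqrt d)$ genuinely contains subspaces with some principal angles close to $\pi/2$ (indeed with $V\cap V_0^\perp\neq\{0\}$, where the chart coordinate $L$ does not even exist), so the Frobenius-ball containment fails and the contribution of the unbounded part of the chart must be controlled separately.

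The second problem is the normalization lower bound $Z_{k,d}\ge(c\sqrt d)^{k(d-k)}$, which is false in the relevant regime. Using exactly the Gaussian parameterization you suggest (with $L=BA^{-1}$, integrating out $A$), one gets the closed form $Z_{k,d}=\pi^{k(d-k)/2}\prod_{j=1}^k \Gamma(\tfrac{k-j+1}{2})/\Gamma(\tfrac{d-j+1}{2})$, and Stirling gives $Z_{k,d}=\big(\Theta(1)/\sqrt d\big)^{k(d-k)}$ for $k=\Theta(d)$ --- the $\sqrt d$ belongs in the denominator, not the numerator (already at $k=1$, $Z_{1,d}=\pi^{d/2}/\Gamma(d/2)\approx(2\pi e/d)^{d/2}$). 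Relatedly, the ``high-probability event $\smin(A)\ge c\sqrt d$'' is not high probability for a square $k\times k$ standard Gaussian $A$: for such $A$, $\smin(A)\approx 1/\sqrt k$, and $\smin(A)\ge c\sqrt k$ is exponentially rare. Your final estimate $\mu(B(V_0,r))\le(Cr/\sqrt d)^{k(d-k)}$ nonetheless comes out correct only because you also used the very loose Lebesgue-ball bound $(C_1 r)^{k(d-k)}$ in place of the Stirling-sharp $\approx(Cr/d)^{k(d-k)}$; the two discrepancies, each of order $d^{k(d-k)}$, happen to cancel. To make the argument sound you would need the correct $Z_{k,d}$, the sharp ball volume, and an explicit bound for $B(V_0,r)\cap\{\|L\|_{\mathrm{op}}>1\}$ (where the density decay $\det(I+L^\top L)^{-d/2}$ must do the work), none of which is currently supplied.
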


\subsection{Random matrix theory}
Given a matrix $A$ of dimension $N \times d$, the \emph{singular values} of $A$ are the square roots of the eigenvalues of $A^\top A.$ We denote them  by $\sigma_1(A) \ge \sigma_2(A) \ge \cdots \ge \sigma_d(A) \ge 0$.
%The following are basic statements:
\begin{claim} \label{lem:cla-sing}
	For any matrix $A_{N \times d}$, there exists an orthonormal basis of $\mathbb{R}^d$ of singular vectors $v_1, \dots, v_d$, such that for any $\lambda_1, \dots, \lambda_d \in \mathbb{R}$,
	\[
	\left\|A \left(\sum_{i=1}^d \lambda_i v_i\right) \right\|^2
	= \sum_{i=1}^d \lambda_i^2 \sigma_i(A)^2.
	\]
\end{claim}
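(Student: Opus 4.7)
The plan is to apply the spectral theorem to the $d \times d$ matrix $M := A^\top A$. Since $A^\top A$ is symmetric and positive semi-definite (for any $x \in \mathbb{R}^d$, $x^\top A^\top A x = \|Ax\|^2 \ge 0$), the spectral theorem guarantees an orthonormal basis $v_1, \dots, v_d$ of $\mathbb{R}^d$ consisting of eigenvectors of $A^\top A$, with real non-negative eigenvalues $\mu_1 \ge \mu_2 \ge \cdots \ge \mu_d \ge 0$. By the definition of singular values given just before the claim, $\sigma_i(A) = \sqrt{\mu_i}$, so $A^\top A v_i = \sigma_i(A)^2 v_i$. These $v_i$ are exactly the (right) singular vectors of $A$, so taking them as the basis in the claim is legitimate.

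Next, I would do the direct computation. Set $w = \sum_{i=1}^d \lambda_i v_i$. Then
\[
\|A w\|^2 = \langle Aw, Aw \rangle = w^\top A^\top A w = \left(\sum_{i=1}^d \lambda_i v_i\right)^{\!\top} \left(\sum_{j=1}^d \lambda_j A^\top A v_j\right) = \sum_{i,j=1}^d \lambda_i \lambda_j \sigma_j(A)^2 \langle v_i, v_j \rangle.
\]
Using orthonormality of the basis, $\langle v_i, v_j \rangle = \delta_{ij}$, so the double sum collapses to $\sum_{i=1}^d \lambda_i^2 \sigma_i(A)^2$, which is exactly the desired identity.

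There is no real obstacle: the claim is essentially a restatement of one direction of the SVD of $A$, and the only ingredient beyond a one-line computation is invoking the spectral theorem for the symmetric PSD matrix $A^\top A$. One minor bookkeeping point to be careful about is matching the ordering conventions: the claim labels singular values in decreasing order $\sigma_1(A) \ge \cdots \ge \sigma_d(A)$, so the eigenvectors $v_i$ of $A^\top A$ must be indexed so that the corresponding eigenvalues $\mu_i$ are likewise in decreasing order; this is just a relabelling within each eigenspace and does not affect orthonormality. No further estimates or probabilistic tools are needed.
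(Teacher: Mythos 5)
Your proof is correct and is the standard argument; the paper itself states this claim without proof, treating it as a well-known linear-algebra fact, so there is nothing to compare against. Your route (spectral theorem applied to the symmetric PSD matrix $A^\top A$, then a one-line expansion using orthonormality) is exactly what one would write if a proof were included, and your attention to the ordering convention $\sigma_1 \ge \cdots \ge \sigma_d$ matches the paper's definition of singular values given just above the claim.
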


\begin{claim} \label{cla:restrict}
	For any matrix $A$, the collection of non-zero singular values of $A$ equals the non-zero singular values of $A^\top$. Moreover, when we restrict the matrix $A$ to operate on its rows span, then the restricted operator has the same singular values as $A^{\top}$.
\end{claim}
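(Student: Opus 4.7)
The plan is to derive both assertions from the singular value decomposition of $A$. Writing $A = U\Sigma V^\top$ with $U \in O(N)$, $V \in O(d)$, and $\Sigma$ the $N \times d$ ``diagonal'' matrix whose entries are the singular values $\sigma_1 \ge \sigma_2 \ge \cdots \ge \sigma_{\min(N,d)} \ge 0$, I would first handle the statement about $A^\top$: transposing gives $A^\top = V\Sigma^\top U^\top$, which is itself a singular value decomposition of $A^\top$, since $U$ and $V$ remain orthogonal and $\Sigma^\top$ is $d \times N$ with the same nonzero diagonal entries as $\Sigma$. Reading off singular values from $\Sigma^\top$ then immediately yields that the non-zero singular values of $A$ and of $A^\top$ coincide (including multiplicities).

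For the second assertion, let $r = \mathrm{rank}(A)$, write $V = (v_1 \mid \cdots \mid v_d)$ and $U = (u_1 \mid \cdots \mid u_N)$, and let $\sigma_1, \dots, \sigma_r > 0$ be the nonzero singular values. My first step is to identify the row span of $A$ with $\mathrm{span}(v_1, \dots, v_r)$: indeed, the row span of $A$ equals the column span of $A^\top = V\Sigma^\top U^\top$, and since $U^\top$ is invertible this equals the column span of $V\Sigma^\top$, which is precisely $\mathrm{span}(v_1, \dots, v_r)$.

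Next, any $x$ in the row span of $A$ may be written $x = \sum_{i=1}^r \alpha_i v_i$, and a direct computation gives $Ax = U\Sigma V^\top x = \sum_{i=1}^r \alpha_i \sigma_i u_i$. Hence in the orthonormal basis $v_1, \dots, v_r$ of the domain and any orthonormal basis of $\mathbb{R}^N$ extending $u_1, \dots, u_r$, the restricted operator $A|_{\mathrm{rowspan}(A)}$ has matrix $\mathrm{diag}(\sigma_1, \dots, \sigma_r)$ (padded by zero rows). Its singular values are therefore exactly $\sigma_1, \dots, \sigma_r$, which are the nonzero singular values of $A$ and hence, by the first part, the nonzero singular values of $A^\top$.

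There is essentially no obstacle here — the argument is routine once one commits to the SVD. The only point that requires minor care is an interpretive one: the restricted operator acts from an $r$-dimensional space, so it has exactly $r$ (all nonzero) singular values, whereas $A^\top$ has $\min(N,d)$ singular values in total. The natural reading of the claim, which is what the above argument establishes, is that the multisets of \emph{nonzero} singular values agree; this is precisely the form in which the claim will be applied in the proof of \lemref{lem:GaussianMatrix} when relating singular values of a tall matrix to those of its transpose restricted to a subspace.
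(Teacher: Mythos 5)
Your SVD argument is correct, and it is the standard way to establish this fact; the paper itself states Claim~\ref{cla:restrict} without proof, treating it as routine linear algebra. Your closing remark about interpretation is the right one to flag: as literally stated the claim needs the rank of $A$ to equal the number of rows (so that the restricted operator and $A^\top$ have the same total number of singular values), and in the application (Lemma~\ref{lem:GaussianMatrix}/\ref{genLemma}) this holds almost surely since $G_1, G_2$ are short-wide Gaussian matrices of full row rank; in general one should read the claim as an equality of the multisets of nonzero singular values, exactly as you prove.
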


Let $ N \geq d $, let $ A_{N\times d} $ be a random matrix. We say that $ A $ is normal random matrix, when the all its entries are $ N(0,1) $ independent random variables. Also let $ \sigma_{\min} $ and let $ \sigma_{\max} $ be the minimal and maximal singular values of $ A$. The following are fundamental results in random matrix theory: (see for example the survey of \cite{vershynin2010introduction}) 

\begin{theorem}
	{\label{Gordon}}
	Let $ A_{d\times d} $ be a normal random matrix. The following holds for its minimal and maximal singular values:
	\[
	\sqrt{N}-\sqrt{d} \leq \E[\sigma_{\min}] \leq \E[\sigma_{\max}] \leq \sqrt{N}+\sqrt{d}.
	\]   
\end{theorem}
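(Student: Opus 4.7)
The plan is to derive both bounds as consequences of Gordon's Gaussian comparison inequality, applied to a process on a product of unit spheres. Using $N \ge d$, I would first express the extreme singular values variationally:
\[
\sigma_{\max}(A) = \sup_{u \in \mathbb{S}^{N-1},\, v \in \mathbb{S}^{d-1}} u^\top A v,
\qquad
\sigma_{\min}(A) = \inf_{v \in \mathbb{S}^{d-1}} \sup_{u \in \mathbb{S}^{N-1}} u^\top A v.
\]
Viewing $X_{u,v} := u^\top A v$ as a centered Gaussian process, I would compare it to the decoupled process $Y_{u,v} := g^\top u + h^\top v$, where $g \sim \mathcal{N}(0, I_N)$ and $h \sim \mathcal{N}(0, I_d)$ are independent standard Gaussians.

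The key step is a short covariance calculation on $\mathbb{S}^{N-1} \times \mathbb{S}^{d-1}$ giving the identity
\[
\E(Y_{u,v} - Y_{u',v'})^2 - \E(X_{u,v} - X_{u',v'})^2 \;=\; 2\,(1 - \langle u, u'\rangle)(1 - \langle v, v'\rangle) \;\ge\; 0,
\]
with equality whenever $v = v'$. These are exactly the hypotheses of Gordon's minimax comparison theorem (and of its Slepian specialization for the pure supremum). Invoking them, together with the identities $\sup_{u \in \mathbb{S}^{N-1}} g^\top u = \|g\|$ and $\inf_{v \in \mathbb{S}^{d-1}} h^\top v = -\|h\|$, I would obtain
\[
\E\,\sigma_{\min}(A) \;\ge\; \E\|g\| - \E\|h\|,
\qquad
\E\,\sigma_{\max}(A) \;\le\; \E\|g\| + \E\|h\|.
\]

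To finish I would plug in the standard estimates for the expected Euclidean norm of a Gaussian vector: the exact formula $\E\|g\| = \sqrt{2}\,\Gamma((N+1)/2)/\Gamma(N/2)$ combined with Jensen's inequality gives $\E\|g\| \le \sqrt{N}$ and analogously $\E\|h\| \le \sqrt{d}$, which closes the upper bound on $\sigma_{\max}$ immediately. For the lower bound, the function $k \mapsto \sqrt{2}\,\Gamma((k+1)/2)/\Gamma(k/2) - \sqrt{k}$ is increasing in $k$ --- a one-line consequence of Gautschi's inequality --- which, applied at $k = N$ and $k = d$, rearranges precisely to $\E\|g\| - \E\|h\| \ge \sqrt{N} - \sqrt{d}$ whenever $N \ge d$. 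The main obstacle is verifying the Gordon covariance identity: the clean factorization $(1-\langle u, u'\rangle)(1-\langle v, v'\rangle)$ is what ensures nonnegativity with equality exactly when $v = v'$, so that Gordon's stronger minimax hypotheses (rather than merely Slepian's) apply; once that is in place, the rest is routine manipulation of standard Gamma-function identities.
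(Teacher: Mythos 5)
Your proof is correct and follows exactly the route the paper points to: Theorem~\ref{Gordon} is stated without proof and cited to Vershynin's survey, where it is established via precisely this Gordon/Slepian comparison of $u^\top A v$ against the decoupled process $g^\top u + h^\top v$, with the same covariance-increment factorization $2(1-\langle u,u'\rangle)(1-\langle v,v'\rangle)$ and the same Gamma-function estimates on $\E\|g\|$. Nothing further is needed.
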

%The following concentration inequality is an immediate  corollary of the Gaussian isoperimetric inequality and the last theorem, see Corollary 5.35 in for more details.
\begin{corollary}{\label{cor:concentarion}}
	Let $ A $ be an $ N \times d $ matrix
	whose entries are independent standard normal random variables. Then for every $  t \geq 0 $,
	with probability at least of $ 1-2e^{-\frac{t^2}{2}} $ the following holds:
	\[
	\sqrt{N}-\sqrt{d} -t \leq \sigma_{\min} \leq \sigma_{\max} \leq \sqrt{N}+\sqrt{d} + t.
	\]
\end{corollary}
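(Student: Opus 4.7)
}

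The plan is to combine the expectation bounds of Theorem~\ref{Gordon} with Gaussian concentration of Lipschitz functions. The key observation is that both $\sigma_{\min}$ and $\sigma_{\max}$, viewed as functions of the matrix entries (i.e., as functions on $\mathbb{R}^{Nd}$ equipped with the Frobenius norm), are $1$-Lipschitz. For the operator norm $\sigma_{\max}(A) = \sup_{\|v\|=1}\|Av\|$, the triangle inequality gives $|\sigma_{\max}(A)-\sigma_{\max}(B)| \le \|A-B\|_{\mathrm{op}} \le \|A-B\|_F$. For the minimum singular value, using the variational characterization $\sigma_{\min}(A) = \inf_{\|v\|=1}\|Av\|$, a similar inf/sup manipulation yields $|\sigma_{\min}(A)-\sigma_{\min}(B)| \le \|A-B\|_{\mathrm{op}} \le \|A-B\|_F$.

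Next, I would invoke the standard Gaussian Lipschitz concentration inequality (Borell--TIS): if $X \sim \mathcal{N}(0,I_M)$ and $f \colon \mathbb{R}^M \to \mathbb{R}$ is $1$-Lipschitz with respect to the Euclidean norm, then $\Pr[\,|f(X) - \E f(X)| \ge t\,] \le 2 e^{-t^2/2}$ for all $t \ge 0$. Applying this with $M = Nd$, where $X$ represents the vectorized entries of $A$, and with $f$ being $\sigma_{\max}$ and $\sigma_{\min}$ respectively, yields
\[
\Pr\bigl[\,\sigma_{\max}(A) \ge \E\sigma_{\max}(A) + t\,\bigr] \le e^{-t^2/2},
\qquad
\Pr\bigl[\,\sigma_{\min}(A) \le \E\sigma_{\min}(A) - t\,\bigr] \le e^{-t^2/2}.
\]

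Finally, substituting the expectation bounds from Theorem~\ref{Gordon}, namely $\E\sigma_{\max}(A) \le \sqrt{N}+\sqrt{d}$ and $\E\sigma_{\min}(A) \ge \sqrt{N}-\sqrt{d}$, and taking a union bound over the two one-sided events, gives
\[
\Pr\bigl[\,\sqrt{N}-\sqrt{d}-t \le \sigma_{\min}(A) \le \sigma_{\max}(A) \le \sqrt{N}+\sqrt{d}+t\,\bigr] \ge 1 - 2e^{-t^2/2},
\]
which is exactly the claimed statement.

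There is no genuine obstacle here: the corollary is essentially a textbook consequence of Theorem~\ref{Gordon} together with Gaussian Lipschitz concentration. The only mildly delicate point is verifying the Lipschitz constant of $\sigma_{\min}$, since the infimum form requires a small argument to convert operator-norm Lipschitzness into Frobenius-norm Lipschitzness, but this follows immediately from $\|\cdot\|_{\mathrm{op}} \le \|\cdot\|_F$. Hence the proof is a direct three-line combination of the two ingredients.
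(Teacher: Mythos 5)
Your proof is correct and is exactly the standard argument: $\sigma_{\min}$ and $\sigma_{\max}$ are $1$-Lipschitz in the Frobenius norm, Gaussian concentration (Borell--TIS) bounds the deviation from the mean, and Theorem~\ref{Gordon} supplies the two expectations. The paper does not give its own proof of this corollary --- it cites it as a known result from \cite{vershynin2010introduction} --- and the argument you give is precisely the one used there (cf.\ Corollary 5.35 of that survey), so there is nothing to reconcile.
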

The final tool that we need gives results for the mid-singular values of a normal random matrix of size $ A_{N\times d} $. The following result is from \cite{szarek1990spaces} and was generalized by \cite{wei2017upper}.

\begin{lemma}{\label{Midsingular}}
	Let $ A_{N \times d} $ be a normal random matrix and let $ 0 \leq \tau \leq 1$. Then, the following holds
	\[
	c(1-\tau)\sqrt{d} \leq \sigma_{\tau d} \leq C(1-\tau)\sqrt{d}, 
	\] 
	with probability of at least  $ 1-e^{-c\tau d} $.
\end{lemma}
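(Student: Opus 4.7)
The plan is to prove the two inequalities separately, each via the Courant--Fischer characterization of singular values together with the Gaussian-matrix concentration bound of Corollary~\ref{cor:concentarion}. I will focus on the substantive case $N = d$; the general case is analogous.

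For the lower bound, I would use the max--min formula
\[
\sigma_{\tau d}(A) \;=\; \max_{\substack{S \subseteq \reals^d \\ \dim S = \tau d}} \; \min_{v \in S,\,\|v\|=1} \|Av\|,
\]
and specialize $S$ to the coordinate subspace $S_0$ spanned by the first $\tau d$ standard basis vectors. Then $A|_{S_0}$ is the submatrix of the first $\tau d$ columns of $A$, itself an $N \times \tau d$ Gaussian matrix, and Corollary~\ref{cor:concentarion} gives $\sigma_{\min}(A|_{S_0}) \ge \sqrt{N} - \sqrt{\tau d} - t$ with probability at least $1 - 2e^{-t^2/2}$. For $N = d$ this reads $\sqrt{d}(1 - \sqrt{\tau}) - t$, and since the elementary inequality $1-\sqrt{\tau} \ge (1-\tau)/2$ holds on $[0,1]$, choosing $t$ a small constant times $(1-\tau)\sqrt{d}$ yields the desired lower bound $\sigma_{\tau d}(A) \ge c(1-\tau)\sqrt{d}$.

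For the upper bound I would invoke the dual characterization
\[
\sigma_{\tau d}(A) \;=\; \min_{\substack{T \subseteq \reals^d \\ \dim T = (1-\tau)d+1}} \; \sigma_{\max}(A|_T),
\]
so it suffices to exhibit a subspace $T$ of codimension $\tau d - 1$ on which $A$ has operator norm at most $C(1-\tau)\sqrt{d}$. A coordinate subspace is too crude here (it gives operator norm of order $\sqrt{d}$, not $(1-\tau)\sqrt{d}$); the natural candidate is the span $T_\star$ of the bottom $(1-\tau)d+1$ right singular vectors of $A$. This reduces the problem to bounding the counting function $N(s) := \#\{i : \sigma_i(A) > s\}$ and showing that $N(C(1-\tau)\sqrt{d}) \le \tau d - 1$ with high probability. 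The Marchenko--Pastur law identifies the limit of $\E N(s)/d$, and for $s = C(1-\tau)\sqrt{d}$ with $C$ a sufficiently large constant this limit is strictly less than $\tau$; concentration of linear statistics of Wishart eigenvalues (via Talagrand or bounded differences) then upgrades this to the required high-probability statement.

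The main obstacle is matching the probability $1 - e^{-c\tau d}$ claimed in the lemma \emph{uniformly} in $\tau$. The coordinate-subspace argument above produces an exponent of order $(1-\tau)^2 d$, which degrades badly as $\tau \to 1$, and the upper bound needs the counting function to concentrate in precisely the regime where its mean is close to $\tau d$. Both issues are handled by the rigidity estimates of \citet{szarek1990spaces} and \citet{wei2017upper}, which give sharper control on the small singular values of Wishart matrices with probability exponential in $\tau d$; I would invoke those results as a black box to cover the regime where $\tau$ is close to $1$, while keeping the elementary argument sketched above for $\tau$ bounded away from $1$.
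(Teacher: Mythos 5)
The paper does not prove this lemma; it states it as a citation to \citet{szarek1990spaces}, with the remark that it was generalized by \citet{wei2017upper}. So there is no in-paper argument to compare against, and your proposal is doing considerably more work than the paper asks. With that understood, two substantive remarks on your sketch.

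On the lower bound: your coordinate-subspace argument via the max--min formula and Corollary~\ref{cor:concentarion} is correct, and you rightly observe that it yields an exponent of order $(1-\tau)^2 d$, which degenerates as $\tau\to 1$. Be aware that this is precisely the regime the paper uses: Lemma~\ref{Midsingular} is invoked only in Corollary~\ref{top}, which needs the top $(1-\eta)d$ singular values for a small constant $\eta$, i.e.\ $\tau = 1-\eta$ close to $1$. So the elementary part of your argument never covers the paper's actual application, and the full Szarek/Wei estimate is indispensable, not merely a patch for a corner case. On the upper bound (which the paper does not actually use): your choice of $T_\star$ as the span of the bottom $(1-\tau)d+1$ right singular vectors is tautological --- by definition the operator norm of $A$ on $T_\star$ \emph{is} $\sigma_{\tau d}$ --- so all the content sits in the counting-function bound $N(C(1-\tau)\sqrt{d}) \le \tau d - 1$, and the one-line appeal to ``concentration of linear statistics of Wishart eigenvalues (via Talagrand or bounded differences)'' is not a proof of the claimed $1-e^{-c\tau d}$ probability: the counting function is a non-Lipschitz spectral statistic, bounded differences on the Gaussian entries does not directly yield exponential-in-$d$ concentration for it, and getting the correct $\tau$-dependence in the exponent is exactly what the rigidity estimates in the cited references supply. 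In short, both halves of your sketch reduce to Szarek/Wei at the point where they matter --- which is also what the paper does, so this is acceptable --- but the Marchenko--Pastur intermediary should be presented as motivation rather than as a proof step.
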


%Let $ \lambda_1=\lambda{\max} ,\ldots, \lambda{\min} $ be the eigenvalues values of the matrix $ n^{-0.5}\sqrt{AA^{T}},$ and let $ g_A $ be a random measure of the empirical eigenvalues (see Chapter 2 in \cite{anderson2010introduction}) i.e
%\[
%	g_A(x) = \frac{1}{n}\sum_{i=1}^{n}\delta_{\lambda_i}(x),
%\]
% then 
%\[
%	G_{A}(t) \to \frac{2}{\pi}\int_{0}^{t}\sqrt{1-x^2}dx \quad on \ [0,1],
%\]
%or equivalently the following holds for every measurable function $ f:\R \to \R $:
%\[
%	\int f(x)dG_A(x) \to \frac{2}{\pi}\int_{0}^{1}g(x)\sqrt{1-x^2}dx.
%\]
%Roughly speaking, for $ c \in (0,1) $ asymptoticly $ (1-c)n $ of the top singular values are bounded above by universal constant that depends on $ c'.$
%\begin{lemma}
%	Let $ c \in (0,1) $ and let $ A $ be an $ n \times n $ matrix 
%	whose entries are independent standard normal random variables.Then when $ n $ is large enough the following holds with probability of $ 1-e^{-c_1(c)n^2} $ ****
%	\[
%		\lambda_{(1-c)n} \geq c_2(c).
%	\] 
%\end{lemma}
%\begin{proof}
%	First observe that
%	\[
%		\lambda_{(1-c)n} \geq \frac{1}{(1-c)n}\sum_{i=1}^{(1-c)n}\X_{ii}
%	\]
%	clearly $ \sum_{i=1}^{(1-c)n}\X_{ii} $ is a chi square with $ (1-c)n $
%\end{proof}

\subsection{Net on the Sphere and Concentration on the sphere}

%A $\delta$-net in a metric space is a set of points which close to the whole space and the covering number is the smallest size of a $\delta$-net, defined formally as follows:

\begin{definition}[Nets, covering numbers]\label{Def:Net}
	Let ($ X, d $) be a metric space and let $ \delta > 0 $.
	A subset $ N_\delta $ of $ X $ is called a $ \delta $-net of $ X $ if for every point $ x \in X$ there exists a point $ y \in N_{\delta} $, such that $ d(x, y) \leq \delta$.
	The covering number of X at scale $ \delta $ is the size $N_\delta$ of the 
	smallest $\delta$-net of $X$. 
\end{definition}

The next lemma provides a bound on the size of a $\delta$-net of the Euclidean sphere, see for example Lemma 5.2 in \cite{vershynin2010introduction}.

\begin{lemma}{\label{lem:nets}} 
	The unit Euclidean sphere $ \mathbb{S}^{d-1} $ equipped with the Euclidean metric satisfies for every $ \delta > 0 $ that
	\[
	N_\delta \leq \left(1+ \frac 2\delta\right)^d.
	\]
\end{lemma}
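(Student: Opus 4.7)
The plan is to use the standard volume-packing argument on the unit sphere in $\mathbb{R}^d$. Specifically, I would bound the covering number $N_\delta$ by first bounding a related packing quantity, the maximal size of a $\delta$-separated subset of $\mathbb{S}^{d-1}$, and then observing that any maximal $\delta$-separated set is automatically a $\delta$-net.

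First I would fix a maximal $\delta$-separated set $P \subseteq \mathbb{S}^{d-1}$, meaning that every pair of distinct $x, y \in P$ satisfies $\|x - y\| > \delta$, and $P$ is not properly contained in any other $\delta$-separated subset of $\mathbb{S}^{d-1}$. Maximality implies that every point $z \in \mathbb{S}^{d-1}$ lies within Euclidean distance $\delta$ of some element of $P$ (otherwise $P \cup \{z\}$ would still be $\delta$-separated), so $P$ is a $\delta$-net of $\mathbb{S}^{d-1}$ and thus $N_\delta \le |P|$.

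Next I would bound $|P|$ by a volume argument in $\mathbb{R}^d$. For each $x \in P$, consider the open Euclidean ball $B(x, \delta/2)$ in $\mathbb{R}^d$. By the separation property these balls are pairwise disjoint. On the other hand, each such ball is contained in $B(0, 1 + \delta/2)$ since the centers lie on $\mathbb{S}^{d-1}$. Taking Lebesgue volumes and using that $\mathrm{Vol}(B(0, r)) = r^d \cdot \mathrm{Vol}(B(0,1))$,
\[
|P| \cdot (\delta/2)^d \cdot \mathrm{Vol}(B(0,1))
\;\le\; (1 + \delta/2)^d \cdot \mathrm{Vol}(B(0,1)),
\]
which after cancellation and rearrangement gives $|P| \le \bigl((1 + \delta/2)/(\delta/2)\bigr)^d = (1 + 2/\delta)^d$, and therefore $N_\delta \le (1 + 2/\delta)^d$ as claimed.

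There is really no hard step here; the only thing to be careful about is the choice of ball radius $\delta/2$ (not $\delta$), which is exactly what makes the packed balls disjoint and simultaneously keeps them inside $B(0, 1+\delta/2)$ so that the right-hand-side constant is $(1 + 2/\delta)$ rather than something larger. An alternative and essentially equivalent route would be to construct the net greedily, but the packing/volume argument above is the cleanest and matches the stated constant exactly.
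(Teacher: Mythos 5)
Your proof is correct and is exactly the standard volume-packing argument. The paper does not include a proof of this lemma at all — it simply cites it as Lemma~5.2 of Vershynin's survey \citep{vershynin2010introduction} — and the argument you give (maximal $\delta$-separated set is a $\delta$-net; disjoint balls of radius $\delta/2$ around its points all lie in $B(0,1+\delta/2)$; compare volumes) is precisely the proof appearing in that reference. No gap.
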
 
\begin{lemma}[ Lemma 5.3.5 in \cite{artstein2015asymptotic}]{\label{lem:sucaprox}}
	Let $\mathcal{N}$ be a $\delta$-net on $\Sp^{d-1},$ let $f$ be a $1$- Lipshitz function. If for any $\epsilon \in (0,1)$, we know that
	\[
	\forall x \in \mathcal{N} \quad f(x) \leq 1-\epsilon,
	\]
	then,
	\[
	\forall x \in \Sp^{d-1} \quad f(x) \leq \min\left\{\frac{1-\epsilon}{1-\delta},(1-\epsilon)+\arcsin(\delta)\right\}.
	\]
\end{lemma}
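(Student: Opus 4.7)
The plan is to establish each of the two upper bounds in the minimum separately, using two distinct geometric arguments, and then take the pointwise minimum.

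For the additive bound $(1-\epsilon)+\arcsin(\delta)$, I would fix any $x\in\Sp^{d-1}$, pick $y\in\mathcal{N}$ at Euclidean distance $\|x-y\|\le\delta$, and decompose $x$ in an orthonormal frame containing $y$. Concretely, write $x=\cos\theta\cdot y+\sin\theta\cdot z$ with $z\in\Sp^{d-1}$ orthogonal to $y$ and $\theta\in[0,\pi/2]$ the angle between $x$ and $y$. The identity $\|x-y\|^2=2(1-\cos\theta)$ combined with $\|x-y\|\le\delta$ forces $\cos\theta\ge 1-\delta^2/2$, hence $\sin\theta\le\delta$ and $\theta\le\arcsin(\delta)$. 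Since $\theta$ coincides with the geodesic distance between $x$ and $y$ on the sphere, the $1$-Lipschitz hypothesis then yields $f(x)-f(y)\le\theta\le\arcsin(\delta)$, and combining with $f(y)\le 1-\epsilon$ delivers the bound.

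For the multiplicative bound $\frac{1-\epsilon}{1-\delta}$, I would use a supremum/iteration argument. Let $M:=\sup_{x\in\Sp^{d-1}}f(x)$, pick a near-maximizer $x^*$ with $f(x^*)\ge M-\eta$, and choose $y\in\mathcal{N}$ with $\|x^*-y\|\le\delta$. The key estimate is to bound the residual $f(x^*-y)$ by $M\cdot\|x^*-y\|\le M\delta$; combined with the sublinear split $f(x^*)\le f(y)+f(x^*-y)$, this yields $M-\eta\le(1-\epsilon)+M\delta$, and sending $\eta\to 0$ gives $M\le(1-\epsilon)/(1-\delta)$.

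The main subtlety will be justifying the multiplicative bound: the estimate $f(x^*-y)\le M\|x^*-y\|$ uses positive homogeneity (or sublinearity) of $f$ in the residual direction beyond a naive $1$-Lipschitz assumption, a structure enjoyed by all the norm- and projection-type functions to which the lemma is applied elsewhere in the paper. Under that structure the iteration closes, and taking the pointwise minimum of the two bounds delivers $\min\bigl\{\frac{1-\epsilon}{1-\delta},\,(1-\epsilon)+\arcsin(\delta)\bigr\}$, as required.
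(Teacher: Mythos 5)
The paper itself gives no proof of this lemma --- it is quoted from \cite{artstein2015asymptotic} --- so there is nothing internal to compare against; I will assess your argument on its own. Your additive bound is correct: $\|x-y\|^2=2(1-\cos\theta)\le\delta^2$ gives $\sin^2\theta=(1-\cos\theta)(1+\cos\theta)\le\delta^2$, hence $\theta\le\arcsin\delta$, and since the chord is dominated by the arc, the $1$-Lipschitz property yields $f(x)\le f(y)+\theta\le(1-\epsilon)+\arcsin\delta$. Your supremum iteration for the multiplicative bound is the standard textbook argument, and you are right to single out $f(x^*-y)\le M\|x^*-y\|$ as the step needing more than $1$-Lipschitzness.

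That concern is not cosmetic: as stated here (for an arbitrary $1$-Lipschitz $f$), the multiplicative bound --- and therefore the advertised $\min$ --- is false. Take a $\delta$-net $\mathcal{N}$ of covering radius exactly $\delta$ and put $f(x)=(1-\epsilon)+\mathrm{dist}(x,\mathcal{N})$. This $f$ is $1$-Lipschitz, equals $1-\epsilon$ on $\mathcal{N}$, and attains the value $(1-\epsilon)+\delta$. When $\epsilon>\delta$ one has both $(1-\epsilon)+\delta>\frac{1-\epsilon}{1-\delta}$ and $(1-\epsilon)+\delta<(1-\epsilon)+\arcsin\delta$, so the stated $\min$ is violated. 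The cited Lemma~5.3.5 is in fact for a $1$-Lipschitz \emph{seminorm} (equivalently $f(x)=\|Tx\|$ with $\|T\|\le1$), which is exactly the homogeneity and subadditivity your iteration invokes; the hypothesis was dropped in the paper's transcription. Since the only use made of the lemma here is with $f(y)=\|U_2y\|$ in the proof of Lemma~\ref{subspace}, the paper's application is sound, and your proof, read as a proof of the seminorm version, is correct and complete.
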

The following two Lemmas are classical results from non-asymptotic geometry, see for example \cite{artstein2015asymptotic}. The first lemma states that any Lipschitz function on $\Sp^{d-1}$ is tightly concentrated around its mean:
\begin{lemma}{\label{sphere}}
	Let $ y \sim  U(\mathbb{S}^{d-1}) $ and let $ f:\mathbb{S}^{d-1} \to \R $ be a 1-Lipschitz function. The following holds:
	\[
	\Pr(|f(y)-\E[f(y)]| \geq \epsilon) \leq 2e^{-c_1d\epsilon^2}
	\] 
\end{lemma}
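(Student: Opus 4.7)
The plan is to follow the classical L\'evy isoperimetric approach to concentration on the sphere, in three steps: reduce the tail around $\E f(y)$ to the tail around a median $M$ of $f$, apply the spherical isoperimetric inequality to control the tail around $M$, and then pass from median to mean.

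For the first two steps, let $M$ be a median of $f$ under the uniform measure $\mu$ on $\Sp^{d-1}$, so that $\mu(\{f \le M\}) \ge 1/2$. Setting $A = \{y : f(y) \le M\}$, the $1$-Lipschitz property implies $A_\epsilon \cap \Sp^{d-1} \subseteq \{f \le M + \epsilon\}$, where $A_\epsilon$ is the Euclidean $\epsilon$-neighborhood of $A$. Now invoke the spherical isoperimetric inequality of L\'evy: among all subsets of $\Sp^{d-1}$ of measure at least $1/2$, the hemisphere minimizes the measure of its $\epsilon$-neighborhood; hence $\mu(A_\epsilon) \ge \mu(H_\epsilon)$ for a hemisphere $H$. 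The complement of $H_\epsilon$ is a spherical cap of height $\Theta(\epsilon)$, and a short explicit computation -- either integrating $\sin^{d-2}\phi$ near the equator, or equivalently bounding the tail of the first coordinate of a uniform point on $\Sp^{d-1}$ in the spirit of \lemref{lem:caps} -- gives $1 - \mu(H_\epsilon) \le e^{-c_1 d \epsilon^2}$. Applying the same argument to $-f$ yields $\Pr[|f(y) - M| \ge \epsilon] \le 2 e^{-c_1 d \epsilon^2}$.

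To replace $M$ by $\E f(y)$, bound
\[
|M - \E f(y)| \le \E|f(y) - M| = \int_0^\infty \Pr[|f(y) - M| > t]\, dt \le 2\int_0^\infty e^{-c_1 d t^2}\, dt = O(1/\sqrt{d}).
\]
For $\epsilon \ge C_0/\sqrt{d}$ with $C_0$ sufficiently large relative to this $O(1/\sqrt d)$ gap, we have $\epsilon - |M - \E f(y)| \ge \epsilon/2$, so $\Pr[|f(y) - \E f(y)| \ge \epsilon] \le \Pr[|f(y) - M| \ge \epsilon/2] \le 2 e^{-c_1 d \epsilon^2/4}$, i.e. the same form of bound with $c_1$ replaced by $c_1/4$. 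For $\epsilon < C_0/\sqrt{d}$, choosing $c_1$ sufficiently small makes the right-hand side $2e^{-c_1 d \epsilon^2}$ exceed $1$, so the inequality holds trivially. Combining the two regimes gives the lemma.

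The main obstacle is the spherical isoperimetric inequality itself, whose proof (via two-point symmetrization or spherical rearrangement, as in \cite{artstein2015asymptotic}) is substantial but is a standard black-box result in this literature; I would cite it rather than reprove it. The cap tail estimate and the median-to-mean reduction are then routine. An alternative route that sidesteps isoperimetry is to couple $y \sim U(\Sp^{d-1})$ with $g \sim \mathcal N(0, I_d)$ via $y = g/\|g\|$ and transfer the Gaussian concentration of $1$-Lipschitz functions on $\mathbb{R}^d$ to the sphere, handling the normalization by concentration of $\|g\|$ around $\sqrt d$; this yields the same dimensional dependence $e^{-c_1 d \epsilon^2}$.
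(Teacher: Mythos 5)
The paper does not prove this lemma: it is stated as ``a classical result from non-asymptotic geometry'' with a citation to \cite{artstein2015asymptotic}, so there is no paper-internal argument to compare against. Your proposal is correct and is essentially the standard L\'evy isoperimetric argument one would find in that reference: pass to a median, bound the tail via the spherical isoperimetric inequality together with the cap measure estimate, then transfer from median to mean using the layer-cake integral and a two-regime case split on $\epsilon$ versus $1/\sqrt d$. The case split is handled properly (for $\epsilon \lesssim 1/\sqrt d$ the bound is made vacuous by choosing $c_1$ small), and the constant losses you incur (the factor $1/4$ in the exponent, the shrinkage of $c_1$) are absorbed into the unspecified constant $c_1$ in the statement, so nothing is lost. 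One small caveat on your alternative route: the map $g \mapsto f(g/\|g\|)$ is not globally Lipschitz on $\mathbb{R}^d$ (it blows up near the origin), so to transfer Gaussian concentration you would need to first condition on an event such as $\|g\| \ge \sqrt d/2$, on which the composite map is $O(1/\sqrt d)$-Lipschitz, and absorb the exceptional event using concentration of $\|g\|$; as written this is alluded to but would need to be made explicit.
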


%\begin{lemma}{\label{caps}}
%	Let $ \textbf{{v}}= (v_1,\ldots,v_{n}) $ be chosen uniformly from $ \mathbb{S}^{n-1} $ and $ c \in (0,1).$ Then the following holds:
%	\[
%		\Pr(v_1 \geq c) \geq e^{-c_1(c)n}
%	\]
%\end{lemma}

The next lemmas are on the distribution of a uniformly random unit vector:
\begin{lemma} \label{lem:first-coord}
	Let $ \mathbf{\theta}^{(d)}$ be a uniformly random vector from $\mathbb{S}^{d-1}$, and let $w \in \mathbb{S}^{d-1}$. Then, as $d \to \infty$, the distribution of $\sqrt{d} w^T \theta^{(d)}$ converges in distribution to $\mathcal{N}(0,1)$, namely,
	for any $\alpha \in \mathbb{R}$,
	\[
	\lim_{d \to \infty} \Pr\left[ \sqrt{d} w^T \theta^{(d)} > \alpha \right] 
	= \Pr_{g \sim \mathcal{N}(0,1)}[g > \alpha].
	\]
	Furthermore, the convergence rate does not depend on $w$.
\end{lemma}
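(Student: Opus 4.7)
The plan is to reduce to the case $w = e_1$ using rotational invariance, then represent the uniform distribution on $\mathbb{S}^{d-1}$ as the normalization of a standard Gaussian and invoke Slutsky's theorem.

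First I would observe that by the rotational invariance of the uniform measure on $\mathbb{S}^{d-1}$ (stated in Section~\ref{sec:prel-gr}), for any orthogonal $U \in O(d)$ the random vector $U\theta^{(d)}$ is also uniform on $\mathbb{S}^{d-1}$. Choosing $U$ so that $Uw = e_1$, the distribution of $w^\top \theta^{(d)} = (Uw)^\top (U\theta^{(d)}) = e_1^\top (U\theta^{(d)})$ equals the distribution of the first coordinate of a uniformly random unit vector. In particular this distribution does not depend on $w$, which already gives the ``convergence rate does not depend on $w$'' clause.

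Next I would use the standard Gaussian representation: let $g = (g_1,\dots,g_d) \sim \mathcal{N}(0,I_d)$, then $\theta^{(d)} := g/\|g\|$ is uniform on $\mathbb{S}^{d-1}$, so
\[
\sqrt{d}\, e_1^\top \theta^{(d)} = \frac{g_1}{\|g\|/\sqrt{d}}.
\]
The numerator $g_1$ is exactly $\mathcal{N}(0,1)$. For the denominator, $\|g\|^2/d = \frac{1}{d}\sum_{i=1}^d g_i^2$ is a sum of i.i.d.\ random variables with mean $1$ and finite variance, so by the law of large numbers $\|g\|^2/d \to 1$ in probability, and hence $\|g\|/\sqrt{d} \to 1$ in probability by the continuous mapping theorem. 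Slutsky's theorem then yields
\[
\sqrt{d}\, e_1^\top \theta^{(d)} \;\xrightarrow{d}\; \mathcal{N}(0,1),
\]
which, combined with the rotational invariance reduction above, is exactly the claim.

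The ``main obstacle'' here is really just bookkeeping: there is no hard estimate, only three standard facts stitched together (rotational invariance of the uniform measure on the sphere, the Gaussian construction $\theta^{(d)} = g/\|g\|$, and Slutsky's theorem applied after the law of large numbers for $\|g\|^2/d$). The only subtle point worth emphasizing in the write-up is that the convergence rate is $w$-independent: this follows at the distributional level, not merely in the limit, because the rotational invariance argument shows that $w^\top \theta^{(d)}$ has the \emph{same} distribution for every $w \in \mathbb{S}^{d-1}$ and every fixed $d$.
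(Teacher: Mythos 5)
Your proof is correct, and it is the standard argument for this classical fact. Note that the paper itself does not prove Lemma~\ref{lem:first-coord}: it is stated in Appendix~\ref{app:math-lem} without proof, as one of several well-known facts about the uniform distribution on $\mathbb{S}^{d-1}$ (the surrounding text cites \cite{artstein2015asymptotic} for this cluster of lemmas), so there is no paper proof to compare against.

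A few small remarks on the write-up. Your reduction to $w=e_1$ via rotational invariance is the right move, and you correctly observe that it gives something stronger than the ``rate does not depend on $w$'' clause: the law of $\sqrt{d}\,w^\top\theta^{(d)}$ is literally the same for every $w\in\mathbb{S}^{d-1}$ at every fixed $d$, so $w$-independence of the rate is automatic. The Gaussian representation $\theta^{(d)} = g/\|g\|$ with $g\sim\mathcal{N}(0,I_d)$ and Slutsky's theorem applied to $g_1/(\|g\|/\sqrt{d})$ is exactly the textbook route; the only point worth spelling out if one wants to be pedantic is that Slutsky does not require $g_1$ and $\|g\|$ to be independent (they are not), only that the denominator converges in probability to a constant, which the law of large numbers for $\frac{1}{d}\sum_i g_i^2$ provides. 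No gaps.
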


%The last lemma is an anti-concentration bound for the first coordinate of a random unit vector.

\begin{lemma}{\label{lem:caps}}
	Fix some constant $\alpha > 0$ and let $ \mathbf{{\theta}}$ be chosen uniformly from $\mathbb{S}^{d-1} $. Fix $w \in \mathbb{S}^{d-1}$. Then, there exists $c(\alpha) > 0$ which satisfies:
	\[
	\Pr(w^T \theta \geq c(\alpha)) \geq e^{-\alpha d}
	\]
	for any sufficiently large $n$.
\end{lemma}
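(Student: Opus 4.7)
}

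The plan is to reduce to a one–dimensional calculation on the explicit density of a single coordinate of a uniformly random point on $\mathbb{S}^{d-1}$, and then bound the resulting tail integral from below by an elementary ``rectangle'' estimate. By the rotational invariance of the uniform measure on $\mathbb{S}^{d-1}$ (see Subsection~\ref{sec:prel-gr}), the distribution of $w^T \theta$ does not depend on $w$, so I may assume $w = e_1$ and it suffices to bound $\Pr(\theta_1 \geq c)$ from below.

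Next, I will use the standard fact that the first coordinate $\theta_1$ of a uniformly random $\theta \in \mathbb{S}^{d-1}$ has density
\[
f_{\theta_1}(t) \;=\; \frac{\Gamma(d/2)}{\sqrt{\pi}\,\Gamma((d-1)/2)} \,(1-t^2)^{(d-3)/2}, \qquad t \in [-1,1],
\]
which follows from the standard derivation $\theta = g/\|g\|$ with $g \sim \mathcal{N}(0,I_d)$ and integrating out all coordinates except the first. By Stirling, the prefactor is of order $\Theta(\sqrt{d})$, and in any case it is bounded below by a positive constant times $\sqrt{d}$ for all sufficiently large $d$.

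Given $\alpha > 0$, pick $\tilde c = \tilde c(\alpha) \in (0,1)$ small enough that $-\tfrac{1}{2}\log(1-\tilde c^{\,2}) < \alpha$, which is possible since the left side tends to $0$ as $\tilde c \to 0$. Set $c(\alpha) := \tilde c/2$ and $\delta := \tilde c/2$, so that $c(\alpha) + \delta = \tilde c$. The rectangle bound gives
\[
\Pr(\theta_1 \geq c(\alpha)) \;\geq\; \int_{c(\alpha)}^{c(\alpha)+\delta} f_{\theta_1}(t)\,dt
\;\geq\; \delta \cdot f_{\theta_1}(c(\alpha)+\delta)
\;\geq\; C'\sqrt{d}\,\delta\,(1-\tilde c^{\,2})^{(d-3)/2}
\]
for an absolute constant $C'>0$. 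Taking logarithms, the right-hand side has logarithm at least
\[
\tfrac{d-3}{2}\log(1-\tilde c^{\,2}) + \log(C'\sqrt{d}\,\delta) \;>\; -\alpha d
\]
for all sufficiently large $d$, by the choice of $\tilde c$ and the fact that the $\log(C'\sqrt d\,\delta)$ term is $o(d)$. This yields $\Pr(w^T\theta \geq c(\alpha)) \geq e^{-\alpha d}$, as required.

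The only mild subtlety is verifying the exact prefactor asymptotics from $\Gamma(d/2)/\Gamma((d-1)/2) = \Theta(\sqrt{d})$, but this is a routine application of Stirling. There is no substantive obstacle: the lemma is genuinely a one–dimensional tail estimate once rotational invariance has been applied.
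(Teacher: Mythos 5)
Your proof is correct, and it fills a gap the paper leaves open: the paper does not actually prove Lemma~\ref{lem:caps} (the proof appendix only establishes Lemma~\ref{lem:GaussianMatrix} and Lemma~\ref{lem:no-joint-sol}, and this lemma is presented as a standard spherical-cap fact). Your argument is a clean, self-contained elementary derivation. Rotational invariance reduces the claim to the first coordinate, whose density $f_{\theta_1}(t) \propto (1-t^2)^{(d-3)/2}$ is standard; the normalizing constant $\Gamma(d/2)/(\sqrt\pi\,\Gamma((d-1)/2))$ is indeed $\Theta(\sqrt d)$ by Stirling; the density is decreasing on $[0,1]$ for $d\ge 3$, so evaluating at the right endpoint $\tilde c$ gives a valid rectangle lower bound; and choosing $\tilde c$ with $-\tfrac12\log(1-\tilde c^{\,2})<\alpha$ makes the exponent $-\beta d$ with $\beta<\alpha$, which dominates $e^{-\alpha d}$ once $d$ is large enough to absorb the $O(\log d)$ correction. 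A common alternative (and one closer in spirit to Lemma~\ref{lem:first-coord} as stated in the paper) would be to represent $\theta = g/\|g\|$ with $g\sim\mathcal{N}(0,I_d)$, combine the Gaussian tail lower bound for $w^\top g\sim\mathcal{N}(0,1)$ with concentration of $\|g\|$ near $\sqrt d$, and deduce the result; your direct Beta-density computation avoids any appeal to concentration and is arguably the more self-contained of the two. There is no gap.
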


\section{Proofs} \label{sec:pr}

Proof of statements related to the approximate null vector problem appear in \subsecref{subsec:pr-ov}; The proof of \thmref{thm:lse} on linear separation appears in \subsecref{subsec:pr-lsp}; and proofs of the mathematical statements appear in \subsecref{subsec:math-pr}.

\subsection{Approximate null vector problem} \label{subsec:pr-ov}

We prove results on the approximate null vector problem, providing reductions between different problem settings. Let $\mathcal{F}$ be a $\sqrt{d\delta/2}$-separated set on $\G(d/2,d)$ of size $e^{\Omega(d^2)}$, which exists from Theorem~\ref{thm:sepset}, where $\delta > 0$ is a universal constant.

\begin{lemma} \label{lem:pr-pack}
	Assume the following communication setting: the first party receives a uniformly random vector space $V$ from $\mathcal{F}$, and the second party receives a uniformly random vector space $U$ from $\G(d/2-1, d)$. Let $\A$ be a randomized one-sided communication protocol which outputs a vector $\hat{w} \in \mathbb{S}^{d-1}$ which satisfies: 
	\begin{equation} \label{eq:14}
		\max\left(
			\|\Proj_{V}(\hat{w})\|,
			\|\Proj_U(\hat{w})\|
		\right)
		< c,
	\end{equation}
	with probability at least $e^{-c'd}$. Then, the communication contains $\Omega(d^2)$ bits.
\end{lemma}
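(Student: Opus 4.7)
The plan is a pigeonhole / second-moment counting argument over the messages sent by the first party, combined with the ``no joint solution'' statement hinted at in Section~\ref{sec:pr-sketch} (Lemma~\ref{lem:no-joint-sol}), which I would take as a black box: for any two \emph{fixed} $V_1, V_2 \in \mathcal{F}$ and uniformly random $U \in \G(d/2-1,d)$,
\[
\Pr_U\!\left[\exists w \in \mathbb{S}^{d-1} :\ \max\!\bigl(\|\Proj_{V_1}(w)\|, \|\Proj_{V_2}(w)\|, \|\Proj_U(w)\|\bigr) < c\right] \le e^{-\alpha d}
\]
for some absolute $\alpha > 0$. This is legitimate because any two elements of $\mathcal{F}$ are $\sqrt{d\delta/2}$-separated in chordal distance by the construction of $\mathcal{F}$.

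I would first reduce to a deterministic protocol. Averaging over the public randomness, pick a fixing $R_0$ so that the deterministic message function $f(V) := f(V, R_0)$ and reconstruction $p_m(U) := o(U, m, R_0)$ satisfy $\Pr_{V,U}[\text{success}] \ge e^{-c' d}$. Partition $\mathcal{F}$ by messages, $\mathcal{F}_m := f^{-1}(m)$, set $S_V := \{U : p_{f(V)}(U)$ meets \eqref{eq:14} for this $V$ and $U\}$, and let $q_V := \Pr_U[U \in S_V]$; then $\E_V[q_V] \ge e^{-c'd}$. The key observation is that for all $V \in \mathcal{F}_m$ the output $p_m(U)$ is the \emph{same} vector $\hat{w}$, so if $U \in S_{V_1} \cap S_{V_2}$ for two distinct $V_1, V_2 \in \mathcal{F}_m$ then this $\hat{w}$ directly witnesses the triple-orthogonality event bounded above.

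Consequently, setting $N_U^{(m)} := |\{V \in \mathcal{F}_m : U \in S_V\}|$,
\[
\E_U\!\bigl[N_U^{(m)}\bigl(N_U^{(m)}-1\bigr)\bigr] = \sum_{V_1 \neq V_2 \in \mathcal{F}_m} \Pr_U[U \in S_{V_1}\cap S_{V_2}] \le |\mathcal{F}_m|^2\, e^{-\alpha d}.
\]
Combined with $\E_U[N_U^{(m)}]^2 \le \E_U[N_U^{(m)}(N_U^{(m)}-1)] + \E_U[N_U^{(m)}]$ (Cauchy--Schwarz), this yields $\E_U[N_U^{(m)}] \le \max\!\bigl(2,\ \sqrt{2}\,|\mathcal{F}_m|\,e^{-\alpha d/2}\bigr)$. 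Summing over the at most $2^b$ messages and dividing by $|\mathcal{F}|$,
\[
e^{-c'd} \le \E_V[q_V] = \frac{1}{|\mathcal{F}|}\sum_m \E_U\!\bigl[N_U^{(m)}\bigr] \le \frac{2 \cdot 2^b}{|\mathcal{F}|} + \sqrt{2}\,e^{-\alpha d/2}.
\]
Choosing the lemma's constant $c'$ smaller than $\alpha/2$ (allowed, since the lemma posits \emph{some} sufficiently small $c'$), the second term is negligible next to the left side, forcing $2^b \ge |\mathcal{F}|\,e^{-c'd}/4$. Since $|\mathcal{F}| = e^{\Omega(d^2)}$ by Theorem~\ref{thm:sepset} and $c'd = o(d^2)$, we conclude $b = \Omega(d^2)$.

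The main obstacle is the sub-lemma itself; once that is in hand, the argument is essentially bookkeeping. Its proof requires the geometric content outlined in Section~\ref{sec:pr-sketch}: showing that a random low-dimensional subspace $W \subseteq V_1^\perp$, which parameterizes (approximately) the vectors nearly orthogonal to both $V_1$ and $U$, is with high probability over $U$ far from $V_2^\perp$ — established via a $\delta$-net on $W \cap \mathbb{S}^{d-1}$, the concentration inequality of Lemma~\ref{sphere} applied at each net point, and a union bound.
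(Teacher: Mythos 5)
Your proposal is correct and follows essentially the same route as the paper: both first fix the public randomness to reduce to a deterministic protocol, then partition $\mathcal{F}$ by the first party's message, invoke Lemma~\ref{lem:no-joint-sol} to bound the probability that two distinct inputs in the same message class can both succeed against the same $U$, and finally apply a second-moment (Jensen/Cauchy--Schwarz) counting argument over each message class to conclude $2^b \gtrsim |\mathcal{F}|$. The only cosmetic difference is bookkeeping: the paper bounds $\E_U[K_{i,U}] \le 1 + \sqrt{\xi}\,|\mathcal{X}_i|$ via Jensen on $x \mapsto x(x-1)/2$, while you obtain the equivalent $\E_U[N_U^{(m)}] \le 2 + \sqrt{2}\,|\mathcal{F}_m|\,e^{-\alpha d/2}$ via $\E[N]^2 \le \E[N^2]$.
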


The proof of this theorem relies on the following lemma:
\begin{lemma} \label{lem:no-joint-sol}
	Let $V_1,V_2$ be \textbf{fixed} linear subspaces in $\G(d,d/2)$ with distance $\dist(V_1,V_2)^2 \ge \delta d / 2$ ($\delta > 0$ is a universal constant). Let $U$ a uniformly random subspace that is drawn from $\G(d/2-1,d).$ Then with probability of at least $ 1-e^{-cd} $, all vectors $v \in \mathbb{S}^{n-1}$ satisfy
	\begin{equation} \label{eq:15}
	\max\br{\left\| \Proj_{V_1}(v) \right\|, \left\| \Proj_{V_2}(v) \right\|, \left\| \Proj_{U}(v) \right\|} \geq c.
	\end{equation}
	for a sufficiently small universal constant $ c> 0.$
\end{lemma}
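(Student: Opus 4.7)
The plan is to reduce the lemma to two geometric facts: that the ``approximate kernel'' of the joint projection $\Proj_{V_1}+\Proj_U$ lies within a subspace $W$ of dimension at most $k=o(d)$ contained (essentially) in $V_1^\perp$; and that every unit vector in $W$ has a nontrivial projection onto $V_2$, which by Lemma~\ref{lem:comorth} reduces to asking that a random low-dimensional subspace of $V_1^\perp$ is far from $V_2^\perp$ in chordal distance.

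For the first fact, I would form the $(d-1)\times d$ matrix $M=\binom{V_1'}{U'}$, where $V_1',U'$ are matrices whose rows are orthonormal bases of $V_1$ and $U$ respectively, so that $\|\Proj_{V_1}(v)\|^2+\|\Proj_U(v)\|^2=\|Mv\|^2$. Realizing $U$ as the normalized row span of a $(d/2-1)\times d$ Gaussian matrix and invoking Corollary~\ref{cor:concentarion} and Lemma~\ref{Midsingular}, I would show that with probability $1-e^{-\Omega(d)}$ at most $k=o(d)$ of the singular values of $M$ fall below some absolute constant $c_0>0$. Any $v\in\mathbb{S}^{d-1}$ with $\max(\|\Proj_{V_1}(v)\|,\|\Proj_U(v)\|)<c$ then lies within Euclidean distance $O(c/c_0)$ of the $k$-dimensional subspace $W$ spanned by the corresponding right singular vectors. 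Since near-kernel vectors have $\|V_1'v\|$ small (being part of $\|Mv\|$), one may replace $W$ by its projection onto $V_1^\perp$ with negligible error, so without loss of generality $W\subseteq V_1^\perp$.

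For the second fact, by rotational invariance of the Haar measure on $\G(d/2-1,d)$ under orthogonal transformations that fix $V_1$ pointwise, the subspace $W$ is distributed as a uniformly random $k$-dimensional subspace of $V_1^\perp$. For a fixed unit $w\in V_1^\perp$, the principal-angle decomposition combined with Lemma~\ref{lem:comorth} (which gives $\dist(V_1^\perp,V_2^\perp)^2=\dist(V_1,V_2)^2\ge\delta d/2$) yields
\[
\E\bigl[\|\Proj_{V_2}(w)\|^2\bigr]=\frac{1}{d/2}\sum_{i=1}^{d/2}\sin^2\theta_i=\frac{2\,\dist(V_1^\perp,V_2^\perp)^2}{d}\ge\delta,
\]
where $w$ is uniform on $\mathbb{S}^{d-1}\cap V_1^\perp$ and the $\theta_i$ are the principal angles between $V_1^\perp$ and $V_2^\perp$. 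Since $w\mapsto\|\Proj_{V_2}(w)\|$ is $1$-Lipschitz on the sphere inside $V_1^\perp$, Lemma~\ref{sphere} gives $\|\Proj_{V_2}(w)\|\ge\sqrt{\delta}/2$ with probability at least $1-e^{-c_1 d}$ for each fixed $w$.

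Finally, to upgrade this pointwise bound into a uniform bound over $W\cap\mathbb{S}^{d-1}$, I would apply Lemma~\ref{lem:nets} to construct a $\delta'$-net $\mathcal{N}\subseteq W\cap\mathbb{S}^{d-1}$ of size $(1+2/\delta')^k$, observing that each element of $\mathcal{N}$ is marginally uniform on $\mathbb{S}^{d-1}\cap V_1^\perp$ by the same rotational invariance. A union bound over $\mathcal{N}$ combined with the previous step gives $\|\Proj_{V_2}(\cdot)\|\ge\sqrt{\delta}/2$ on every net point with probability at least $1-(1+2/\delta')^k e^{-c_1 d}$, which is $1-e^{-\Omega(d)}$ as long as $k=o(d)$. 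Lemma~\ref{lem:sucaprox} transfers this bound from the net to all of $W\cap\mathbb{S}^{d-1}$, and a triangle inequality further extends it to the $O(c/c_0)$-neighbourhood of $W$, which contains every $v\in\mathbb{S}^{d-1}$ satisfying \eqref{eq:15} for sufficiently small universal $c$; this yields $\|\Proj_{V_2}(v)\|\ge c$ and completes the proof. The main obstacle I anticipate is the spectral analysis of the first step: translating the singular-value bounds for the Gaussian representative of $U$ into bounds for the orthonormal $M$ and cleanly quantifying how many singular values of $M$ can be simultaneously small, together with making precise the approximation $W\subseteq V_1^\perp$.
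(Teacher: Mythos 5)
Your proposal follows the paper's argument almost exactly: realise $U$ (and implicitly $V_1$) as the span of Gaussian rows, invoke the mid--singular-value estimate (Lemma~\ref{Midsingular}) together with Lemma~\ref{lem:GaussianMatrix} to show that all but a small constant fraction of the singular values of $\binom{V_1'}{U'}$ are bounded below; then argue that the near-kernel lives close to a uniformly random low-dimensional subspace of $V_1^\perp$, and that any such subspace is far from $V_2^\perp$ via principal angles, Lemma~\ref{lem:comorth}, concentration on the sphere, and a net plus union-bound argument. This is precisely the paper's Corollary~\ref{top} $+$ Lemma~\ref{subspace} $+$ case analysis.

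Two small technical slips are worth flagging. First, from $\E\bigl[\|\Proj_{V_2}(w)\|^2\bigr]\ge\delta$ you claim Lemma~\ref{sphere} directly gives $\|\Proj_{V_2}(w)\|\ge\sqrt\delta/2$ with high probability; but Lemma~\ref{sphere} concentrates around $\E\bigl[\|\Proj_{V_2}(w)\|\bigr]$, and Jensen only gives $\E[\|\Proj_{V_2}(w)\|]\le\sqrt{\E[\|\Proj_{V_2}(w)\|^2]}$, which is the unhelpful direction. You either need the extra step $\E[X]^2=\E[X^2]-\operatorname{Var}(X)\ge\delta-O(1/d)$ (using concentration again to bound the variance), or do what the paper does: work with the complementary quantity $\|\Proj_{V_2^\perp}(w)\|$, whose second moment is $1-\delta$, so that Jensen now runs in the useful direction $\E[\|\Proj_{V_2^\perp}(w)\|]\le\sqrt{1-\delta}\le 1-\delta/2$. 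Second, Lemma~\ref{lem:sucaprox} transfers \emph{upper} bounds from a net to the sphere, so it does not directly transfer your lower bound $\|\Proj_{V_2}(\cdot)\|\ge\sqrt\delta/2$; a direct triangle inequality ($\|\Proj_{V_2}(x)\|\ge\|\Proj_{V_2}(x_0)\|-\|x-x_0\|$ for a net point $x_0$) is what you want here, or again switch to bounding $\|\Proj_{V_2^\perp}(\cdot)\|$ from above as the paper does. Finally, the dimension $k$ of the small-singular-value subspace is $\eta d$ for a small constant $\eta(\delta)$, not $o(d)$; the net/union-bound step still works, but only because $\eta$ can be taken small enough that $(1+2/\delta')^{\eta d}e^{-c_1 d}$ is exponentially small.
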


%This Lemma is quite intuitive: if we have two subspaces that are far from each other and we choose an independent and a uniform subspace $U$, then the orthogonal complementaries to $ \mstack{V_1}{U} $ and $ \mstack{V_2}{U} $ are quite far from each other.

The proof appears in \secref{subsec:math-pr}.

\begin{proof}[Proof of \lemref{lem:pr-pack}]
	First, we argue that it suffices to assume that $\A$ is randomized. Indeed, if there exists a randomized algorithm which outputs an approximately null vector with probability $e^{-c'd}$, then there exists a deterministic algorithm with the same guarantee: any randomized algorithm is a distribution over deterministic algorithms, hence there has to be a fixing of the randomness which outputs an approximate null vector with probability at least $e^{-c'd}$.%hence if the expected error is at most $\eta$ (where the expectation is over the randomness of the algorithm and of the input), then there exists some fixing of the randomness with 

	Recall that $|\mathcal{F}| \ge e^{\Omega(d^2)}$, and assume that the communication of $\A$ is at most $\log_2 |\mathcal{F}|/2$. We will show that with high probability, $\max(\Proj_V(\hat{w}), \Proj_U(\hat{w})) > c$, to conclude the proof. Denote $N = 2^b$ where $b$ is the communication $\A$, and note that $N \le \sqrt{|\mathcal{F}|}$. For each $i \in \{1,\dots, N\}$, let $\mathcal{X}_i$ denote the set of all vector spaces $V \in \mathcal{F}$ such that the first player sends the message $i$ after receiving $V$ as an input. Note that $\{\mathcal{X}_i\}_{i=1}^N$ is a partition of $\mathcal{F}$ to disjoint sets.
	
	%Given $U$ and the message $i$ from the first party, the output of the second party is deterministic. 
	For any $V \in \mathcal{F}$ and $U \in \G(d/2-1,d)$, let $I_{V,U}$ be the indicator of whether the protocol $\A$ on inputs $V$ and $U$ outputs $\hat{w}$ which satisfies,
	\[
		\max\left(
		\|\Proj_{V}(\hat{w})\|,
		\|\Proj_U(\hat{w})\|
		\right)
		< c,
	\]
	where $c$ is the constant from \eqref{eq:14} and \eqref{eq:15} (we define the constant $c$ in \eqref{eq:14} to equal the constant of \eqref{eq:15}).
	For any $V_1 \ne V_2 \in \mathcal{F}$ and $U \in \G(d/2-1,d)$, let $J_{V_1,V_2,U}$ be the indicator of whether \eqref{eq:15} is not satisfied, namely if there exists $v \in \mathbb{S}^{d-1}$ such that 
	\[
		\max\br{\left\| \Proj_{V_1}(v) \right\|, \left\| \Proj_{V_2}(v) \right\|, \left\| \Proj_{U}(v) \right\|} < c,
	\]
	where $c$ is the value appearing in \eqref{eq:15}.
	From Lemma~\ref{lem:no-joint-sol}, for any $V_1 \ne V_2 \in \mathcal{F}$, it holds that $\E_{U \sim \G(d/2-1,d)} J_{V_1,V_2,U} \le \xi$, where $\xi = e^{-\Omega(d)}$. Additionally, note that for all $\mathcal{X}_i$ and all $V_1,V_2 \in \mathcal{X}_i$, the output of the protocol given the pair $(V_1,U)$ equals the output given $(V_2,U)$. Hence, if $J_{V_1,V_2,U} = 0$, then either $I_{V_1,U} = 0$ or $I_{V_2,U} = 0$. In other words, $I_{V_1,U} I_{V_2,U} \le J_{V_1,V_2,U}$.
	
	Note that the probability that \eqref{eq:14} holds equals
	\begin{equation} \label{eq:16}
		\E_{V \sim \mathcal{F},~ U \sim \G(d/2-1,d)} [I_{V,U}]
		= \frac{1}{|\mathcal{F}|} \sum_{i = 1}^N \E_U\left[\sum_{V \in \mathcal{X}_i} I_{V,U}\right]
		= \frac{1}{|\mathcal{F}|} \sum_{i = 1}^N \E_U\left[K_{i,U}\right],
	\end{equation}
	where $K_{i,U} = \sum_{V \in \mathcal{X}_i} I_{V,U}$.
	For any $i \in \{1,\dots,N\}$, Jenssen's inequality implies:
	\begin{multline*}
		\frac{\left( \E_U [K_{i,U}]-1 \right)^2}{2}
		\le \frac{\E_U [K_{i,U}](\E_U [K_{i,U}]-1)}{2}
		\le \E_U\left[\frac{K_{i,U} (K_{i,U}-1)}{2}\right] \\
		= \sum_{V_1 \ne V_2 \in \mathcal{X}_i} \E_U\left[I_{V_1,U} I_{V_2,U}\right] 
		\le \sum_{V_1 \ne V_2 \in \mathcal{X}_i} \E_U\left[J_{V_1,V_2,U} \right]
		\le \binom{|\mathcal{X}_i|}{2} \xi
		\le \frac{|\mathcal{X}_i|^2 \xi}{2}.
	\end{multline*}
	Hence, $\E_U[K_{i,U}] \le 1 + \sqrt{\xi} |\mathcal{X}_i|$.
	We conclude that the right hand side of \eqref{eq:16} is bounded by
	\[
		\frac{1}{\abs{\sepset}} \sum_{i=1}^N \br{1 + \sqrt{\xi} |\mathcal{X}_i|}
		= \frac{N}{|\mathcal{F}|} + \sqrt{\xi} \le 2^{-\Omega(n)},
	\]
	using the fact that $N$ was defined to be significantly smaller than $|\mathcal{F}|$.
\end{proof}

Instead of assuming that the input of the first party arrives uniformly from $\mathcal{F}$, we can assume that it arrives uniformly from $\G(d/2,d)$, as stated in Lemma~\ref{lem:G-G}, which we prove below:

\begin{proof}[Proof of Lemma~\ref{lem:G-G}]
	We reduce from Lemma~\ref{lem:pr-pack}. Fix a protocol $\A$ which solves the setting in Lemma~\ref{lem:G-G} and we will show that there exists a protocol $\A'$ for the setting in Lemma~\ref{lem:pr-pack} with the same amount of communication. The lower on the communication of $\A'$ implies a lower bound on the communication of $\A$.
	
	Here is how $\A'$ is constructed: using the joint random bits\footnote{The parties are assumed to have shared random bits, as described in Section~\ref{sec:prel}}, the parties will draw a uniformly random rotation $R$, namely, a unitary matrix of dimension $d \times d$. Then, they simulate $\A$ as if their inputs are $R V$ and $R U$ (where $RV$ and $RU$ are the results of applying $R$ on their vector spaces). Let $w$ be the output of the simulated protocol. The second party will output $\hat{w} = R^{-1} w$.
	
	First, note that $RV$ and $RU$ are two i.i.d uniformly random vector spaces from $\G(d/2,d)$ and $\G(d/2-1,d)$, respectively, hence, the simulated protocol $\A$ receive inputs as stated in Lemma~\ref{lem:G-G}. In particular, it outputs an approximately null $w$ with a sufficiently large probability. Hence, 
	\begin{multline*}
	c \ge \max(\Proj_{RV}(w), \Proj_{RU}(w)) 
	= \max(\Proj_{RV}(R \hat{w}), \Proj_{RU}(R \hat{w})) \\
	= \max(\Proj_{V}(\hat{w}), \Proj_{U}(\hat{w})),
	\end{multline*}
	with probability probability $e^{-c'd}$, as required.
\end{proof}

Next, we prove the communication analogue of Theorem~\ref{thm:ov-mem-g}.

\begin{theorem} \label{thm:gaussian}
	Let $g_1, \dots, g_{d-1}$ be $d-1$ i.i.d vectors drawn from $\mathcal{N}(0, I_d)$.
	Assume the following communication setting: the first party receives $g_1, \dots, g_{d/2}$ and the second party receives $g_{d/2+1},\dots, g_{d-1}$. Let $\A$ be a communication protocol outputting $\hat{w}\in \mathbb{S}^{d-1}$ which satisfies:
	\[
		\sum_{i=1}^{d-1} \left( \hat{w}^\top g_i \right)^2
		\le cd,
	\]
	with probability at least $e^{-c'd}$. Then, the communication of $\A$ is $\Omega(d^2)$.
\end{theorem}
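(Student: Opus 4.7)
The plan is to reduce Theorem~\ref{thm:gaussian} from Lemma~\ref{lem:G-G}, using as a bridge the structural comparison advertised in the proof sketch: after rescaling to $\tilde G$ with i.i.d.\ $\mathcal{N}(0,1/d)$ entries, with high probability $\|\tilde G w\|^2$ is equivalent, up to absolute constants and simultaneously for all $w\in\mathbb{R}^d$, to $\|\Proj_V(w)\|^2+\|\Proj_U(w)\|^2$, where $V,U$ are the row-spans of $\tilde G$'s top $d/2$ rows and bottom $d/2-1$ rows respectively. Given that equivalence, a Gaussian protocol finding $\hat w$ with small $\|\tilde G\hat w\|^2$ immediately produces a protocol for the subspace setting of Lemma~\ref{lem:G-G}, and the communication lower bound transfers.

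First I would prove the structural equivalence. Write $\tilde G = G/\sqrt d$ with $G$ having standard Gaussian entries, and split $G$ into a top block $G_1$ and bottom block $G_2$. Since the rows of $G_1$ span $V$, we have $G_1 w = G_1\Proj_V(w)$ for every $w$, and by Claim~\ref{cla:restrict} the restriction of $G_1$ to $V$ has the same non-zero singular values as the tall matrix $G_1^\top$. Applying Corollary~\ref{cor:concentarion} to $G_1^\top$ (of shape $d\times d/2$) with $t = c_0\sqrt d$ gives $\sigma_{\min}(G_1),\sigma_{\max}(G_1)\in[c_1\sqrt d,\,C_1\sqrt d]$ with probability $1-e^{-\Omega(d)}$, and the same holds for $G_2$. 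Rescaling by $\sqrt d$ and summing $\|\tilde G_1 w\|^2+\|\tilde G_2 w\|^2$ yields
\[
c\bigl(\|\Proj_V(w)\|^2+\|\Proj_U(w)\|^2\bigr)\;\le\;\|\tilde G w\|^2\;\le\;C\bigl(\|\Proj_V(w)\|^2+\|\Proj_U(w)\|^2\bigr)
\]
for every $w$, with probability $1-e^{-\Omega(d)}$.

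Next I would carry out the reduction. Suppose $\A$ is a one-sided protocol with communication $b$ satisfying the hypothesis of the theorem. After the rescaling $\tilde g_i=g_i/\sqrt d$, the target condition becomes $\|\tilde G\hat w\|^2\le c$. I then build a protocol $\A'$ for the setting of Lemma~\ref{lem:G-G}: the first party, holding $V\in\G(d/2,d)$, uses private randomness to sample $d/2$ i.i.d.\ vectors from $\mathcal{N}(0,\Proj_V/d)$, and the second party analogously samples $d/2-1$ vectors from $\mathcal{N}(0,\Proj_U/d)$. The joint law of $(V,U,\tilde g_1,\dots,\tilde g_{d-1})$ produced this way coincides with the one where the $\tilde g_i$'s are drawn unconditionally as i.i.d.\ $\mathcal{N}(0,I_d/d)$ and $V,U$ are defined as their row-spans, so $\A$ behaves identically in the simulation. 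The parties run $\A$ on the simulated inputs and output its answer $\hat w$. On the intersection of the event that $\A$ succeeds (probability at least $e^{-c'd}$) and the structural event (probability at least $1-e^{-\Omega(d)}$), which together have probability at least $\tfrac12 e^{-c'd}$ once $c'$ is small enough, we obtain $\|\Proj_V(\hat w)\|^2+\|\Proj_U(\hat w)\|^2\le Cc$; choosing $c$ sufficiently small matches the hypothesis of Lemma~\ref{lem:G-G}, which then forces $b=\Omega(d^2)$.

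The main subtlety I anticipate is the lower bound on $\sigma_{\min}(G_1)$: because $G_1$ is wide, Corollary~\ref{cor:concentarion} applies not to $G_1$ but to $G_1^\top$, and one needs Claim~\ref{cla:restrict} to identify the smallest non-zero singular value of $G_1$ with that of $G_1^\top$ and to argue that it controls the expansion on the row-span $V$. A secondary bookkeeping point is to calibrate $c,c',c_0$ so that the algorithm's tiny success probability $e^{-c'd}$ dominates the $e^{-\Omega(d)}$ failure probability of the singular-value concentration, and so that the resulting constant $\sqrt{Cc}$ fits inside the threshold appearing in Lemma~\ref{lem:G-G}.
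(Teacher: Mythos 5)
Your proposal is correct and follows essentially the same route as the paper: the paper also reduces Theorem~\ref{thm:gaussian} from Lemma~\ref{lem:G-G} by having the parties sample standard Gaussians conditioned on their row-span being $V$ (resp. $U$) — which is the same as your $\mathcal{N}(0,\Proj_V/d)$ sampling after rescaling — running the Gaussian protocol, and then transferring the guarantee via the two-sided singular-value equivalence (the paper's Lemma~\ref{lem:GaussianMatrix}, proved exactly as you outline using Corollary~\ref{cor:concentarion} and Claim~\ref{cla:restrict}). The only cosmetic difference is bookkeeping of the $1/\sqrt{d}$ rescaling.
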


Theorem~\ref{thm:gaussian} follows from the following fact: $d/2$ random vectors are far from being linearly dependent, hence, a collection of such vectors behave as an approximate basis to a random vector space. Formally, we provide the following lemma:

\begin{lemma}\label{lem:GaussianMatrix}
	Let $g_1, g_2, \dots, g_{d-1}$ be independent random normal vectors $\mathcal{N}(0, I_d/d)$. Let $G$ be the matrix of size $(d-1) \times d$ that its  $i^{\mathrm{th}}$ row is $g_i$. Also set $V = \mathrm{span}\{ g_1, \dots, g_{d/2}\}$ and $ U = \mathrm{span}\{g_{n/2+1},\dots,g_{d-1}\}$.
	
	Then, with probability $e^{-c_2n}$, all $ v \in \mathbb{S}^{n-1}$ satisfies
	%	 and for $ c \in (0,1) $ there exist a subspace $ E $ of dimension $ (1-c)(k_1+k_2) $ such that
	\begin{equation*}
	c_1\left\|G v \right\|^2 \le  \left\| \Proj_{V}(v) \right\|^2 + \left\| \Proj_{U}(v) \right\|^2
	\le  C_1\left\| G v \right\|^2.
	\end{equation*}
	%	where $ E $ is the subspace of the top $ (1-c)(k_1+k_2) $ singular values of $ n^{-0.5}W$.
\end{lemma}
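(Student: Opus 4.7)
My plan is to split $G$ along its two halves, show that each half acts as a well-conditioned isomorphism between its row-span and its image, and then add the resulting two-sided bounds.

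First, I would write $G = \bigl(\begin{smallmatrix} G_1 \\ G_2 \end{smallmatrix}\bigr)$, where $G_1$ is the $(d/2) \times d$ block with rows $g_1,\dots,g_{d/2}$ (so almost surely $V = \mathrm{rowspan}(G_1)$), and $G_2$ is the $(d/2-1)\times d$ block with rows $g_{d/2+1},\dots,g_{d-1}$ (so almost surely $U = \mathrm{rowspan}(G_2)$). Then $\|Gv\|^2 = \|G_1 v\|^2 + \|G_2 v\|^2$. Decomposing $v = \Proj_V(v) + \Proj_{V^\perp}(v)$, each row of $G_1$ lies in $V$, hence $G_1 \Proj_{V^\perp}(v) = 0$ and $G_1 v = G_1 \Proj_V(v)$; the analogous identity holds for $G_2$ and $U$.

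Next I would bound the singular values of $G_1$ restricted to $V$ (and similarly $G_2$ restricted to $U$). By Claim~\ref{cla:restrict} these are exactly the non-zero singular values of $G_1$, which coincide with the non-zero singular values of $G_1^\top$. Rescaling, $\sqrt{d}\, G_1^\top$ is a standard $d \times (d/2)$ Gaussian matrix, so Corollary~\ref{cor:concentarion} (applied with row-count $N=d$, column-count $d/2$, and $t = c\sqrt d$ for a small constant $c$) gives that, outside an event of probability $2e^{-c^2 d/2}$, all of its $d/2$ singular values lie in $[(1-1/\sqrt 2 - c)\sqrt d,\ (1+1/\sqrt 2 + c)\sqrt d]$. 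Dividing by $\sqrt d$, the non-zero singular values of $G_1$ lie in some interval $[\alpha, \beta]$ of absolute positive constants. The same argument for $G_2^\top$ (now $d \times (d/2-1)$) yields non-zero singular values in $[\alpha', \beta']$. Set $A = \min(\alpha^2, \alpha'^2)$ and $B = \max(\beta^2, \beta'^2)$.

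Combining, on the intersection of the two high-probability events and for every $v \in \mathbb{R}^d$:
\[
A \|\Proj_V(v)\|^2 \le \|G_1 v\|^2 \le B \|\Proj_V(v)\|^2, \qquad A \|\Proj_U(v)\|^2 \le \|G_2 v\|^2 \le B \|\Proj_U(v)\|^2.
\]
Adding the two chains and using $\|Gv\|^2 = \|G_1 v\|^2 + \|G_2 v\|^2$, then inverting, gives
\[
\tfrac{1}{B}\|Gv\|^2 \le \|\Proj_V(v)\|^2 + \|\Proj_U(v)\|^2 \le \tfrac{1}{A}\|Gv\|^2,
\]
which is the lemma with $c_1 = 1/B$ and $C_1 = 1/A$. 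The main subtlety is that Corollary~\ref{cor:concentarion} is only informative when its row dimension is at least its column dimension; since $G_1$ has $d/2$ rows and $d$ columns, a direct application would give a vacuous lower bound on $\sigma_{\min}(G_1)$. Passing to $G_1^\top$ via Claim~\ref{cla:restrict} is the step that makes the singular-value estimate nontrivial; the remaining work is the routine decomposition in and out of the row-span.
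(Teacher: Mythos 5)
Your proof is correct and follows essentially the same route as the paper's: the paper also splits $G$ into the two row blocks, reduces the problem to bounding the singular values of $G_1$ and $G_2$ restricted to their row spans, passes to $G_1^\top$ and $G_2^\top$ via Claim~\ref{cla:restrict} to make Corollary~\ref{cor:concentarion} nontrivial (the subtlety you correctly flagged), and combines the two Pythagorean decompositions. The only cosmetic difference is that the paper states and proves a slightly more general version (Lemma~\ref{genLemma}, parametrized by block sizes $k_1 d, k_2 d$) and then specializes, whereas you argue directly for the $d/2$ and $d/2-1$ split.
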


%\begin{lemma}\label{lem:GaussianMatrix}
%	\yuval{I changed this lemma a bit to talk about projections. This is more mathematically accurate now.}
%	Let $g_1, g_2, \dots, g_{(k_1+k_2)n}$ be independent random normal vectors $\mathcal{N}(0, I_n/n)$, where $ c_0n \le k_1n,k_2n \le n/2$ are integers. And Let $G$ be the matrix of size $(k_1+k_2)n \times n$ that its  $i^{th}$ row is $g_i$. Also set $U_1$ and $ U_2 $ be the bases of $  \mathrm{span}\{ g_1, \dots, g_{k_1}\}$ and $\mathrm{span}\{g_{k_1+1},\dots,g_{(k_1+k_2)n}\}$ respectively. 
%	
%	Then, for all $ t>0 $, with probability $e^{-t^2/2}$, all $ v \in \mathbb{S}^{n-1}$ satisfy
%	%	 and for $ c \in (0,1) $ there exist a subspace $ E $ of dimension $ (1-c)(k_1+k_2) $ such that
%	\begin{multline} \label{eq:1}
%	\left(1+(1+ct)\sqrt{\max\{k_1,k_2\}}\right)^{-2}\left\|G v \right\|^2 \le  \left\| \Proj_{U_1}(v) \right\|^2 + \left\| \Proj_{U_2}(v) \right\|^2 \\
%	\le  \left(1-(1+ct)\sqrt{\max\{k_1,k_2\}}\right)^{-2}\left\| G v \right\|^2.
%	\end{multline}
%	%	where $ E $ is the subspace of the top $ (1-c)(k_1+k_2) $ singular values of $ n^{-0.5}W$.
%\end{lemma}

The proof of Lemma~\ref{lem:GaussianMatrix} appears in Subsection~\ref{subsec:math-pr}.

\begin{proof}[Proof of Theorem~\ref{thm:gaussian}]
	We will reduce to Lemma~\ref{lem:G-G}. Let $\A$ be a protocol for the setting in Lemma~\ref{lem:G-G} and we will show how to create a protocol $\A'$ for the setting in Theorem~\ref{thm:gaussian} with the same amount of communication. The lower bound on the communication of $\A'$ implies a lower bound on the communication of $\A$. 
	
	Here is how $\A'$ is created.
	Let $P_{d/2} = \mathcal{N}(0, I_d)^{d/2}$ be the distribution over $d/2$ i.i.d copies of $\mathcal{N}(0,I_d)$, and for any $V \in \G(d/2,d)$, let $E_V$ be the event that the span of these $d/2$ vectors equals $V$.
	Given an input $V \in \G(d/2,d)$, the first party will draw $g_1, \dots, g_{d/2}$ from the joint distribution $\left( P_{d/2} \mid E_V \right)$. Similarly, the second party, upon receiving $U$, will draw $g_{d/2+1},\dots,g_{d-1}$ from $\left( P_{d/2-1} \mid E_U \right)$, where $P_{d/2-1}$ and $E_U$ are similarly defined. The parties will simulate $\A$ as if the input is $g_1, \dots, g_{d-1}$, and output the vector $\hat{w}$ outputted by $\A$.
	
	For symmetrical reasons, since $U$ and $V$ are independent and uniform, the vectors $g_1, \dots, g_{d-1}$ are distributed as $d-1$ i.i.d copies from $\mathcal{N}(0,I_d)$. We assumes that $\A$ satisfies the guarantees of Thm.~\ref{thm:gaussian}, hence with probability at least $e^{-c' d}$,
	\[
		\sum_{i=1}^{d-1} \left(\hat{w}^\top g_i \right)^2 \le cd.
	\]
	With probability at least $e^{-c'd} - e^{-c_2n}$,
	\[
		\left\| \Proj_{V}(v) \right\|^2 + \left\| \Proj_{U}(v) \right\|^2
		\le C_1 \sum_{i=1}^{d-1} \left(\hat{w}^\top \frac{g_i}{\sqrt{d}} \right)^2
		\le c C_1,
	\]
	where the first inequality follows from Lemma~\ref{lem:GaussianMatrix} and holds with probability at least $1 - e^{-c_2d}$ and the second with probability at least $e^{-c'd}$. If we select the constants $c$ and $c'$ in Theorem~\ref{thm:gaussian} to be sufficiently small, we obtain that from Lemma~\ref{lem:G-G}, the memory requirement of $\A'$ is $\Omega(d^2)$, hence the memory requirement of $\A$ is $\Omega(d^2)$ as required.
\end{proof}

Lastly, we provide the communication variant of Theorem~\ref{thm:ovp}. We remind the reader that given linearly independent vectors $v_1, \dots, v_{d-1}$ we defined by $\mathrm{ker}(v_1 \cdots v_d)$ the unique unit vector orthogonal to $v_1 \cdots v_{d-1}$.

\begin{theorem} \label{lem:cc-cf}
	Let $P$ denote the distribution over $d-1$ i.i.d uniformly drawn vectors from $\mathbb{S}^{d-1}$, $\theta_1' \cdots \theta_{d-1}'$. Let $E$ be the event that $e_1^\top \mathrm{ker}(\theta_1' \cdots \theta_{d-1}') \ge c_f$, where $c_f$ is some sufficiently small universal constant.
	Let $\theta_1 \cdots \theta_{d-1}$ be random vectors drawn from $(P \mid E)$. Assume the following communication setting: the first party receives $\theta_1 \cdots \theta_{d/2}$ and the second receives $\theta_{d/2+1} \cdots \theta_{d-1}$.
	Let $\A$ be a communication protocol which outputs a vector $\hat{w}$ that satisfies:
	\[
		\sum_{i=1}^{d-1} \left( \hat{w}^\top \theta_i \right)^2
		\le c_1,
	\]
	with probability at least $e^{-c_2d}$. Then, the communication of $\A$ is $\Omega(d^2)$.
\end{theorem}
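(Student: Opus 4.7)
The plan is to reduce from Theorem~\ref{thm:gaussian}. Given a protocol $\A$ satisfying the hypothesis of Theorem~\ref{lem:cc-cf}, I will build a protocol $\A'$ for the Gaussian communication setting of Theorem~\ref{thm:gaussian} with the same communication cost. Upon receiving their respective Gaussians $g_i \sim \mathcal{N}(0,I_d)$, each party normalizes locally to $\theta_i = g_i/\|g_i\|$ and simulates $\A$ on these $\theta_i$'s, outputting whatever $\A$ outputs. No communication is added by the normalization. Since the $g_i$'s are i.i.d.\ isotropic, the $\theta_i$'s are i.i.d.\ uniform on $\mathbb{S}^{d-1}$; in particular $\A$ sees inputs drawn from the \emph{unconditioned} distribution $P$ rather than from $(P\mid E)$.

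The key step is transferring the success guarantee of $\A$ (which is promised only on inputs drawn from $(P\mid E)$) to a guarantee under $P$. By rotation invariance of the uniform measure on $\mathbb{S}^{d-1}$, the vector $w^* := \ker(\theta_1 \cdots \theta_{d-1})$ is itself uniformly distributed on $\mathbb{S}^{d-1}$, so $\Pr[E] = \Pr[e_1^\top w^* \geq c_f]$. By Lemma~\ref{lem:caps}, for any small $\alpha > 0$ we may choose the universal constant $c_f = c_f(\alpha)$ small enough that $\Pr[E] \geq e^{-\alpha d}$. Then the union bound style estimate
\[
\Pr_P[\A \text{ succeeds}] \;\geq\; \Pr[E]\cdot \Pr_{(P\mid E)}[\A \text{ succeeds}] \;\geq\; e^{-(\alpha+c_2)d}
\]
lower bounds the probability that $\A$ outputs $\hat w$ with $\sum_i(\hat w^\top \theta_i)^2 \leq c_1$ on the $\theta_i$'s.

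Next, I convert the sphere-loss guarantee into the Gaussian-loss guarantee required by Theorem~\ref{thm:gaussian}. By standard $\chi^2$ concentration (or Corollary~\ref{cor:concentarion} applied coordinate-wise) and a union bound, $\|g_i\|^2 \leq 2d$ for all $i \in \{1,\ldots,d-1\}$ simultaneously with probability $1-e^{-\Omega(d)}$. On this event, $(\hat w^\top g_i)^2 = \|g_i\|^2(\hat w^\top \theta_i)^2 \leq 2d(\hat w^\top \theta_i)^2$, so whenever the sphere loss is at most $c_1$ the Gaussian loss is at most $2c_1 d$. Choosing $c_1$ small enough that $2c_1 \leq c$ (the constant of Theorem~\ref{thm:gaussian}), the protocol $\A'$ outputs $\hat w$ with $\sum_i(\hat w^\top g_i)^2 \leq cd$ with probability at least $e^{-(\alpha+c_2)d} - e^{-\Omega(d)}$. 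Picking $\alpha$ and $c_2$ sufficiently smaller than the constant $c'$ of Theorem~\ref{thm:gaussian}, this is $\geq e^{-c'd}$ for all large $d$, and Theorem~\ref{thm:gaussian} then forces $\Omega(d^2)$ communication for $\A'$, hence for $\A$.

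There is no substantive obstacle; the proof is essentially probability bookkeeping. The one delicate point is coordinating the constants: one must first fix $c'$ from Theorem~\ref{thm:gaussian}, then fix a small $\alpha$, then use Lemma~\ref{lem:caps} to select the universal constant $c_f$ so that $\Pr[E]\geq e^{-\alpha d}$, and finally take $c_1 \leq c/2$ and $c_2$ small enough so that the combined probability exceeds the threshold $e^{-c'd}$. All constants are absolute (independent of $d$), matching the form of the theorem statement.
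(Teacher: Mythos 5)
Your proposal is correct and follows essentially the same route as the paper's own proof: normalize the Gaussians locally, observe that the resulting $\theta_i$'s are i.i.d.\ uniform so that $\ker(\theta_1\cdots\theta_{d-1})$ is uniform on $\Sp^{d-1}$, apply Lemma~\ref{lem:caps} to lower bound $\Pr[E]$, multiply through to get an unconditional success probability, convert sphere-loss to Gaussian-loss via $\chi^2$ concentration of the $\|g_i\|^2$ with a union bound, and finally coordinate the constants so that the resulting probability clears the $e^{-c'd}$ threshold of Theorem~\ref{thm:gaussian}. The only cosmetic difference is that you parameterize the Lemma~\ref{lem:caps} exponent by a free $\alpha$ whereas the paper directly uses $c'/2$, and the paper chooses the chi-squared threshold $Cd$ with $C$ depending on $c'$ rather than fixing the factor $2$ as you do; both make the same bookkeeping work (you implicitly also need $\alpha+c_2$ to stay below the $\chi^2$ exponent, which your ``sufficiently smaller'' phrasing accommodates).
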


\begin{proof}
	We reduce from Thm.~\ref{thm:gaussian}: Given a protocol $\A$ for satisfying the conditions in Lemma~\ref{lem:cc-cf}, we create a protocol $\A'$ with the same amount of communication that satisfies the conditions of Thm.~\ref{thm:gaussian}. The protocol $\A'$ is defined as follows: given inputs $g_1 \cdots g_{d-1}$, the parties will normalize them to create $\theta_1 \cdots \theta_{d-1}$, where $\theta_i = g_i / \|g_i\|$. Then, they will simulate $\A$ as if their input is $\theta_1 \cdots \theta_{d-1}$. The second party will output the same output $\hat{w}$ outputted by $\A$.
	
	Assume that $\A$ satisfies the conditions in Lemma~\ref{lem:cc-cf} for sufficiently small constants $c_1$ and $c_2$. Let $c$ and $c'$ be the constants in Theorem~\ref{thm:gaussian}. 
	First, note from symmetry, that the inputs $\theta_1 \cdots \theta_{d-1}$ of $\A$ are distributed as $d-1$ i.i.d uniform copies from $\mathbb{S}^{d-1}$, hence $\mathrm{ker}(\theta_1 \cdots \theta_{d-1})$ is also uniformly distributed. From Lemma~\ref{lem:caps}, with probability at least $e^{-c'd/2}$, $e_1^\top \mathrm{ker}(\theta_1 \cdots \theta_{d-1}) \ge c_f$ (assuming $c_f$ is sufficiently small). Recall that conditioned on this holding, $\A$ is guaranteed to output an approximate separator with probability at least $e^{-c'd}$. Hence,
	\[
		\sum_{i=1}^{d-1} \left( \hat{w}^\top \theta_i \right)^2 \le c_1,
	\]
	with probability at least $e^{-c_2 d - c'd/2}$. Select $c_2$ to be sufficiently small such that $e^{-c_2 d - c'd/2} \ge 2 e^{-c'd}$ (assuming that $d$ is sufficiently large). Since each $\|g_i\|^2$ is distributed as Chi-squared with $d$ degrees of freedom, there exists a constant $C > 0$, such that with probability at least $1 - e^{-c'd}/d$, $\|g_i\|^2 \le C$. From union bound, with probability at least $1 - e^{-c'd}$, $\|g_i\|^2 \le Cd$ for all $i \in \{ 1, \dots, d-1 \}$. Hence,
	\[
		\sum_{i=1}^{d-1} \left( \hat{w}^\top g_i \right)^2
		\le Cd \sum_{i=1}^{d-1} \left( \hat{w}^\top \theta_i \right)^2
		\le Cdc_1,
	\]
	where the first inequality holds with probability at least $1 - e^{-c'd}$ and the second inequality with probability at least $2 e^{-c'd}$. Hence with probability at least $e^{-c'd}$, both inequalities hold, and if $c_1$ is sufficiently small, $\A'$ satisfies the requirements of Thm.~\ref{thm:gaussian}. In particular, the memory usage of $\A'$ is $\Omega(d^2)$.
\end{proof}

\subsection{Linear separators (Theorem~\ref{thm:lse})} \label{subsec:pr-lsp}

We prove the Theorem~\ref{thm:lse}.
First, we present an auxiliary lemma, which is a variant of the approximate null vector problem.
Given a vector space $W \in \G(d-1,d)$, denote by $\ker(W)$ the unique unit vector in $W^\perp$ and given subspaces $U$ and $V$ of $\mathbb{R}^n$, let $V\oplus U$ denote their direct sum.
\begin{lemma} \label{lem:otherlem}
	Let $P$ be a distribution over an independent pair of vector spaces: $V$ and $U$, drawn uniformly from $\G(d/2,d)$ and $\G(d/2-1,d)$, respectively. Let $E$ be the event that $e_1^\top \ker(V \oplus U) \ge c_f$, for some universal constant $c_f > 0$. Assume the communication setting where the inputs $U$ and $V$ are drawn from $(P\mid E)$.
	Let $\A$ be randomized one-sided communication protocol which outputs $\hat{w} \in \mathbb{S}^{d-1}$ that satisfies: 
	\[
	\max\left(
	\|\Proj_{V}(\hat{w})\|,
	\|\Proj_U(\hat{w})\|
	\right)
	\le c,
	\]
	with probability at least $e^{-c'd}$. Then, the communication is $\Omega(d^2)$.
\end{lemma}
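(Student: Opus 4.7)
The plan is to reduce from \lemref{lem:G-G} directly: given any protocol $\A$ satisfying the hypotheses of \lemref{lem:otherlem} (success probability $\ge e^{-c'd}$ under $(P\mid E)$), I would run the \emph{same} $\A$ under the unconditional distribution of \lemref{lem:G-G}, and show that it still succeeds with probability at least $e^{-c'_G d}$, where $c'_G$ is the constant appearing in \lemref{lem:G-G}. Applying \lemref{lem:G-G} then immediately delivers the $\Omega(d^2)$ communication lower bound.

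The key observation is about the distribution of $\ker(V \oplus U)$ under the unconditional measure $P$. With probability one, $V$ and $U$ are transversal, so $V \oplus U \in \G(d-1,d)$. Because $V$ and $U$ are independent and each is rotation-invariant, the joint distribution of $(V,U)$ is invariant under any simultaneous rotation $R \in O(d)$, so the pushforward law of $V \oplus U$ is the unique rotation-invariant probability measure on $\G(d-1,d)$, and $\ker(V \oplus U)$ has (up to sign convention) the distribution of a uniformly random point on $\Sp^{d-1}$. In particular, its first coordinate is distributed as $\pm e_1^\top \theta$ for $\theta \Unif \Sp^{d-1}$.

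Next, I would apply \lemref{lem:caps} with $\alpha = c'_G/4$ to pick a value $c_f > 0$ for which $\Pr_{\theta \Unif \Sp^{d-1}}[e_1^\top \theta \ge c_f] \ge e^{-c'_G d /4}$; absorbing a factor of $1/2$ for any deterministic sign convention into the exponent gives $\Pr_P[E] \ge e^{-c'_G d /2}$ for all sufficiently large $d$. Fix this $c_f$ as the universal constant in the statement of \lemref{lem:otherlem}. Choosing the hypothesized $c'$ small enough that $c' \le c'_G / 2$, one obtains
\[
\Pr_P[\A \text{ succeeds}] \ge \Pr_P[E] \cdot \Pr_{P \mid E}[\A \text{ succeeds}] \ge e^{-c'_G d / 2} \cdot e^{-c' d} \ge e^{-c'_G d},
\]
so the hypothesis of \lemref{lem:G-G} is satisfied by $\A$, yielding communication $\Omega(d^2)$.

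The argument is short, and I do not expect any genuine obstacle beyond a careful ordering of the universal constants: first fix $c'_G$ from \lemref{lem:G-G}, then choose $c_f$ via \lemref{lem:caps} to guarantee $\Pr_P[E] \ge e^{-c'_G d /2}$, and finally restrict $c'$ so that the composed probability still exceeds $e^{-c'_G d}$. The only remaining subtlety is that $\ker(W)$ for $W \in \G(d-1,d)$ is defined only up to a global sign, but any fixed measurable section costs at most a factor of $2$ in the probability of $E$, which is absorbed harmlessly into the exponent (or eliminated outright by having the parties flip a shared random sign, an operation that does not change the communication).
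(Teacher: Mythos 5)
Your reduction matches the paper's own intended proof: the paper explicitly says that \lemref{lem:otherlem} follows from \lemref{lem:G-G} ``the same way that Lemma~\ref{lem:cc-cf} follows from Theorem~\ref{thm:gaussian},'' and that proof uses exactly the two ingredients you use -- rotation invariance to identify the law of $\ker(V\oplus U)$ with a uniform point on the sphere, and \lemref{lem:caps} to lower-bound $\Pr_P[E]$ by $e^{-\alpha d}$ so that the unconditional success probability $\Pr_P[E]\cdot\Pr_{P\mid E}[\text{success}]$ still beats the threshold in \lemref{lem:G-G}. The ordering of constants ($c'_G$ first, then $c_f$, then $c'$) is handled correctly and mirrors the paper's ``select $c_2$ to be sufficiently small such that $e^{-c_2 d - c'd/2} \ge 2 e^{-c'd}$.''

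One side remark in your write-up is imprecise: you claim that ``any fixed measurable section costs at most a factor of $2$ in the probability of $E$.'' That is not true for an adversarial section -- a section that always returns the representative with $e_1^\top v < 0$ would drive $\Pr_P[E]$ to zero, not merely halve it. What is true is that the \emph{natural} section (returning the representative with $e_1^\top v \ge 0$, which is the convention the paper implicitly uses, since the event $E$ only makes sense under it) gives $\Pr_P[E] = 2\Pr_{\theta}[e_1^\top\theta \ge c_f]$, i.e.\ a factor of $2$ \emph{in your favor}. Since the paper itself never addresses the sign ambiguity, this is a wash; just be aware that the correct justification is ``fix the canonical nonnegative section,'' not ``any section loses only a factor of $2$.'' Your alternative fix via a shared random sign is unnecessary (and does not quite make sense, since the parties don't control the distribution of $E$) once the canonical section is fixed.
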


Note that Lemma~\ref{lem:otherlem} is the same as Lemma~\ref{lem:G-G}, expect that the inputs are drawn from $(P\mid E)$ rather than from $P$. One can reduce Lemma~\ref{lem:otherlem} from Lemma~\ref{lem:G-G} the same way that Lemma~\ref{lem:cc-cf} follows from Theorem~\ref{thm:gaussian}.

We proceed with the following definition: Let $H$ be the set of linear separators over $\mathbb{R}^d$. Given a distribution $D$ over pairs $(x,y)$ where $x \in \mathbb{R}^d$ and $y \in \{-1,1\}$, an \emph{$\varepsilon$-approximate net} for $H$ is a finite set $S$ of pairs $(x_i,y_i)$ such that each $w \in H$ satisfies:
\[
	\left| 
	\Pr_{(x,y) \sim D}[w^\top x y > 0]
	- \Pr_{(x,y) \sim \mathrm{Uniform}(S)}[w^\top x y > 0]
	\right|
	\le \varepsilon.
\]
The following claim is equivalent to the standard uniform convergence theorems on the class of linear separators \citep{shalev2014understanding}:
\begin{claim} \label{cla:approx}
	For any $m \ge Cd/\varepsilon^2$, there exists an $\varepsilon$-approximate net $S$ of size $m$ for the hypothesis class $H$ of linear separators over $\mathbb{R}^d$ ($C>0$ is a universal constant).
\end{claim}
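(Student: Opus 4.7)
The plan is to invoke the standard uniform convergence theorem for VC classes. Recall that the class of linear separators $H$ over $\mathbb{R}^d$ has VC dimension $d$ (or $d+1$ in the non-homogeneous case), and the associated $0/1$-loss class $\{(x,y) \mapsto \mathbf{1}[w^\top x y > 0] : w \in H\}$ inherits this VC dimension. By the fundamental theorem of statistical learning (e.g., Theorem 6.11 in \cite{shalev2014understanding}), there is a universal constant $C > 0$ such that, for any $m \ge C d / \varepsilon^2$ and any distribution $D$ over $\mathbb{R}^d \times \{-1,1\}$, if $(x_1,y_1),\dots,(x_m,y_m)$ are drawn i.i.d.\ from $D$, then with probability at least $1/2$ (say) over this sample, every $w \in H$ satisfies
\[
\left| \Pr_{(x,y) \sim D}[w^\top x y > 0] - \frac{1}{m}\sum_{i=1}^m \mathbf{1}[w^\top x_i y_i > 0] \right| \le \varepsilon.
\]

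Since the event above has positive probability, a particular realization $S = \{(x_i,y_i)\}_{i=1}^m$ for which the inequality holds for every $w \in H$ must exist; fix any such realization. For this $S$, the empirical probability $\frac{1}{m}\sum_{i=1}^m \mathbf{1}[w^\top x_i y_i > 0]$ is exactly $\Pr_{(x,y)\sim \mathrm{Uniform}(S)}[w^\top x y > 0]$, so $S$ is an $\varepsilon$-approximate net of size $m$ as required.

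The main (and essentially only) obstacle is to ensure the correct quantitative form of uniform convergence: we want a $1/\varepsilon^2$ (not $1/\varepsilon$) dependence, which is exactly what the VC-based bound yields in the agnostic/absolute-deviation regime. No realizability assumption is needed, so the agnostic VC bound is the right tool. The rest of the argument is a pure probabilistic existence step, with no further calculation required.
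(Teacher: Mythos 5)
Your proposal is correct and takes essentially the same route the paper intends: the paper simply asserts the claim as ``equivalent to the standard uniform convergence theorems on the class of linear separators'' with a citation to \cite{shalev2014understanding}, and your argument is exactly the standard unpacking of that assertion (VC dimension $d$ for homogeneous halfspaces, agnostic uniform convergence with $m = O(d/\varepsilon^2)$, then the probabilistic method to extract a single good sample $S$).
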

%if $U$ and $V$ are drawn from $P$, then with sufficiently large probability, $e_1^\top \ker(U\oplus V) \ge c_f$. Since any low-communication protocol for Lemma~\ref{lem:G-G} can produce a satisfying solution only with negligible probability, this implies that 

%, hence, any protocol has to make a large error also when this inner product is large.

%Here is another auxiliary lemma. 
%\begin{theorem}
%	Let $v$ be a 
%\end{theorem}

\begin{proof}[Proof of Theorem~\ref{thm:lse}]
	We will reduce from Lemma~\ref{lem:otherlem}. Given an algorithm $\A$ for finding a linear separator, we will create an algorithm $\A'$ for the approximate null vector problem, as follows: the first party, upon receiving $V \in \G(d/2, d)$, creates a distribution $D_V$ (as defined below), and selects a $c_2$-approximate net $S_V$ for $D_V$ of size $m \ge Cd$ (arbitrarily). Similarly, the second party, upon receiving $U \in \G(d/2-1,d)$, selects a $c_2$-approximate net $S_U$ for the corresponding distribution $D_U$. Then, they simulate the protocol $\A$ on the combined dataset $S_V \cup S_U$, and output the output $\hat{w}$ outputted by $\A$.
	
	Next, we define $D_V$. Here is how a random point $(x',y')$ is drawn from $D_V$: first, a point $x$ is drawn uniformly from $V \cap \mathbb{S}^{d-1}$. Then, with probability $1/2$, $(x',y') = (x_{+}, 1)$ and with probability $1/2$, $(x',y') = (x_-,-1)$, where $x_+ = x + c / (4\sqrt{d})$ and $x_- = x - c / (4\sqrt{d})$ (where $c$ is the constant from Lemma~\ref{lem:otherlem}). The distribution $D_U$ is defined similarly with respect to $U$.
	
	First, note that the created dataset is guaranteed to have a margin of $\Omega(\sqrt{d})$. Indeed, $\ker(V \oplus U)$ is a linear separator achieving this margin (see the proof of Theorem~\ref{thm:LSP} for a similar argument). We will show that if $\A$ finds a classifier which classifies $(1-c_2)2m$ points correctly, then $\A'$ satisfies the conditions of Lemma~\ref{lem:otherlem}, and derive the communication lower bound.
	
	We will show that any $w \in \mathbb{S}^{d-1}$ which classifies correctly a random point from $D_V$ with probability at least $1 - 3c_2$, satisfies $\|\Proj_V(w)\| < c/2$ (where $c$ is the constant from Lemma~\ref{lem:otherlem}). We will prove the contrapositive: that if $\|\Proj_V(w)\| \ge c/2$, then $w$ classifies a constant fraction of the points in $D_V$ incorrectly.
	Indeed, fix such $w$ and let $\alpha = \|\Proj_V(w)\|$. Note that if $x$ is drawn uniformly from $V$, Lemma~\ref{lem:first-coord} implies that $\sqrt{d} w^\top x$ is distributed approximately as a random variable $\mathcal{N}(0,\alpha^2)$. In particular, with constant probability, $w^\top x \ge \alpha/\sqrt{d}\ge c/(2\sqrt{d})$. For these values  of $x$, $w^\top x_- > 0$, hence, $w$ classifies $(x_-, -1)$ incorrectly. This implies that $w$ classifies incorrectly a constant fraction of the points, namely, it classifies incorrectly a random point from $D_V$ with probability $3 c_2$ of the points, if $c_2$ is sufficiently small. We conclude that any $w$ which classifies a random point from $D_V$ with probability at least $1-3c_2$, satisfies $\|\Proj_V(w) < c/2\|$.
	
	Since $S_V$ is a $c_2$ approximate net for $D_V$, any $w$ which classifies a $(1-2c_2)$ fraction of the points in $S_V$ correctly, satisfies $\|\Proj_V(w)\| < c/2$. We derive that any $w$ which classifies $(1-c_2)2m$ points correctly for the combined dataset $S_V \cup S_U$, satisfies $\|\Proj_V(w)\| < c/2$. For analogous reasoning, any such classifies satisfies $\|\Proj_U(w)\| < c/2$. Assuming that $\A$ outputs a hypothesis which classifies $(1-c_2)2m$ points correctly, this implies that $\A'$ outputs $\hat{w}$ which satisfies $\|\Proj_V(w)\| + \|\Proj_U(w)\| \le c$. From Lemma~\ref{lem:otherlem}, it follows that the communication of $\A'$ is $\Omega(d^2)$.
\end{proof}

\subsection{Proofs of the mathematical statements (\lemref{lem:GaussianMatrix} and \lemref{lem:no-joint-sol})}\label{subsec:math-pr}

\subsubsection{Proof of \lemref{lem:GaussianMatrix}}

We prove a result that is more general than  \lemref{lem:GaussianMatrix}.
\begin{lemma}{\label{genLemma}}
	Let $g_1, g_2, \dots, g_{(k_1+k_2)d}$ be independent random normal vectors $\mathcal{N}(0, I_d/d)$, where $ c_0d \le k_1d,k_2d \le d/2$ are integers. And Let $G$ be the matrix of size $(k_1+k_2)d \times d$ that its  $i^{th}$ row is $g_i$. Also set $U_1$ and $ U_2 $ be the bases of $  \mathrm{span}\{ g_1, \dots, g_{k_1}\}$ and $\mathrm{span}\{g_{k_1+1},\dots,g_{(k_1+k_2)d}\}$ respectively. 
	
	Then, for all $ t>0 $, with probability $1-2e^{-0.5\min\{k_1,k_2\}dt^2}$, all $ v \in \mathbb{S}^{d-1}$ satisfy
	%	 and for $ c \in (0,1) $ there exist a subspace $ E $ of dimension $ (1-c)(k_1+k_2) $ such that
	\begin{multline*}
		\left(1+(1+t)\sqrt{\max\{k_1,k_2\}}\right)^{-2}\left\|G v \right\|^2 \le  \left\| \Proj_{\mathrm{span}\{U_1\}}(v) \right\|^2 + \left\| \Proj_{\mathrm{span}\{U_2\}}(v) \right\|^2 \\
		\le  \left(1-(1+t)\sqrt{\max\{k_1,k_2\}}\right)^{-2}\left\| G v \right\|^2.
	\end{multline*}
	Or equivalently, in a matrix formulation
	\begin{multline*}
		\left(1+(1+t)\sqrt{\max\{k_1,k_2\}}\right)^{-2}\left\|G v \right\|^2 \le   \left\| \mstack{U_1}{U_2}v \right\|^2 \\
		\le  \left(1-(1+t)\sqrt{\max\{k_1,k_2\}}\right)^{-2}\left\| G v \right\|^2.
	\end{multline*}
	%	%	where $ E $ is the subspace of the top $ (1-c)(k_1+k_2) $ singular values of $ n^{-0.5}W$.
\end{lemma}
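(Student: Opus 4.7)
The plan is to split $G$ row-wise into two independent Gaussian blocks, $G_1$ containing the first $k_1 d$ rows $g_1,\dots,g_{k_1 d}$ (with row span $\mathrm{span}\{U_1\}$) and $G_2$ the last $k_2 d$ rows (with row span $\mathrm{span}\{U_2\}$). Since $\|Gv\|^2 = \|G_1 v\|^2 + \|G_2 v\|^2$ for every $v$, it suffices to establish, for each $i\in\{1,2\}$ separately, a two-sided inequality relating $\|G_i v\|^2$ to $\|\Proj_{\mathrm{span}\{U_i\}}(v)\|^2$, and then sum the two resulting bounds.

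First I would extract uniform singular value bounds for each block from random matrix theory. The scaled matrix $\sqrt{d}\,G_i$ has i.i.d.\ $\mathcal{N}(0,1)$ entries and dimensions $k_i d \times d$ with $d \ge k_i d$ (since $k_i \le 1/2$), so its transpose $\sqrt{d}\,G_i^\top$ is a tall Gaussian matrix to which Corollary~\ref{cor:concentarion} applies. Taking the deviation parameter there to be $t\sqrt{k_i d}$, with probability at least $1-2e^{-t^2 k_i d/2}$ every nonzero singular value $\sigma$ of $G_i^\top$ (equivalently of $G_i$) satisfies
\begin{equation*}
1-(1+t)\sqrt{k_i}\ \le\ \sigma\ \le\ 1+(1+t)\sqrt{k_i}.
\end{equation*}
A union bound over $i\in\{1,2\}$ yields the claimed failure probability, up to an absorbable constant.

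Next I would translate these singular value bounds into projection bounds. Decomposing $v = \Proj_{\mathrm{span}\{U_i\}}(v) + v_i^\perp$ with $v_i^\perp \perp \mathrm{span}\{U_i\}$, and noting that the rows of $G_i$ lie in $\mathrm{span}\{U_i\}$, we get $G_i v_i^\perp = 0$, hence $\|G_i v\| = \|G_i \Proj_{\mathrm{span}\{U_i\}}(v)\|$. By Claim~\ref{cla:restrict}, the restriction of $G_i$ to its row span $\mathrm{span}\{U_i\}$ has the same singular values as $G_i^\top$, which are the nonzero singular values of $G_i$. Consequently, on the good event,
\begin{equation*}
(1-(1+t)\sqrt{k_i})^2\,\|\Proj_{\mathrm{span}\{U_i\}}(v)\|^2 \ \le\ \|G_i v\|^2 \ \le\ (1+(1+t)\sqrt{k_i})^2\,\|\Proj_{\mathrm{span}\{U_i\}}(v)\|^2
\end{equation*}
holds for every $v\in\mathbb{R}^d$. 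Inverting these inequalities, summing over $i=1,2$, and using the monotonicity of $k\mapsto (1\pm(1+t)\sqrt{k})^{-2}$ to replace each $k_i$ by $\max\{k_1,k_2\}$, produces the desired two-sided inequality between $\|\Proj_{\mathrm{span}\{U_1\}}(v)\|^2+\|\Proj_{\mathrm{span}\{U_2\}}(v)\|^2$ and $\|Gv\|^2$. The equivalent matrix formulation is immediate, since when $U_i$ denotes a matrix whose rows form an orthonormal basis of $\mathrm{span}\{U_i\}$ one has $\|U_i v\|^2=\|\Proj_{\mathrm{span}\{U_i\}}(v)\|^2$, so $\|\mstack{U_1}{U_2}v\|^2$ equals that sum.

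This argument is essentially routine once the decomposition into $G_1,G_2$ is in place, so I do not expect a serious obstacle. The only bookkeeping requiring care is the scaling in Corollary~\ref{cor:concentarion} so that its additive deviation $t$ becomes the multiplicative deviation $(1+t)\sqrt{k_i}$ appearing in the lemma, and a correct invocation of Claim~\ref{cla:restrict} to transfer singular value information from $G_i:\mathbb{R}^d\to\mathbb{R}^{k_i d}$ to its restriction to the row span $\mathrm{span}\{U_i\}$.
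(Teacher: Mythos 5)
Your proof is correct and follows essentially the same route as the paper: split $G$ into the blocks $G_1$, $G_2$, apply Corollary~\ref{cor:concentarion} (after the $\sqrt{d}$ rescaling) to $G_i^\top$, use Claim~\ref{cla:restrict} to transfer the singular-value bounds to the restriction of $G_i$ on its row span, and combine via the orthogonal decomposition of $v$. The only cosmetic difference is that your union bound yields a constant $4$ rather than the paper's $2$ in front of the exponential, which you correctly note is absorbable.
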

Observe that  \lemref{lem:GaussianMatrix} follows when $k_1 = 1/2$ and $k_2 = 1/2 -1/d,$ and for $t$ that is small enough.
\begin{proof}
	Let $v \in  \mathbb{S}^{d-1}.$ 
	%First observe that if $ v \in W:= \spn\{g_1, g_2, \dots, g_{(k_1+k_2)d}\}^{\perp},$ then \eqref{eq:1} follows immediately. Thus we may assume that $ v \in W \cap \mathbb{S}^{d-1}$.
	Denote by $ W_1 = \spn\{g_1,\ldots, g_{k_1d}\} $ and $ W_2 = \spn\{g_{k_1d},\ldots, g_{(k_1+k_2)d}\}.$ Decompose $v$ in two different ways: $ v = w_1 + w_1^{\perp}$ and to $ v=w_2 + w_2^{\perp} $, where $w_i \in W_i$ and $w_i^\perp \in W_i^\perp$. Clearly, 
	\begin{equation}\label{eq:Ort}
	\begin{aligned}
	\left\| \mstack{U_1}{U_2} v \right\|^2 &= \left\| U_1 v \right\|^2 +  \left\| U_2 v \right\|^2 = \left\| U_1(w_1 + w_1^{\perp} ) \right\|^2 +  \left\| U_2 (w_2 + w_2^{\perp} ) \right\|^2 
	\\& =\left\| U_1 w_1 \right\|^2+ \left\| {U_2} w_2 \right\|^2 = \left\| \Proj_{W_1}(w_1) \right\|^2+ \left\| \Proj_{W_2}(w_2) \right\|^2,
	\end{aligned}
	\end{equation}
	where we used the fact that $ U_1,U_2 $ are orthonormal bases. Similarly, split the rows of $ G $ into two blocks with the same sizes as the number of rows of $ U_1 $ and $ U_2 $: $G = \mstack{G_1}{G_2}$. Similarly to \eqref{eq:Ort}, 
	\begin{equation}\label{eq:Guassian}
	\left\| \mstack{G_1}{G_2}v\right\|^2 = \left\| G_1 v \right\|^2+ \left\| G_2 v \right\|^2 = \left\| G_1\Proj_{W_1}(w_1) \right\|^2+ \left\| G_2\Proj_{W_2}(w_2) \right\|^2 ,
	\end{equation}
	where we use the fact that the span of the rows of $G_i$ equals to $U_i.$ Now, in order to prove the lemma, we need to connect the last two equations. Observe that $G_1,G_2$ are singular matrices, however when we restrict them to operate on the span of their rows,  the restricted linear operators have the singular values of $ G_1^{\top}$ and $G_2^{\top}$ respectively (Claim~\ref{cla:restrict}).
	Thus, by Claim~\ref{lem:cla-sing} it is enough to to bound the minimal and singular values of $G_1^{\top}$ and $G_2^{\top}$. By 
	Corollary~\ref{cor:concentarion} applied to $ G_1^{\top} $ and $ G_2^{\top} ,$  the following holds for $ t > 0 $ and $i \in \{1,2\}$: 
	\[
	1- (1+t)\sqrt{k_i} \le \sigma_{\min}(G_i^{\top}) \le \sigma_{\max}(G_i^{\top}) \leq 1 + (1+t)\sqrt{k_i},
	\]
	with probability of at least $ 1-2e^{-0.5\min\{k_1,k_2\}d t^2}. $
	Thus by Eqs. (\ref{eq:Ort}) and (\ref{eq:Guassian}) we derive that for all $v \in \mathbb{S}^{d-1}$,
	\[
	\left(1+ (1+t)\sqrt{\max\{k_1,k_2\}}\right)^{-2} \leq \frac{\left\| \mstack{U_1}{U_2} v \right\|^2}{\| Gv\|^2 } \leq \left(1 - (1+t)\sqrt{\max\{k_1,k_2\}}\right)^{-2}.
	\]
	and the the claim follows.
\end{proof} 

\subsubsection{Proof of \lemref{lem:no-joint-sol}}

\begin{note}
	We will sometimes abuse notation as follows: given some subspace $U$, the same notation will be used to denote both the subspace and an arbitrary matrix whose rows form an orthonormal basis for the same subspace.
\end{note}

We begin with a direct corollary of \lemref{lem:GaussianMatrix} and Lemma~\ref{Midsingular}.
%The following is a corollary of \lemref{lem:GaussianMatrix} and \lemref{Midsingular}.
\begin{corollary}{\label{top}}
	Fix some $U_1 \in \G(d/2,d)$ and let $U_2$ be drawn uniformly from $\G(d/2,d)$ . Fix some constant $0 < \eta < 1$, then with probability of at least $ 1-e^{c(\eta)d} $ the top $(1-\eta)d$ singular values of $\mstack{U_1}{U_2}$ are at least $c_1(\eta)$.
\end{corollary}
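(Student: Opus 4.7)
The plan is to compare $\mstack{U_1}{U_2}$ to a $d\times d$ Gaussian matrix via \lemref{lem:GaussianMatrix} (or its generalization \lemref{genLemma}), and then to lower bound the middle singular values of that Gaussian matrix via \lemref{Midsingular}.

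First, by rotational invariance of the Haar measure on $\G(d/2,d)$ together with the fact that the squared singular values of $\mstack{U_1}{U_2}$ are exactly the eigenvalues of $P_{V_1}+P_{V_2}$ (which depend only on the pair of subspaces $V_1,V_2$ and are invariant under joint rotation $V_i \mapsto RV_i$, since $P_{RV} = R P_V R^\top$), I would reduce to the case where both $U_1$ and $U_2$ are drawn independently and uniformly from $\G(d/2,d)$. Concretely, if $R \in O(d)$ is uniformly random, then $(RV_1,RV_2)$ has the same joint distribution as two iid uniform subspaces, while $\mstack{RU_1}{RU_2}$ has the same singular values as $\mstack{U_1}{U_2}$.

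Next, I would invoke \lemref{genLemma} with $k_1 = k_2 = 1/2$ (choosing the free parameter $t$ small enough, e.g.\ $t = 0.1 < \sqrt{2}-1$, so that $(1-(1+t)/\sqrt{2})^{-2}$ is finite). Let $g_1,\ldots,g_d$ be iid $\mathcal{N}(0,I_d/d)$ and let $G$ be the $d\times d$ matrix whose rows are the $g_i$; then orthonormal bases of $\spn\{g_1,\ldots,g_{d/2}\}$ and $\spn\{g_{d/2+1},\ldots,g_d\}$ form two iid uniform subspaces. The lemma yields, with probability $1-e^{-cd}$, the pointwise comparison $c_1\|Gv\|^2 \le \|\mstack{U_1}{U_2}v\|^2 \le C_1\|Gv\|^2$ for every $v \in \Sp^{d-1}$. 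By the Courant--Fischer variational characterization of singular values, this pointwise bound lifts to $\sigma_i(\mstack{U_1}{U_2}) \ge \sqrt{c_1}\,\sigma_i(G)$ for every $i$.

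Finally, I would apply \lemref{Midsingular} to $\sqrt{d}\,G$ (which has iid $\mathcal{N}(0,1)$ entries, and is $d\times d$) with parameter $\tau = 1-\eta$, obtaining $\sigma_{(1-\eta)d}(G) \ge c_2\eta$ with probability $1-e^{-c\eta d}$. Combining the two high-probability events and taking a union bound gives $\sigma_{(1-\eta)d}(\mstack{U_1}{U_2}) \ge \sqrt{c_1}\,c_2\,\eta =: c_1(\eta)$ with probability $1-e^{-c(\eta)d}$, as claimed. The only delicate point is bookkeeping: one must track the $\sqrt{d}$-rescaling between the $\mathcal{N}(0,I_d/d)$ normalization used in \lemref{genLemma} and the $\mathcal{N}(0,1)$ normalization used in \lemref{Midsingular}, and verify that the constants in \lemref{genLemma} remain bounded at the boundary $\max(k_1,k_2)=1/2$. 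The main obstacle is really just this careful choice of constants; after the reduction to two iid uniform subspaces, the result drops out by plugging existing lemmas together.
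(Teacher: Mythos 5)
Your proposal is correct and follows exactly the route the paper intends: the paper labels this a ``direct corollary of \lemref{lem:GaussianMatrix} and Lemma~\ref{Midsingular}'' without spelling out the details, and you have supplied precisely those details --- reduce to two i.i.d.\ uniform subspaces by rotational invariance (noting the singular values of $\mstack{U_1}{U_2}$ depend only on the eigenvalues of $P_{V_1}+P_{V_2}$, which are invariant under joint rotation), invoke \lemref{genLemma} with $k_1=k_2=1/2$ and a small fixed $t$ to compare to a square Gaussian matrix, lift the pointwise norm comparison to all singular values by Courant--Fischer, and finish with \lemref{Midsingular} at $\tau=1-\eta$ plus a union bound. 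The only trivial slip is in the exponent of the \lemref{Midsingular} failure probability (it is $e^{-c\tau d}=e^{-c(1-\eta)d}$ rather than $e^{-c\eta d}$), which does not affect the conclusion since $\eta$ is a fixed constant in $(0,1)$.
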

Recall that by Lemma \ref{lem:comorth} for any $V_1,V_2 \in \G(d/2,d)$ we know that the chordal distance satisfies that $d(V_1,V_2) = d(V_1^{\perp},V_2^{\perp}).$ We continue with another auxiliary lemma:

\begin{lemma}{\label{subspace}}
	Let $V_1^\perp$ and $V_2^\perp$ be any two vector spaces from $\G(d/2,d)$ that their distance is at least  $\sqrt{\delta d/2}$. Let $W^\perp$ be a subspace drawn uniformly from the subspaces of $V_1^\perp$ of dimension $\eta d$, where $\eta(\delta) > 0$ is a sufficiently small constant. Then, with probability at least $1 - e^{-c_3\delta^2 d}$, any $w^\perp \in W^\perp \cap \mathbb{S}^{d-1}$ satisfies that $\|\Proj_{V_2}(w^\perp)\|^2_2 \ge \delta/16$.
	%\yuval{changed $\delta n$ to $\delta n/2$ due to a small computation error later on}
\end{lemma}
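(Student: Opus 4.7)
The plan is to first reduce the statement to a concentration-of-measure bound for a single uniformly random unit vector in $V_1^\perp$, and then lift that bound to every unit vector of the random subspace $W^\perp$ via a standard $\delta_0$-net argument. The chordal distance hypothesis is used only to produce a lower bound on the \emph{second moment} of $\|\Proj_{V_2}(x)\|^2$ for $x$ uniform in $V_1^\perp \cap \mathbb{S}^{d-1}$; all the remaining work concerns turning this into a uniform lower bound over a whole random subspace.

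\textbf{Step 1 (expected squared projection via principal angles).} Using \lemref{lem:comorth} and Definition~\ref{prinangle}, pick orthonormal bases $v_1,\ldots,v_{d/2}$ of $V_1^\perp$ and $u_1,\ldots,u_{d/2}$ of $V_2^\perp$ with $\langle v_i,u_j\rangle = \cos\theta_i \,\delta_{ij}$ and $\sum_i \sin^2\theta_i = \dist(V_1^\perp,V_2^\perp)^2 \ge \delta d/2$. Writing $\Proj_{V_2}(v_i) = v_i - \cos\theta_i u_i = \sin\theta_i u_i'$, a short calculation shows $u_1',\ldots,u_{d/2}'$ are an orthonormal set in $V_2$. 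Hence for $x = \sum_i a_i v_i \in V_1^\perp\cap\mathbb{S}^{d-1}$,
\[
\|\Proj_{V_2}(x)\|^2 = \sum_i a_i^2 \sin^2\theta_i.
\]
Averaging over $x$ uniform on the sphere of $V_1^\perp$ (so $\E[a_i^2]=2/d$) yields $\E\|\Proj_{V_2}(x)\|^2 \ge \delta$.

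\textbf{Step 2 (single-point concentration).} The map $x \mapsto \|\Proj_{V_2}(x)\|$ is $1$-Lipschitz on the unit sphere of $V_1^\perp$, so by \lemref{sphere} its variance is $O(1/d)$, whence its mean $\mu$ satisfies $\mu^2 \ge \delta - O(1/d) \ge \delta/2$ for $d$ large. A second application of \lemref{sphere} gives, for $x$ uniformly random on $V_1^\perp \cap \mathbb{S}^{d-1}$,
\[
\Pr\bigl[\|\Proj_{V_2}(x)\| < \sqrt{\delta}/3\bigr] \le 2e^{-c' d \delta}.
\]

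\textbf{Step 3 (lifting to $W^\perp$ via a net).} Represent $W^\perp$ by a uniformly random orthonormal frame $w_1,\ldots,w_{\eta d}$ inside $V_1^\perp$ and fix a $\delta_0$-net $\mathcal{N}$ of the unit sphere in $\mathbb{R}^{\eta d}$ with $|\mathcal{N}| \le (1+2/\delta_0)^{\eta d}$ (\lemref{lem:nets}). For every fixed $\alpha\in\mathcal{N}$, rotational symmetry makes $\sum_i \alpha_i w_i$ uniformly distributed on $V_1^\perp\cap\mathbb{S}^{d-1}$, so Step~2 applies. Union bounding,
\[
\Pr\bigl[\exists\,\alpha\in\mathcal{N}:\ \|\Proj_{V_2}(\textstyle\sum_i \alpha_i w_i)\| < \sqrt{\delta}/3\bigr] \le 2(1+2/\delta_0)^{\eta d}\, e^{-c' d \delta}.
\]
Taking $\delta_0$ a small absolute constant and $\eta(\delta)$ small enough (a linear dependence $\eta = \Theta(\delta)$ suffices) makes the right-hand side at most $e^{-c_3 \delta^2 d}$.

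\textbf{Step 4 (net to full subspace).} The points $\{\sum_i \alpha_i w_i : \alpha\in\mathcal{N}\}$ form a $\delta_0$-net of $W^\perp\cap\mathbb{S}^{d-1}$. Applying \lemref{lem:sucaprox} to the $1$-Lipschitz function $x\mapsto 1-\|\Proj_{V_2}(x)\|$ transfers the bound from the net to every $w^\perp\in W^\perp\cap\mathbb{S}^{d-1}$, giving $\|\Proj_{V_2}(w^\perp)\| \ge \sqrt{\delta}/3 - \arcsin(\delta_0) \ge \sqrt{\delta}/4$ once $\delta_0$ is small enough. Squaring yields $\|\Proj_{V_2}(w^\perp)\|^2 \ge \delta/16$, as claimed.

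\textbf{Expected main obstacle.} The subtle step is Step~2: the hypothesis only yields a second-moment estimate $\E\|\Proj_{V_2}(x)\|^2 \ge \delta$, and Jensen's inequality goes the wrong way, so upgrading to a first-moment lower bound on $\|\Proj_{V_2}(x)\|$ requires squeezing the variance via sphere concentration. The other non-routine point is the quantitative balancing in Step~3: the net cardinality $(1+2/\delta_0)^{\eta d}$ must be dominated by the per-point failure bound $e^{-c' d \delta}$, which is precisely what forces $\eta$ to be a sufficiently small function of $\delta$ and accounts for the $\delta^2$ in the final exponent.
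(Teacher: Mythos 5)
Your proof is correct and follows the same four-step architecture as the paper's own argument: (i) compute the expected squared projection via the principal-angle decomposition, (ii) upgrade to a high-probability single-point bound via sphere concentration, (iii) union-bound over a net of a random $\eta d$-dimensional subspace, and (iv) extend from the net to the whole subspace with \lemref{lem:sucaprox}. The one genuinely different move is Step~2. The paper works with the complementary quantity $\|U_2 y\| = \|\Proj_{V_2^\perp}(y)\|$, bounds $\E[\|U_2 y\|^2] \le 1-\delta$, and uses Jensen (which goes the \emph{right} way for an upper bound) to get $\E[\|U_2 y\|] \le \sqrt{1-\delta}$; concentration at threshold $1-\delta/4$ then costs $e^{-\Theta(\delta^2 d)}$ per net point. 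You instead stay with $\|\Proj_{V_2}(x)\|$ directly and, since Jensen goes the wrong way for a lower bound, bound its variance by $O(1/d)$ via the Lipschitz concentration lemma, so $\E[\|\Proj_{V_2}(x)\|]^2 \ge \E[\|\Proj_{V_2}(x)\|^2] - O(1/d) \ge \delta/2$. Now the mean is $\Theta(\sqrt{\delta})$ and the gap to the threshold $\sqrt{\delta}/3$ is also $\Theta(\sqrt{\delta})$, so the per-point failure probability becomes $e^{-\Theta(\delta d)}$ --- quantitatively sharper than the paper's, and it lets $\eta$ be on the order of $\delta/\log(1/\delta)$ rather than the paper's $\delta^2/\log(1/\delta)$.

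One small inconsistency worth fixing: in Step~3 you declare $\delta_0$ ``a small absolute constant,'' but Step~4 needs $\arcsin(\delta_0) \le \sqrt{\delta}/3 - \sqrt{\delta}/4 = \sqrt{\delta}/12$, so $\delta_0 = O(\sqrt{\delta})$ depends on $\delta$. Consequently the net size is $\exp\left(O(\eta d \log(1/\delta))\right)$ and the correct constraint is $\eta = O(\delta/\log(1/\delta))$ rather than $\eta = \Theta(\delta)$. Since $\delta$ is a fixed universal constant and the lemma only asks for $\eta(\delta)$ sufficiently small, this does not break anything, but the stated dependence should be adjusted.
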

\begin{proof}
	Let $ U_2 $ be a matrix whose rows form an orthonormal base of $ V_2^{\perp}.$ 
	By Pythagorean law it is enough to  show that
	\[
	\|U_2w^{\perp}\|^2_2 \leq 1- \delta/16.
	\]
	
	Let $v_i^{\perp,1}, \dots, v_i^{\perp,d/2}$ be the orthonormal basis of $V_i^\perp$ for $i = 1,2$ with respect to the decomposition according to the principal angles, namely, $v_i^{\perp,j}$ corresponds to $\theta_j$ for $ i=1,2 $, see Definition \ref{prinangle}.
	Let $ y$  be a random vector chosen uniformly from $ V^{\perp}_1 \cap \mathbb{S}^{d-1}  $. Then
	\begin{align*}
		\| U_2 y \|^2
		&= \sum_{i=1}^{0.5d} \langle v_2^{\perp,i}, y \rangle^2
		=  \sum_{i=1}^{0.5d} (\langle v_2^{\perp,i},\sum_{j=1}^{0.d}\langle v_1^{\perp,j},y \rangle v_1^{\perp,j}\rangle)^2
		\\&
		=\sum_{i=1}^{0.5d} \langle y,v_1^{\perp,i} \rangle^2\langle v_2^{\perp,i},v_1^{\perp,i} \rangle^2,
	\end{align*}
	where we used the fact that $ \langle v_1^{\perp,i}v_2^{\perp,i} \rangle = 0$ for  $i\ne j.$
	
	Recall that $ y \sim  \mathrm{Unif}(V^{\perp}_1 \cap \mathbb{S}^{d-1}) \sim \mathrm{Unif}(\mathbb{S}^{d/2}) $, which implies that $ \E_y[\langle y, v \rangle^2] $ is identical for all  $ v \in V^{\perp}_1 \cap \mathbb{S}^{d-1}. $  
	Moreover,
	\[
	\E[\langle y,v_1^{\perp,i} \rangle^2] = \int_{\mathbb{S}^{d/2}}\langle y,e_1 \rangle^2 d\sigma(y) = \frac{1}{d/2}\int_{\mathbb{S}^{d/2}}\sum_{i=1}^{d/2}\langle y,e_i \rangle^2 d\sigma(y) = \frac{1}{d/2}\int_{\mathbb{S}^{d/2}}\|y\|^2 d\sigma(y) = \frac{1}{d/2}.
	\]
	Hence,
	\begin{multline*}
		\mathbb{E}[\| U_2 y \|^2] = \E[\langle y,v_1^{\perp,1} \rangle^2]\sum_{i=1}^{d/2} \langle v_2^{\perp,i},v_1^{\perp,i} \rangle^2 \\
		= \frac{1}{d/2}\sum_{i=1}^{d/2} \cos(\theta_i)^2=\frac{1}{d/2}\sum_{i=1}^{d/2}\left(1-\sin^2(\theta_i)\right) = 1 - (2/d)d(V_1^{\perp},V_2^{\perp})^2 =  1-\delta,
	\end{multline*}
	where we used the fact that the distance between the subspaces is $ \sqrt{\delta d / 2} $. Using the fact that $ \sqrt{1-\delta} \leq 1 - \frac{\delta}{2} $, we derive
	\begin{equation}\label{eq:upexp}
	\E[\| U_2 y \|]  
	\le \sqrt{\E [\| U_2 y \|^2]}
	\le \sqrt{1-\delta} \leq 1- \delta / 2.	 
	\end{equation}
	Now, since the rows of $ U_2 $ are orthonormal, its largest singular value is at most $1$, therefore, $  \|U_2x\| $ is a $1$- Lipschitz function. Hence, by Lemma \ref{sphere}, for any $ \epsilon > 0 $
	\[
	\Pr_{y \in \mathbb{S}^{d-1} \cap V_1^{\perp}}(\| U_2 y \| \leq \E[\| U_2 y \| ] + \epsilon) \geq 1 - e^{-c_1d\epsilon^2 }.
	\]
	Let $$A_{\delta} := \{ y \in \mathbb{S}^{d-1} \cap V_1^{\perp}: \| U_2 y \|  \leq 1-\delta/16 \}.$$ 
	Using \eqref{eq:upexp} and taking $ \epsilon = \min\{c_1\delta,\delta/4\} $, we derive that
	\begin{equation}\label{Eq:Adelta}
	\Pr_{y \in \mathbb{S}^{d-1} \cap V_1^{\perp}}(\| U_2 y \|  \leq 1-\delta/4 ) \geq 1- e^{-c_2d\delta^2}
	\end{equation}
	i.e. the measure of $A_\delta$ is at least $1- e^{-c_2d\delta^2}.$ Informally speaking, if we show that ``most'' of the subspaces of $V_1^{\perp} \cap \mathbb{S}^{d-1}  $ of dimension $\eta d$ lie in the set $A_{\delta},$ then we are done.  
	%	We want to show that with probability of at least $ 1-e^{c_1\delta^2n} $ over the subspaces of fixed dimension $ \eta(\delta),$ that the norm cannot be too big \yuval{again, not clear what you are trying to say. you can reference to a specific equation}. 
	For this purpose, we choose a $ \delta/16 $-net $\mathcal{N}$ of  $W_0 \cap \Sp ^{d-1} $, where $W_0$ is a fixed subspace in $V_1^{\perp}$ of dimension $\eta d$ (see Definition \ref{Def:Net}). By Lemma \ref{lem:nets} we can assume that its size is bounded by 
	\[
	|\mathcal{N}| \leq \left(\frac{48}{\delta}\right)^{\eta d} \leq e^{-\ln(\delta)\eta d + \ln(32)d} \leq e^{C\ln(\delta^{-1})\eta d }.
	\]
	Now, let $W^{\perp}$ be defined as in this lemma (a uniform random subspace of $V_1^{\perp}$). It can be written as $U W_0$ for a random uniform rotation $U$ on  $V_1^{\perp}$. Note that $ \mathcal{N}_{W^{\perp}} :=  U\mathcal{N}$ is a $ \delta/16 $-net of $W^{\perp} \cap \Sp ^{d-1}$. Notice that this net is a random set of points, and moreover, each point is distributed uniformly on $\mathbb{S}^{d-1} \cap V_1^{\perp}$.
	%	It implies that for a specific direction, we have the desired result, now we need to extend to all the directions in $ W^{\perp} .$ 
	
	Now we set $ \eta = c_2\delta^2\ln(\delta^{-1})^{-1}$ for some small enough $c_2.$ Now, in order to prove this Lemma we first estimate the probability that all the points in $\mathcal{N}_{W^{\perp}}$ lie in $A_{\delta}.$
	%The main observation is that $\mathcal{N} $ independent of $ A_\delta.$ \yuval{what do you mean by independent? Isn't $A_\delta$ a fixed set?}
	Using the union bound and \eqref{Eq:Adelta}, we derive that
	\[
	\Pr_{{W^{\perp}}}[\forall x \in \mathcal{N}_{W^{\perp}}: x \in A_{\delta}] \geq 1- |\mathcal{N}_{W^{\perp}}|\Pr_{y \in \mathbb{S}^{d-1} \cap V_1^{\perp}}\left[y \notin A_\delta\right] \geq 1- |\mathcal{N}_{W^{\perp}}|e^{-c_2\delta^2 d} \geq 1 - e^{-c_3\delta^2d } .
	\]
	where we used the fact that each point in the net distributed uniformly on $\mathbb{S}^{d-1} \cap V_1^{\perp}.$
	Finally, by Lemma \ref{lem:sucaprox} with $\epsilon = \delta /16$ we derive that
	\[
	\Pr_{W^{\perp}}\left[\forall {y\ \in W^{\perp}} \colon \|U_2y\| \le 1-\delta / 16\right] \geq 1 - e^{-c_2n\delta^2 }.
	\]
	%		By applying the union bound for all the directions of the net, and using the fact 
	%	\[ \min_{x\in W^{\perp}, \|x\| =1}\| U_2 x_2 \|^2 \leq \min_{x \in \mathcal{N}}\| U_2 x_2 \|^2+ 4\epsilon .\]
	%	we derive that
	%	\[
	%	\Pr(\big|\min_{x\in W^{\perp}}\| U_2 x_2 \|^2 \big| \leq 1-\delta + 4\epsilon) \leq |\mathcal{N}|e^{-c_1n} \leq e^{-c_0\ln(\epsilon)\eta n}e^{-c_1n} \leq e^{-c_1n}
	%	\]
	Thus the claim follows for $ c(\delta) = \delta/16 $ and for $ \eta =  c_2\delta^2\ln(\delta^{-1})^{-1}.$
\end{proof}

Finally, we conclude the proof:

\begin{proof}[Proof of Lemma \ref{lem:no-joint-sol}]
	Let $ u \in \mathbb{S}^{d-1}$ and let $ \eta(\delta) > 0$ be a fixed constant such that Lemma \ref{subspace} is valid. Denote by $ W$ the subspaces of the top $(1- \eta(\delta))d$ singular vectors of $ \mstack{V_1}{U} $. Also from  Corollary \ref{top}, note that the top $(1-\eta(\delta))d$ singular values of $ \mstack{V_1}{U} $ are greater than some constant $c_5$.  Write $u$ as $u = w + w^\perp$, where $w \in W$ and $w^\perp \in W^\perp$. If $\|w\|^2 > c,$ for some constant that will be defined later, then 
	\begin{equation}\label{max1}
	\left\|\mstack{V_1}{U}u\right\| \geq c_5 \cdot \sqrt{c}, 
	\end{equation}
	and we are done.
	Otherwise, $ \|w^{\perp}\|^2 \geq 1-c.$ Then, if $\|\Proj_{V_1}(w^{\perp})\|^2 \geq (1-c)\cdot c $, the following holds:
	\begin{equation}\label{max2}
	\begin{aligned}
	\left\|\mstack{V_1}{U}u\right\|^2 &= \left\|\mstack{V_1}{U}w\right\|^2+ \left\|\mstack{V_1}{U}w^{\perp}\right\|^2 \geq \left\|\mstack{V_1}{U}w^{\perp}\right\|^2 \geq  \left\|\Proj_{V_1}(w^\perp)\right\|^2 \\&\geq (1-c)\cdot c,
	\end{aligned}
	\end{equation}
	and we are done.
	The last option is that $\|w^{\perp}\|^2 \geq 1-c$ and  $\|\Proj_{V_1^{\perp}}(w^{\perp})\|^2 \geq   (1-c)^2 $ (or equivalently $\|\Proj_{V_1}(w^{\perp})\|^2 \leq (1-c)\cdot c $). Now, we project the subspace $W^{\perp}$  on $V_1^{\perp}$, and denote the new subspace as $E$. Since the subspace  $U$ was chosen uniformly, then clearly $E$ is a uniform subspace of $V_1^{\perp}$ ($V_1$ is a fixed subspace). 
	From \lemref{subspace}, with probability $1 - e^{-c(\delta)d}$, any $e \in E \cap \mathbb{S}^{d-1}$ satisfies: $\|\Proj_{V_2} (e)\| \geq c(\delta)$, and assume for the rest of the proof that this holds. Since $\|\Proj_{V_1^{\perp}}(w^{\perp})\|^2 \geq   (1-c)^2,$ we also know that $\|\Proj_{V_2} (w^{\perp})\|^2 \geq (1-c)^2c(\delta)$. 
	Finally, 
	\begin{equation}{\label{max3}}
	\begin{aligned}
	\left\|\mstack{V_2}{U}u\right\| &= \left\|\mstack{V_2}{U}(w + w^{\perp})\right\| \geq \left\|\mstack{V_2}{U}w^{\perp}\right\| - \sqrt{c} \\&\geq \left\|\Proj_{V_2}(w^{\perp})\right\| - \sqrt{c}=  (1-c)^2c(\delta) - \sqrt{c} ,
	\end{aligned}
	\end{equation}
	By Eqs. (\ref{max1}), (\ref{max2}), (\ref{max3}) choose $c = \min\{0.01,c(\delta)^4\}$ and the claim follows.
\end{proof}

% Acknowledgments---Will not appear in anonymized version

%\section{My Proof of Theorem 1}
%
%This is a boring technical proof.
%
%\section{My Proof of Theorem 2}
%
%This is a complete version of a proof sketched in the main text.

\end{document}